 \renewcommand*{\backrefalt}[4]{%
    \ifcase #1%
     \or (Cited on page~#2.)%
     \else (Cited on pages~#2.)%
    \fi%
    }
    \let\Cref\crtCref
    \let\cref\crtcref
\crefname{assumption}{assumption}{assumptions}
\crefname{desideratum}{desideratum}{desiderata}
\crefname{principle}{principle}{Principle}
\newif\ifsubmissionready 
\newcommand{\pdata}{p_{\mathrm{data}}}
\newcommand{\mudata}{\mu_{\mathrm{data}}}
\newcommand{\pprior}{p_{\mathrm{prior}}}
\DeclareMathOperator{\Pm}{P_{\cM}}
\title{When Scores Learn Geometry: Rate \\ Separations under the Manifold Hypothesis}
\author{
Xiang Li\thanks{Department of Computer Science, ETH Zurich, Switzerland.
\texttt{xiang.li@inf.ethz.ch},
\texttt{zebang.shen@inf.ethz.ch},
\texttt{yaping.hsieh@inf.ethz.ch},
\texttt{niao.he@inf.ethz.ch}.}
\and
Zebang Shen\footnotemark[2]
\and
Ya-Ping Hsieh\footnotemark[2]
\and
Niao He\footnotemark[2]
}
\date{}
\begin{document}

\maketitle

\begin{abstract}
Score-based methods, such as diffusion models and Bayesian inverse problems, are often interpreted as learning the \textbf{data distribution} in the low-noise limit ($\sigma \to 0$).
In this work, we propose an alternative perspective: their success
arises from implicitly learning the \textbf{data manifold} rather than the full distribution. Our claim is based on a novel analysis of scores in the small-$\sigma$ regime that reveals a sharp \textbf{separation of scales}: \emph{information about the data manifold is $\Theta(\sigma^{-2})$ stronger than information about the distribution.} We argue that this insight suggests a paradigm shift from the less practical goal of distributional learning to the more attainable task of \textbf{geometric learning}, which provably tolerates $O(\sigma^{-2})$ larger errors in score approximation. We illustrate this perspective through three consequences:
\begin{enumerate*}[label=\roman*)]
\item in diffusion models, concentration on data support can be achieved with a score error of $o(\sigma^{-2})$, whereas recovering the specific data distribution requires a much stricter $o(1)$ error;
\item more surprisingly, learning the \textbf{uniform distribution} on the manifold—an especially structured and useful object—is also $O(\sigma^{-2})$ easier; and
\item in Bayesian inverse problems, the \textbf{maximum entropy prior} is $O(\sigma^{-2})$ more robust to score errors than generic priors.
\end{enumerate*}
Finally, we validate our theoretical findings with preliminary experiments on large-scale models, including Stable Diffusion.
  \end{abstract}

\section{Introduction}

\emph{Score learning} has emerged as a particularly powerful paradigm for
modeling complex probabilistic distributions, driving breakthroughs in
generative modeling, Bayesian inverse problems, and
sampling~\citep{laumont2022bayesian,
saremi2023universal,ho2020denoising,song2019generative,songscore}. Let
$\mu_{\mathrm{data}}$ be a data measure over $\mathbb R^d$ and
define a Gaussian-smoothed measure as
\begin{equation} \label{eqn_p_sigma_definition}
    \mu_\sigma := \operatorname{law}\left(X + \sigma Z\right)
    \; \text{or} \;
    \mu_\sigma := \operatorname{law}\left(\sqrt{1-\sigma^2}\,X + \sigma Z\right), \; \text{where} \;
     X\sim\mu_{\mathrm{data}}, Z\sim\cN(0,I).
\end{equation}
Let $p_\sigma$ be its density function w.r.t. the Lebesgue measure over $\mathbb
R^d$. A key step in the score learning framework is to approximate the score
function $\nabla \log p_\sigma$ and to sample from the target distribution
$\mu_{\sigma}$, possibly across a spectrum of different $\sigma$
values~\citep{vincent2011connection,hyvarinen2005estimation}.

A central challenge in this framework is understanding the \emph{low-temperature limit}, i.e., learning the score of $\mu_\sigma$ as $\sigma \to 0$, which encodes the most detailed information about the data distribution. Empirically, this regime is also the most valuable: low-temperature scores underpin many probabilistic learning frameworks~\citep{laumont2022bayesian,saremi2023universal,janati2024divide,kadkhodaie2020solving}, including the influential diffusion model framework~\citep{ho2020denoising,song2020denoising,karras2022elucidating}, whose noise schedules are specifically designed to emphasize low temperatures and often require substantial post-training engineering to stabilize the learned scores. %

Despite its importance, accurately estimating the score function in the low-$\sigma$ regime remains notoriously difficult~\citep{songscore,karras2022elucidating,arts2023two,raja2025action,stanczuk2024diffusion}. Motivated by this challenge, this paper establishes a new qualitative phenomenon under the widely adopted \emph{manifold hypothesis}, which posits that the data distribution $\mu_{\mathrm{data}}$ is supported on a low-dimensional manifold $\cM$ embedded in a high-dimensional ambient space.

\begin{figure}[t]
\centering
\begin{subfigure}[t]{0.46\textwidth}
  \centering
  \includegraphics[width=\textwidth]{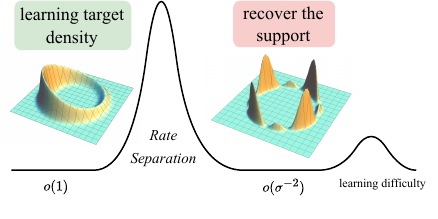}
  \caption{Existing Paradigm}
  \label{fig:existing_paradigm}
\end{subfigure}
\hspace{1em}
\begin{subfigure}[t]{0.48\textwidth}
  \centering
  \includegraphics[width=\textwidth]{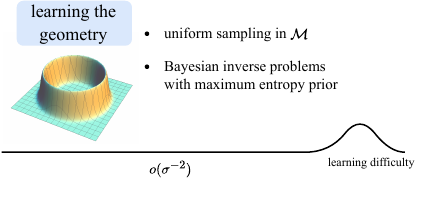}
  \caption{New Paradigm}
  \label{fig:new_paradigm}
\end{subfigure}
\caption{
Toy examples illustrating recovered distributions under different regimes, with the manifold represented as a one-dimensional circle embedded in $\mathbb{R}^2$.
}
\label{fig:overview}
\end{figure}

Our key finding, formalized in \Cref{thm_main_informal}, is that in the
small-$\sigma$ regime of score learning there is a \textbf{sharp separation of
scales}: \emph{geometric information about the data manifold appears at order
$\Theta(\sigma^{-2})$, whereas density information of $\mu_{\mathrm{data}}$
emerges only at order $\Theta(1)$}. As shown in \Cref{sec:Gaussian-smoothing},
this implies that distribution learning of $\mu_\sigma$ (e.g., in diffusion
models) \textbf{necessarily} first recovers the support of the data distribution before
any information about the density can be learned.
This perspective naturally separates score learning into two fundamental tasks:
\emph{geometric learning}, which targets the manifold geometry, and \emph{density
learning}, which targets the specific data density on that manifold, with the
latter being order of magnitude more difficult. It also suggests that the practical
success of score-based models (e.g., diffusion models) stems from
constraining generated samples to the manifold, thereby producing realistic data
even without fully recovering the underlying distribution.
According to our analysis, to achieve this, a score error even as large as $o(\sigma^{-2})$ is sufficient.

However, our analysis reveals a critical limitation: unless the score is
learned to a stringent accuracy that is beyond $O(1)$, attempts to recover the data distribution
may yield \emph{arbitrary} densities supported on the manifold. This amounts to
only a partial recovery of geometry and can compromise the reliability of
downstream tasks and analyses. Such an observation motivates us to pursue
\emph{full geometric learning}---that is, learning to sample
\emph{uniformly} with respect to the manifold’s intrinsic (Riemannian) volume
measure, as it is well-known that uniform samples can best support tasks that depend solely on
the underlying geometry (e.g., Laplace--Beltrami and heat-kernel approximation,
geodesic and diffusion distances)~\citep{coifman2006diffusion,
belkin2008towards, jost2005riemannian}. In addition, they also facilitate principled
manifold exploration, yielding diverse samples while mitigating potential biases present
in $\mudata$~\citep{de2025provable}.

In this light, a central contribution of this work is to show that a simple,
one-line modification to standard algorithms can \emph{provably} generate the
\emph{uniform distribution} on the manifold---requiring only
$o(\sigma^{-2})$ score accuracy, in stark contrast to the $o(1)$ accuracy
needed for exact distributional recovery. In summary, we advocate a paradigm
shift: from the demanding goal of \emph{distributional learning} toward the
more practical and robust objective of \emph{geometric learning}.

We substantiate the aforementioned rate separation phenomenon by three key results (see also \Cref{fig:overview}):
\begin{itemize}
    \item \Cref{thm:recover_pdata} shows that, in existing frameworks, the score accuracy required to force concentration on the data manifold is $O(\sigma^{-2})$ weaker than that needed to exactly recover $\mu_{\mathrm{data}}$. Nevertheless, the resulting distribution can still be \emph{arbitrary}.
    \item In contrast, \Crefrange{thm:recover_uniform_gradient}{thm:recover_uniform} establish a new paradigm centered on extracting precise \emph{geometric} information of the data manifold by producing the \emph{uniform distribution}. Notably, we show that a simple one-line modification of a widely used sampling algorithm suffices to obtain samples from the uniform distribution under the relaxed score error condition $o(\sigma^{-2})$, substantially weaker than the $o(1)$ required for full recovery of $\mu_{\mathrm{data}}$.
    \item 
        In the context of Bayesian inverse problems~\citep{venkatakrishnan2013plug}, \Cref{thm:with_guidence} establishes a rate separation in posterior sampling depending on the choice of prior. When the prior is uniform, posterior sampling requires only $o(\sigma^{-2})$ score accuracy. By contrast, when the prior is taken to be the commonly used data distribution $\mu_{\mathrm{data}}$, substantially stronger accuracy guarantees are needed to ensure provable success in existing works \citep{laumont2022bayesian,pesme2025map}. 
    
\end{itemize}

We validate these theoretical results with preliminary experiments on both synthetic and real-world data, including an application of our algorithm to a large-scale image generation model (Stable Diffusion 1.5~\citep{Rombach_2022_CVPR}).

\subsection{Related Work}
\label{app:related}

\paragraph{Diffusion models for distribution learning.}
Prior theory shows that diffusion/score-based samplers converge to the target
law when the learned score is accurate, with error bounds that scale directly
with the score mismatch~\citep{de2022convergence,chen2023sampling,lee2023convergence};
related works study other factors such as dimension dependence~\citep{azangulov2024convergence,tang2024adaptivity}.
However, these results do not separate geometry from density
in the score error but instead consider them together, therefore they do not imply any
scale separation.

\paragraph{Diffusion models detect data manifold.}
There is a growing body of work probing whether diffusion models learn the full
data distribution or primarily the underlying low-dimensional manifold. A number
of studies suggest that these models often capture the data support while
missing fine-grained distributional structure. However, these results are obtained
under restricted settings: \citet{stanczuk2024diffusion} focuses on estimating
the intrinsic dimension of the data manifold; \citet{ventura2024manifolds}
analyzes only linear manifolds (linear subspaces); and
\citet{pavlova2025diffusion} provides primarily empirical evidence.
\citet{pidstrigach2022score} establishes sufficient regularity conditions under
which high-accuracy scores concentrate mass near the manifold, but does not
address how approximation errors scale with $\sigma$ and therefore does not
reveal a separation of scales. 
By contrast, our analysis quantifies how inaccuracies in the learned score
propagate differently to geometry versus distribution learning,
exhibiting distinct error rates that lead to a sharp scale separation in the
small-$\sigma$ regime. Furthermore, prior work does not address full
geometric recovery via uniform sampling.

\paragraph{Asymptotic behavior of the score.}
It is established that under the manifold hypothesis, the score function develops a singularity in the small-noise regime, becoming orthogonal to the data manifold.  Recent works characterize this behavior mathematically, showing that the score effectively acts as a geometric projection operator onto the manifold~\citep{lu2023mathematical,lyu2025resolving,liu2025improving}. This aligns with the leading-order term in our expansion (\Cref{eq:score_expansion}), which governs geometric concentration. However, these analyses generally subsume the distributional information into a generic bounded remainder term. Crucially, they do not explicitly isolate the higher-order terms involving $\pdata$ and thus do not characterize the separation between geometry and density. Our analysis reveals that these missing terms are not merely residuals but are essential for establishing the rate separation between recovering the manifold support and learning the underlying density.

\paragraph{Uniform sampling on manifolds.}
Classical approaches achieve uniform-on-manifold sampling via graph-based
normalizations that cancel the sampling density so that the limiting operator is
the Laplace–Beltrami operator~\citep{coifman2006diffusion,hein2007graph}.
While foundational, these methods are designed to approximate geometric
operators from neighborhood graphs and do not readily scale to
high-dimensional, large-scale generative modeling.
Recently, \citet{de2025provable} proposed fine-tuning diffusion models to
produce uniform samples. In contrast, our approach operates entirely at
inference time, achieving uniform sampling without the cost of
fine-tuning.

\section{Preliminaries and Notation}
\label{sec:prelim}

In this work, we adopt the manifold
assumption~\citep{song2019generative,de2022convergence,loaizadeep}
as follows:
\begin{assumption}[The Manifold Hypothesis] \label{assumption:manifold}
  We assume that the data distribution $\mudata$ is supported on a compact,
  boundaryless $C^4$ embedded submanifold $\cM \subset \bR^d$, with $\dim(\cM) = n$.
\end{assumption}

\textbf{Local coordinates and manifold geometry.}
Under the manifold hypothesis, the $n$-dimensional manifold $\mathcal{M}$ can be
described locally using coordinates from a flat, Euclidean space. This is done
via a set of smooth mappings, or charts, $\Phi: U \to \mathcal{M}$,
where each chart maps an open set of parameters $U \subset \mathbb{R}^n$ to a
patch on the manifold. For notational simplicity, we will work with a single
chart, where $u \in U$ represents the local coordinates of a point
$\Phi(u)$ on $\mathcal{M}$.
The manifold's intrinsic, and generally non-Euclidean, geometry is captured by
the Riemannian metric tensor, $g(u)$. This tensor
provides the means to measure lengths and angles on the curved surface.
The metric gives rise to the Riemannian volume measure, $d\mathcal{M}(x)$, which is the
natural way to integrate a function $f: \mathcal{M} \to \mathbb{R}$ over the
manifold. In local coordinates, this integral is expressed as
$
\int_{\mathcal{M}} f(x)\, d\mathcal{M}(x) = \int_{U} f(\Phi(u)) \sqrt{\det(g(u))} \, du
$, w.r.t. the Lebesgue measure on $U$.
Here, the term $\sqrt{\det(g(u))}$ is the volume correction factor.
While we use
a single chart for clarity, integration over the entire compact manifold is
handled by stitching together multiple charts via a partition of unity.
The set of points in $\mathbb{R}^d$ that are
sufficiently close to the manifold forms the tubular neighborhood:
$ T_{\mathcal{M}}(\epsilon) \coloneqq \{x \in \mathbb{R}^d : \operatorname{dist}(x, \mathcal{M}) < \epsilon\} $.
For any point $x$ within this neighborhood, there exists a unique closest point 
on the manifold, given by the $P_{\mathcal{M}}(x): T_{\mathcal{M}}(\epsilon) \to \mathcal{M}$.
This projection allows us to define the squared distance function to the manifold, a quantity of central importance to our analysis:
\begin{equation}
    d_\mathcal{M}(x) \coloneqq \frac{1}{2}\mathrm{dist}^2(x, \mathcal{M}) 
    = \min_{\bar x \in \mathcal{M}} \frac{1}{2}\|x - \bar x\|^2.
\end{equation}
Further details and notations regarding the manifold hypothesis are provided in \Cref{sec:apx_prelim}.

\subsection{The Gaussian Smoothed Measure and Connection to Diffusion Models}
\label{subsec:gaussian_smoothed_measure_diffusion_model}

With \Cref{assumption:manifold}, we define the corresponding density $\pdata$
of $\mudata$ with respect to the Lebesgue measure on $U$:
$\pdata(u) \coloneqq \frac{d \left(\Phi^* \mudata\right)}{du} (u)$,
where $\Phi^* \mudata (S) \coloneqq \mudata (\Phi(S))$
for $S \subseteq U$,
and assume the following regularity assumption:
\begin{assumption}[Regularity and Coverage of $\pdata$] \label{assumption:pdata}
    The probability density $\pdata: U \to \bR$ defined
      w.r.t. the Lebesgue measure on $U$ is $C^1(U)$ and strictly positive.
\end{assumption}
Recall the two Gaussian–smoothed measures $\mu_\sigma$ introduced in \Cref{eqn_p_sigma_definition}.
We follow the naming convention of \citet{songscore} and denote by
$\mu_\sigma^{\mathrm{VE}}$ the variance–exploding~(VE) smoothing and by
$\mu_\sigma^{\mathrm{VP}}$ the variance–preserving~(VP) smoothing.
Their densities w.r.t. the Lebesgue measure on $\mathbb{R}^d$ are
\begin{equation}\label{eq:p_sigma}
\begin{aligned}
p_{\sigma}(x)
\coloneqq \int_{\cM}
\frac{1}{(2\pi\sigma^{2})^{d/2}}
\exp\!\left(-\frac{\|x- \gamma(\sigma) \Phi(u)\|^{2}}{2\sigma^{2}}\right)\,
\pdata(u)\,\mathrm{d}u, 
\end{aligned}
\end{equation}
where the densities are denoted
$p_\sigma^{\mathrm{VE}}$ for VE with $\gamma(\sigma)=1$ and
$p_\sigma^{\mathrm{VP}}$ for VP with $\gamma(\sigma)=\sqrt{1-\sigma^2}$.
We take $\pdata$ to be the true population density rather than a finite-sample empirical approximation.

These smoothed distributions correspond to the marginals of the forward
noising processes used in diffusion and score-based generative modeling.
In SMLD or VE-SDE~\citep{songscore}, Gaussian noise with variance
$\sigma^2(t): \mathbb{R}_+ \!\to \mathbb{R}_+$ is added to the data at time $t$,
a model is trained to progressively denoise, and in the reverse process the
objective is to sample from $p_{\sigma(t)}^{\mathrm{VE}}$, recovering $\pdata$
as $t \to 0$ (equivalently, $\sigma(t) \to 0$). Similarly, DDPM or
VP-SDE~\citep{ho2020denoising,songscore} corresponds to the VP density
$p_{\sigma(t)}^{\mathrm{VP}}$, again with the goal of recovering $\pdata$ in the
limit $t \to 0$.  
Beyond the reverse process, one may also directly use the learned score to run a
Langevin sampler targeting $p_\sigma^{\mathrm{VE}}$~\citep{song2019generative} or
$p_\sigma^{\mathrm{VP}}$, or combine Langevin sampling with the reverse process,
as in the Predictor--Corrector algorithm~\citep{songscore}.
Since our results apply to both VE and VP settings, we adopt the unified notation $p_\sigma$
whenever no ambiguity arises.

\subsection{Bayesian Inverse Problems}
Another important algorithmic implication of our results concerns Plug-and-Play
(PnP) methods for Bayesian inverse problems \citep{venkatakrishnan2013plug}. 
Let
$x\in\mathbb R^d$ be the latent signal and $y\in\mathcal Y\subseteq\mathbb R^m$
the observation $y \;=\; A(x) + \xi$, where $A:\mathbb R^d\to\mathbb R^m$ is the measurement map and $\xi\in\mathbb R^m$ is noise. Under standard assumptions on $A$ and $\xi$ (e.g., $A$ linear, $\xi\sim\mathcal N(0,s^2 I)$), the likelihood admits a density $p(y\mid x)\propto \exp\!\big(-v(x;y)\big)$ (for the Gaussian case, $v(x;y)=\tfrac{1}{2s^2}\|A(x)-y\|^2$). In the Bayesian framework we endow $x$ with a prior $\pprior$. 
Inference is cast as sampling from the posterior
$p(x\mid y)\;=\; p(y\mid x)\, \pprior(x) / \int p(y\mid \bar x)\, \pprior(\bar x) d\bar x$.

\paragraph{Plug-and-Play (PnP).}
PnP methods address the case where the prior is (i) known up to a normalizing constant, e.g. a Gibbs measure or (ii) only accessible via samples (common in ML). 
A unifying sampling paradigm is posterior Langevin with a (possibly learned) prior score $\hat s \simeq \nabla\log \pprior$,
\begin{equation} \label{eqn_PnP}
    \mathrm dX_t \;=\; -\nabla_x v(X_t; y)\,\mathrm dt \;+\; \hat s(X_t)\,\mathrm dt \;+\; \sqrt{2}\,\mathrm dW_t.
\end{equation}
In case (ii), $\hat s$ is a score estimator obtained, e.g., by score matching on prior samples. 
A common choice of $\pprior$ would be the density $p_\sigma$ defined in \cref{eq:p_sigma} with small $\sigma$. In this context, to ensure update (\ref{eqn_PnP}) yields samples matching the target posterior distribution, existing works require the learned score $\hat s$ to be at least $o(1)$ accurate \citep{laumont2022bayesian}, or even exact \citep{pesme2025map}.

\subsection{Stationary Distribution for Non-reversible Dynamics}
\label{subsec:pre_wkb}

In score learning, one typically learns a score function $s(x, \epsilon)$ for a target
density and then runs Langevin dynamics (equivalently, the corrector step in
the Predictor--Corrector algorithm for diffusion models~\citep{songscore}) until near stationarity to sample from that density:
\begin{align*}
  dX_t \;=\; s(X_t, \epsilon)\,dt \;+\; \sqrt{2}\, dW_t .
\end{align*}
If $s(x, \epsilon) = -\nabla f_{\epsilon}(x)$, the stationary distribution is
proportional to $\exp(-f_{\epsilon}(x))$. In practice, however, the score is often
produced by a parameterized model and need not be a gradient field (this is also
the case for our proposed algorithms). The resulting Langevin dynamics is then
generally \emph{non-reversible}, and its stationary distribution need not admit
a closed form—an open problem studied in, e.g.,
\citep{graham1984weak,maes2009nonequilibrium,rey2015irreversible}.

Several works have sought to characterize the stationary distribution of non-reversible SDEs. Notably,
\citet{matkowsky1977exit,maier1997limiting,graham1984weak,bouchet2016generalisation}
employ the WKB
ansatz~\citep{wentzel1926verallgemeinerung,kramers1926wellenmechanik,brillouin1926mecanique},
which is commonly used in matched asymptotic expansions~\citep{holmes2012introduction}.
This approach posits that the stationary density takes the form
\begin{align} \label{eq:wkb_expansion}
  \exp\left(-\tfrac{V(x)}{\epsilon}\right)\, c_{\epsilon}(x),
  \quad \text{with} \quad c_{\epsilon}(x) \;=\; \sum_{i=0}^{k} c_i(x)\,\epsilon^{i},
\end{align}
for some $k\in\mathbb{N}$. The functions $V$ and $\{c_i\}$ are then identified
by inserting~\eqref{eq:wkb_expansion} into the stationary Fokker--Planck
equation and balancing terms order by order in $\epsilon$. Importantly, prior analyses typically focus on low-dimensional special examples or on drifts with a \emph{single} stable point. The difficulty of removing such restrictions turn out to be central to our analysis; see \Cref{sec:recover_uniform} for details.

\section{Central Insight: Gaussian Smoothing Recovers Geometry Before Distribution}
\label{sec:Gaussian-smoothing}

This section presents the central insight of the paper: While the proofs of our later main results are technically involved, they are all guided by a common intuition that is transparent and can be understood through a simple Taylor expansion of $\log p_{\sigma}$ at $\sigma = 0$:
\begin{theorem}[{Informal \Cref{lemma:limit_V}}] \label{thm_main_informal}
Assume \Cref{assumption:manifold,assumption:pdata} holds. For any $x \in T_{\cM}(\epsilon)$,
\begin{equation} \label{eq:score_expansion}
  \begin{aligned}
    \log p_{\sigma}(x)
    = -\frac{1}{\sigma^2} d_\cM(x) + \log \pdata(\Phi^{-1}(\Pm(x)))
      - \tfrac{d-n}{2} \log(2\pi \sigma^2) 
      + H(x)
       + o(1),
  \end{aligned}
\end{equation}
where $H(x)$ contains the curvature information of the manifold and $\epsilon$
is some sufficiently small constant; both of them are independent of $\sigma$.
The small $o(1)$ term is uniform for $x \in T_{\cM}(\epsilon)$.
\end{theorem}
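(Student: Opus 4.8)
The plan is to perform a Laplace-type (saddle-point) expansion of the integral defining $p_\sigma(x)$ in \eqref{eq:p_sigma}, localizing around the unique closest point $\Pm(x)$ on $\cM$. Fix $x \in T_\cM(\epsilon)$ and write $\bar x = \Pm(x)$, $\bar u = \Phi^{-1}(\bar x)$. The first step is to reduce the integral over $U$ to an integral over a small coordinate ball around $\bar u$: since $\|x - \gamma(\sigma)\Phi(u)\|^2 \ge 2 d_\cM(x) + c\|u - \bar u\|^2$ for $u$ in a neighborhood of $\bar u$ (by positive reach / the $C^4$ assumption) and is bounded away from $2 d_\cM(x)$ outside any fixed neighborhood, the contribution from $u$ far from $\bar u$ is exponentially smaller than the leading term and can be absorbed into the $o(1)$ after dividing through by $e^{-d_\cM(x)/\sigma^2}$. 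Care is needed to handle the VP case, where $\gamma(\sigma) = \sqrt{1-\sigma^2} = 1 - \tfrac{\sigma^2}{2} + O(\sigma^4)$ contributes only an $O(\sigma^2)$ perturbation to the exponent (hence $o(1)$ after the rescaling), so VE and VP share the same expansion to the stated order.

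The second step is the local expansion. Introduce the rescaled variable $u = \bar u + \sigma w$ and Taylor-expand the phase $\tfrac{1}{2}\|x - \gamma(\sigma)\Phi(\bar u + \sigma w)\|^2$ in $\sigma$. The constant term is $d_\cM(x)$; the linear-in-$w$ term vanishes because $x - \bar x$ is normal to $\cM$ at $\bar x$ (first-order optimality of the projection), so $\langle x - \bar x, d\Phi(\bar u) w\rangle = 0$; the quadratic-in-$w$ term is $\tfrac12 w^\top \big(d\Phi^\top d\Phi - \langle x-\bar x, \mathrm{Hess}\,\Phi\rangle\big) w =: \tfrac12 w^\top Q(x) w$, where $Q(x)$ is exactly the positive-definite $n\times n$ matrix appearing in the second-variation/Hessian of the distance function — this is where the curvature enters. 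Collecting terms, $p_\sigma(x) = e^{-d_\cM(x)/\sigma^2}(2\pi\sigma^2)^{-d/2}\sigma^n \int e^{-\frac12 w^\top Q(x) w}\big(\pdata(\bar u) + O(\sigma\|w\|)\big)\,dw\,(1 + o(1))$. Evaluating the Gaussian integral gives $(2\pi)^{n/2}(\det Q(x))^{-1/2}\pdata(\bar u)$, and the prefactor $(2\pi\sigma^2)^{-d/2}\sigma^n = (2\pi\sigma^2)^{-(d-n)/2}(2\pi)^{-n/2}$ produces the $-\tfrac{d-n}{2}\log(2\pi\sigma^2)$ term. Taking logarithms yields \eqref{eq:score_expansion} with $H(x) = -\tfrac12\log\det Q(x)$ (plus any $\sigma$-independent geometric constant from $\sqrt{\det g}$ hidden in the change of measure), which is manifestly $\sigma$-independent and encodes the second fundamental form of $\cM$.

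The third step is to upgrade pointwise control to uniformity over $T_\cM(\epsilon)$: all error terms — the exponentially small tail, the remainder in the Taylor expansion of $\Phi$, and the $O(\sigma\|w\|)$ correction to $\pdata$ — must be bounded by constants depending only on the $C^4$ norms of $\Phi$, the reach of $\cM$, and $\inf \pdata$, all of which are finite and positive by compactness of $\cM$ (\Cref{assumption:manifold}) and \Cref{assumption:pdata}. The main obstacle I anticipate is precisely this uniformity bookkeeping near the ``edge'' of the tubular neighborhood, where $d_\cM(x)$ approaches $\tfrac12\epsilon^2$ and $Q(x)$ is closest to degeneracy: one needs a quantitative lower bound on $\lambda_{\min}(Q(x))$ uniform in $x \in T_\cM(\epsilon)$, which follows from choosing $\epsilon$ strictly below the reach so that $I - (x-\bar x)\cdot(\text{shape operator})$ stays uniformly positive. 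A secondary technical point is justifying the interchange of the $\sigma\to 0$ limit with the (now Gaussian) $w$-integral, handled by dominated convergence after the tail truncation. Everything else is routine Laplace-method calculation.
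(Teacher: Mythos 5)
Your overall strategy matches the paper's: a Laplace/saddle-point expansion of the integral in \eqref{eq:p_sigma} localized at $\bar u = \Phi^{-1}(\Pm(x))$, with the constant term giving $d_\cM(x)$, the linear term vanishing by orthogonality of $x - \Pm(x)$ to $T_{\Pm(x)}\cM$, and the quadratic Hessian $Q(x)$ producing the curvature factor (your $Q(x)$ is the paper's $\hat H(\bar u, x)$). The paper invokes a quantitative Laplace error estimate (Corollary~\ref{corollary:laplace}, adapted from \citet{lapinski2019multivariate}) rather than performing the $u = \bar u + \sigma w$ rescaling by hand, but that is a cosmetic difference; your uniformity discussion (compactness, $C^4$ bounds, lower bound on $\lambda_{\min}(Q(x))$ via the reach) is also what the paper does.

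However, your treatment of the VP case is wrong. You assert that $\gamma(\sigma) = 1 - \tfrac{\sigma^2}{2} + O(\sigma^4)$ "contributes only an $O(\sigma^2)$ perturbation to the exponent (hence $o(1)$ after the rescaling), so VE and VP share the same expansion to the stated order." Write out the algebra: $\|x - \gamma(\sigma)\Phi(u)\|^2 = \|x - \Phi(u)\|^2 + \sigma^2\langle \Phi(u), x - \Phi(u)\rangle + O(\sigma^4)$, so after dividing by $2\sigma^2$ the perturbation to $\log p_\sigma$ is $-\tfrac12\langle\Phi(u), x - \Phi(u)\rangle + O(\sigma^2)$, which is $\Theta(1)$, not $o(1)$. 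This is exactly why the paper's formal Theorem~\ref{lemma:limit_V} records an extra $\sigma$-independent term $-\tfrac12\langle\Pm(x), x-\Pm(x)\rangle$ in the VP expansion that is absent in the VE expansion. The informal statement is still true for VP — the extra term is $\sigma$-independent and can be absorbed into $H(x)$ — but the arithmetic error means your $H(x)$ is wrong for VP, and the claim that VE and VP share the same $H(x)$ is false. (Minor: there is no hidden $\sqrt{\det g}$ factor to account for, since $\pdata$ is defined w.r.t.\ Lebesgue measure on $U$, so the integral in \eqref{eq:p_sigma} is already over $U$ rather than against the Riemannian volume.)
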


From \Cref{eq:score_expansion}, it follows immediately that the scaled log-density recovers the distance function to the manifold in the 
small $\sigma$ limit:
\begin{equation}
    \lim_{\sigma \to 0} \, \sigma^2 \log p_\sigma(x) \;=\; - d_\cM(x) 
    \quad \text{uniformly for all } x \in T_{\cM}(\epsilon).
\end{equation}

The appearance of $d_\cM(x)$ under the manifold hypothesis should not come as a surprise; indeed, as $p_\sigma \to \pdata$ when $\sigma \to 0$, and since $\pdata$ is supported entirely on $\cM$, any point $x$ with $d_\cM(x) > 0$ must be assigned zero probability, which explains the divergent scaling factor $\sigma^{-2}$ in the coefficient. What is more surprising is that \emph{only} $d_\cM(x)$ appears at leading order, with \emph{no dependence on $\pdata$}: Information about $\pdata$ enters only at the higher-order terms of $\Theta(1)$.

This reveals a fundamental \emph{rate separation}: for \emph{any} distribution
supported on $\cM$, one must first recover $d_\cM(x)$ \emph{exactly} before
learning anything about $\pdata$, as any inaccuracy in $d_\cM(x)$ gets blown up
by the diverging factor $\sigma^{-2}$. Moreover, coefficients encoding $\pdata$
appear at order $O({\sigma^{-2}})$ higher, meaning that extracting information
about $\pdata$ requires a level of accuracy orders of magnitude stricter than
that needed to recover the manifold geometry, i.e., the distance function
$d_\cM$.

As demonstrated in \Crefrange{sec:scale}{sec:recover_uniform}, this observation entails several significant consequences for machine learning. Each of these can be understood as a manifestation of the fundamental rate separation between geometric recovery \vs distributional learning established in \Cref{thm_main_informal}.

\section{Scale Separation in Existing Generative Learning: Geometry versus Distribution}
\label{sec:scale}

In this section, we study the paradigm of existing generative learning where algorithms
target to learn the Gaussian-smoothed measure $\mu_\sigma$, such as the diffusion
models discussed in \Cref{subsec:gaussian_smoothed_measure_diffusion_model}.
We denote the corresponding perfect score function by $s^*(x,\sigma) \coloneqq \nabla \log p_\sigma(x).$

In practice, however, the generated samples may follow a different distribution
due to imperfections such as errors in training or discretization of the
reverse differential equation. We therefore let
$\pi_{\sigma}(x): \bR^d \to \bR$
denote the density of the distribution actually
produced by an empirical algorithm, and define its associated score as $s_{\pi_{\sigma}}(x) \coloneqq \nabla \log \pi_{\sigma}(x).$
Our analysis focuses on $\pi_{\sigma}$ in terms of discrepancies between
$s_{\pi_{\sigma}}(x)$ and the ideal score $s^*(x,\sigma)$. 

Before presenting our result, we impose the following assumption on the recovered distribution.
\begin{assumption} \label{assumption:learned_score}
  We denote the log-density of the recovered distribution as
  $-f_{\sigma} \coloneqq \log \pi_{\sigma}(x)$, and assume that
    $f_{\sigma}$ is $C^1(K)$.
   Furthermore, we impose the following conditions:
  \begin{enumerate}
    \item There exists a compact set $K \subset \bR^d$ with
      $T_{\cM}(\epsilon) \subset K$ such that the density concentrates on $K$ as
      $\sigma \to 0$, i.e., $ \lim_{\sigma \to 0} \int_{K} \pi_{\sigma}(x)\, dx = 1$.
    \item $K$ is uniformly rectifiably path-connected, meaning that for any
      two points $x, y \in K$, there exists a path in $K$ connecting $x$ and $y$ whose
      length is uniformly bounded for all $x,y \in K$.
  \end{enumerate}
  \end{assumption}
  
\begin{remark}
  We believe our assumptions are already reflected in practice:
Since $\pi_\sigma$ represents the effective distribution of the generated samples, it can incorporate standard constraints such as data clipping (e.g., to $[-1, 1]$) used in many diffusion models~\citep{ho2020denoising,saharia2022photorealistic}. This ensures the generated density concentrates on a compact set $K$ as required. Furthermore, such regular sets are naturally uniformly rectifiably path-connected.
\end{remark}

We are ready to state our main result in this section; see \Cref{subsec:proof_scale} for the proof.
\begin{theorem} \label{thm:recover_pdata}
  Suppose \Cref{assumption:manifold,assumption:pdata,assumption:learned_score} hold.
  Denote the score error as
  \begin{align*}
    E_{\sigma} \coloneqq \norm{ s_{\pi_{\sigma}} - s^*(\cdot, \sigma) }_{L^{\infty}(K)}.
  \end{align*}
\begin{enumerate}
  \item \textbf{Concentration on Manifold.} If we have that
    $\bm{E_{\sigma} = o(\sigma^{-2})}$,
  then $\pi_{\sigma}$ concentrates on $\cM$, i.e.,
  \begin{align*}
    \lim_{\sigma \to 0} \int_{\dist(x, \cM) > \delta} \pi_{\sigma}(x) dx = 0
    \quad \text{for any} \quad \delta > 0.
  \end{align*}

    \item \textbf{Arbitrary Distribution Recovery.} For any distribution $\hat \pi$ supported on $\cM$ with
  $C^1$ density, one can construct $f_{\sigma}$ such that $\bm{E_{\sigma} = \Omega(1)}$
  as $\sigma \to 0$,
  and $\pi_{\sigma}$ converges weakly to $\hat \pi$.
    
  \item \textbf{Recovering $\pdata$.} If we have that $\bm{E_{\sigma} = o(1)}$
    as $\sigma \to 0$,
  then $\pi_{\sigma}$ converges weakly to $\pdata$. 
\end{enumerate}
\end{theorem}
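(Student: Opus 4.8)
## Proof Plan for Theorem \ref{thm:recover_pdata}

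My plan is to treat all three parts through a single lens: comparing the WKB-type expansion of the \emph{true} log-density $\log p_\sigma$ from \Cref{thm_main_informal} against the log-density $-f_\sigma = \log\pi_\sigma$ of the recovered distribution, using the score error $E_\sigma$ to control their discrepancy via line integrals along the uniformly-rectifiable paths provided by \Cref{assumption:learned_score}. The key observation is that for any two points $x,y \in K$, integrating the difference $s_{\pi_\sigma} - s^*$ along a path $\gamma$ from $y$ to $x$ gives $|f_\sigma(x) - f_\sigma(y) - (-\log p_\sigma(x) + \log p_\sigma(y))| \le E_\sigma \cdot \mathrm{len}(\gamma) \le C E_\sigma$ for a uniform constant $C$. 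So $f_\sigma(x) = -\log p_\sigma(x) + c_\sigma + O(E_\sigma)$ uniformly on $K$, where $c_\sigma$ is a normalization-type constant (depending on a reference point). Plugging in \Cref{eq:score_expansion}, we get $f_\sigma(x) = \sigma^{-2} d_\cM(x) - \log\pdata(\Phi^{-1}(\Pm(x))) + \tfrac{d-n}{2}\log(2\pi\sigma^2) - H(x) + c_\sigma + O(E_\sigma) + o(1)$.

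For Part 1 (concentration), I would fix $\delta > 0$ and lower-bound $f_\sigma(x)$ on the region $\{\dist(x,\cM) > \delta\} \cap K$: there $d_\cM(x) \ge \delta^2/2$, so $\sigma^{-2}d_\cM(x) \ge \delta^2/(2\sigma^2) \to \infty$, while the error term is $O(E_\sigma) = o(\sigma^{-2})$ and the remaining terms are $O(\log\sigma^{-2})$ (the curvature expansion of $H$ does not apply far from $\cM$, but $\log p_\sigma$ itself is still bounded above there by a Gaussian-tail estimate $\lesssim -\delta^2/(2\sigma^2) + O(\log\sigma^{-1})$, which I would establish directly from \Cref{eq:p_sigma} rather than from the tubular expansion). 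Meanwhile near $\cM$, $f_\sigma$ is much smaller, so $\pi_\sigma(x) = e^{-f_\sigma(x)}$ is exponentially (in $\sigma^{-2}$) suppressed on the far region relative to a neighborhood of $\cM$ of comparable volume; since $\int_K \pi_\sigma = 1 + o(1)$, this forces $\int_{\dist(x,\cM)>\delta}\pi_\sigma(x)\,dx \to 0$. Care is needed because $K$ may extend outside $T_\cM(\epsilon)$, so I split $K$ into the tubular part (use the expansion) and the complement (use the crude Gaussian bound).

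For Part 3 ($o(1)$ recovery), with $E_\sigma = o(1)$ the error term vanishes, and $f_\sigma(x) = \sigma^{-2}d_\cM(x) - \log\pdata(\Phi^{-1}(\Pm(x))) + \tfrac{d-n}{2}\log(2\pi\sigma^2) - H(x) + c_\sigma + o(1)$, which is exactly (up to the $\sigma$-dependent constant absorbed by normalization) the expansion of $-\log p_\sigma$. The standard Laplace/coarea argument — integrating $e^{-f_\sigma}$ over the tubular neighborhood in Fermi coordinates $(u, w)$ with $w$ the normal fiber, where $d_\cM \approx \tfrac12\|w\|^2$ — then shows the normal directions are integrated out to a Gaussian of width $\sigma$, the Jacobian/curvature factors in $H$ combine with $\sqrt{\det g}$ correctly, and the pushforward under $\Pm$ converges weakly to the measure with density $\propto \pdata \cdot (\text{the volume factor})$ on $\cM$, i.e. $\mu_{\mathrm{data}}$ itself. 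This is precisely the mechanism by which $p_\sigma \rightharpoonup \mu_{\mathrm{data}}$, now made robust to an $o(1)$ perturbation in the exponent. For Part 2, I would run this construction in reverse: given target $\hat\pi$ with $C^1$ density $\hat p$ on $\cM$, define $f_\sigma(x) := \sigma^{-2}d_\cM(x) - \log\hat p(\Phi^{-1}(\Pm(x))) + \tfrac{d-n}{2}\log(2\pi\sigma^2) - H(x)$ on $T_\cM(\epsilon)$ and extend smoothly and coercively to all of $K$; the same Laplace argument gives $\pi_\sigma \rightharpoonup \hat\pi$, while $s_{\pi_\sigma} - s^* = \nabla(\log\pdata - \log\hat p)\circ\Pm + o(1)$, which is $\Omega(1)$ whenever $\hat p \ne \pdata$.

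The main obstacle I anticipate is making the "integrate along a path" control genuinely uniform up to and including a neighborhood of the manifold itself, where $\log p_\sigma$ blows up like $\sigma^{-2}$: the $L^\infty$ score bound $E_\sigma$ controls \emph{differences} of $f_\sigma$ but I must pin down the additive constant $c_\sigma$, and then show that the Laplace-method asymptotics survive the $O(E_\sigma)$ (or $o(1)$) perturbation \emph{uniformly in the normal variable} over a $\sigma$-shrinking window — i.e. that the perturbation does not distort the effective Gaussian profile in the normal fiber. Handling the part of $K$ outside the tubular neighborhood, where the clean expansion of \Cref{thm_main_informal} is unavailable, will require a separate, cruder upper bound on $\log p_\sigma$ there, but that is routine from the integral formula \Cref{eq:p_sigma}.
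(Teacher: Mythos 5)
Your plan mirrors the paper's proof almost exactly: integrate the score error along the uniformly-rectifiable paths to lift the $L^\infty$ gradient bound to an $L^\infty$ log-density bound (modulo an additive constant pinned down by the normalization $\int_K e^{-f_\sigma}=1+o(1)$ versus $\int_K p_\sigma\to1$), split $K$ into the tubular neighborhood and its complement with a crude Gaussian tail bound on the latter, then apply the expansion of \Cref{thm_main_informal} together with a Laplace-type argument in normal coordinates for Parts~1 and~3 and the reverse construction for Part~2. The paper packages these steps into \Cref{lemma:limit_support}, \Cref{lemma:limit_f_outside}, and \Cref{thm:dist_f_theta}, but the decomposition and the key ideas are the same.
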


This result formalizes the intuitive fact that recovering $\pdata$ requires $\nabla\log \pi_\sigma$ to match the true score to within $o(1)$ accuracy as $\sigma \to 0$.
The reason is clear from the expansion \eqref{eq:score_expansion}: the
distribution $\pdata$ only appears in the $\Theta(1)$ term, and any larger error
would overwhelm this information. In practice, however, achieving such accuracy
is extremely challenging, particularly in the small-$\sigma$ regime.
However, recovering the manifold is simple---only $o(1/\sigma^2)$ accuracy is
required such that as $\sigma \to 0$, the density will concentrate on $\cM$---a shape separation from recovering $\pdata$.

\textbf{Implications to Diffusion Models.}
\newcommand{\pest}{\hat{p}}
\newcommand{\err}{\mathrm{err}}
As we mentioned before, the paradigmatic example to which our results can be
applied is diffusion models.
Our Theorem~\ref{thm:recover_pdata} then reveals a sharp scale separation in
terms of the score error:
\emph{well before the true distribution $\pdata$ is
fully recovered, one can already recover a distribution supported on the same
data manifold}. In practice, this often suffices, as what truly matters is
capturing the \emph{structural features} of the manifold—realistic images,
plausible protein conformations, or meaningful material geometries. This insight
provides a potential new explanation for the remarkable success of diffusion
models.

\section{New Paradigm of Geometric Learning: Recover Uniform Distributions with
\texorpdfstring{$o(\sigma^{-2})$}{o(σ⁻²)} Score Error}
\label{sec:recover_uniform}

As shown in \Cref{thm:recover_pdata}, while concentration on the manifold is orders of magnitude simpler, the recovered distribution can still be \textbf{arbitrary} unless the score is learned with $o(1)$ accuracy. In contrast, we show in this section the striking fact that even with score errors as large as $o(\sigma^{-2})$, with a simple modification of the existing algorithm, one can recover the \emph{uniform distribution on the manifold}—a fundamental distribution that plays a key role in scientific discovery and encodes rich geometric information about the manifold \citep{de2025provable, belkin2008towards}.

Unlike in \Cref{sec:scale}, where we compared errors by evaluating a learned \emph{distribution} $\pi_\sigma$ against the ideal $p_\sigma$ through their score functions, in this section we assume direct access to an estimated \emph{score oracle} $s(\cdot, \sigma)$, such as those learned via score matching in diffusion models. Given access to such an oracle, our proposed algorithm consists of running the following SDE for some $\alpha > 0$:
\begin{align} \label{eq:modified_langevin}
  dX_t = \sigma^{\alpha} s(X_t, \sigma)\, dt + \sqrt{2}\, dW_t,
\end{align}
which we refer to as the \emph{Tempered Score} (TS) Langevin dynamics. We claim that, under mild error assumptions, the stationary distribution of this SDE, denoted $\tilde{\pi}_{\sigma}$, converges to the uniform distribution on the manifold as $\sigma \to 0$. 

Our analysis proceeds in two steps. First, we establish the result in a simplified setting where the score oracle $s(\cdot,\sigma)$ is guaranteed to be a gradient field, with a proof analogous to \Cref{sec:scale}. Second, we tackle the substantially more challenging case in which no \emph{a priori} gradient structure is assumed. Full proofs are provided in \Cref{subsec:proof_recover_uniform}.

\textbf{Warm-up: Score Oracle is a Gradient Field.}
We use the same notation as in \Cref{sec:scale}, namely
$s(x,\sigma) = -\nabla f_{\sigma}(x)$. In this case, the stationary distribution
of \Cref{eq:modified_langevin} admits the explicit form
\begin{align*}
  \tilde \pi_{\sigma}(x) \propto \exp\!\left(- \sigma^{\alpha} f_{\sigma}(x)\right).
\end{align*}
We then obtain the following result, using a proof technique similar to that of
\Cref{thm:recover_pdata}.
\begin{theorem} \label{thm:recover_uniform_gradient}
  Assume \Cref{assumption:manifold,assumption:pdata,assumption:learned_score}
  hold, with $\pi_{\sigma}$ replaced by $\tilde \pi_{\sigma}$.
  Suppose
  \begin{align} \label{eq:score_error_uniform}
    \norm{ s(\cdot, \sigma) - s^*(\cdot, \sigma) }_{L^{\infty}(K)}
      = o\!\left(\sigma^{\beta}\right)
      \quad \text{for some } \beta > -2.
  \end{align}
  Then for any $\max\{-\beta,0\} < \alpha < 2$, as $\sigma \to 0$,
  $\tilde \pi_{\sigma}$ converges weakly to the
  \textbf{uniform distribution} on the manifold $\cM$ with respect to the
  intrinsic volume measure. More precisely, the limiting distribution
  $\tilde \pi$ with respect to the Lebesgue measure on $U$ satisfies
  \[
    \tilde \pi(u) \propto \frac{d\cM}{du}(u),
  \]
  where $(d\cM / du)(u) = \sqrt{\det(g(u))}$ is the Riemannian volume element on
  $\cM$.
\end{theorem}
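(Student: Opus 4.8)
# Proof Proposal for Theorem \ref{thm:recover_uniform_gradient}

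The plan is to start from the explicit stationary density $\tilde\pi_\sigma(x) \propto \exp(-\sigma^\alpha f_\sigma(x))$ and analyze its $\sigma\to 0$ limit directly, using the key expansion \eqref{eq:score_expansion} from \Cref{thm_main_informal} (equivalently \Cref{lemma:limit_V}). First I would write $f_\sigma = -\log\pi_\sigma$ in terms of the true log-density plus an integrated error term: since the score error $s(\cdot,\sigma) - s^*(\cdot,\sigma)$ is $o(\sigma^\beta)$ in $L^\infty(K)$ and $K$ is uniformly rectifiably path-connected (\Cref{assumption:learned_score}), integrating along bounded-length paths gives $f_\sigma(x) = -\log p_\sigma(x) + r_\sigma(x)$ with $\|r_\sigma\|_{L^\infty(K)} = o(\sigma^\beta)$ up to an additive constant (which is absorbed by normalization). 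This is the same device used in the proof of \Cref{thm:recover_pdata}.

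Next, I would substitute the expansion \eqref{eq:score_expansion}, $-\log p_\sigma(x) = \sigma^{-2} d_\cM(x) - \log\pdata(\Phi^{-1}(\Pm(x))) + \tfrac{d-n}{2}\log(2\pi\sigma^2) - H(x) + o(1)$, valid uniformly on $T_\cM(\epsilon)$. Multiplying by $\sigma^\alpha$:
\begin{equation*}
  \sigma^\alpha f_\sigma(x) = \sigma^{\alpha-2} d_\cM(x) + \sigma^\alpha\Big(\tfrac{d-n}{2}\log(2\pi\sigma^2) - \log\pdata(\Phi^{-1}(\Pm(x))) - H(x)\Big) + \sigma^\alpha o(1) + \sigma^\alpha r_\sigma(x).
\end{equation*}
The constraint $\alpha < 2$ forces $\sigma^{\alpha-2}\to\infty$, so the term $\sigma^{\alpha-2} d_\cM(x)$ acts as a diverging confining potential that kills all mass at positive distance from $\cM$ — this yields concentration on $\cM$ and lets me restrict attention to $T_\cM(\epsilon)$. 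The constraint $\alpha > \max\{-\beta,0\}$ ensures $\sigma^\alpha r_\sigma(x) = o(\sigma^{\alpha+\beta}) = o(1)$, and also $\sigma^\alpha\cdot\Theta(1)$-terms (including the $\log\sigma^2$ piece, since $\sigma^\alpha\log\sigma^2\to 0$) vanish. Hence on the tubular neighborhood, $\sigma^\alpha f_\sigma(x) = \sigma^{\alpha-2} d_\cM(x) + o(1)$ uniformly, with \emph{no surviving dependence on $\pdata$ or $H$}.

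The remaining work is a Laplace-type / Gaussian-integral computation in the tubular neighborhood: parametrize $T_\cM(\epsilon)$ by $(u, v)$ where $u\in U$ is the footpoint coordinate and $v\in\mathbb R^{d-n}$ is the normal offset, so that $d_\cM(x) = \tfrac12\|v\|^2 + (\text{higher order in }v)$ and the ambient volume element is $\sqrt{\det g(u)}\,(1 + O(\|v\|))\,du\,dv$ by the tubular-neighborhood (coarea) change of variables. Then
\begin{equation*}
  \int_{T_\cM(\epsilon)} \phi(x)\, e^{-\sigma^\alpha f_\sigma(x)}\, dx \;\approx\; \int_U \phi(\Phi(u))\sqrt{\det g(u)}\Big(\int_{\mathbb R^{d-n}} e^{-\frac{\sigma^{\alpha-2}}{2}\|v\|^2}\, dv\Big) du \cdot (1+o(1)),
\end{equation*}
and the inner Gaussian integral equals $(2\pi\sigma^{2-\alpha})^{(d-n)/2}$, a constant in $u$ that cancels in the ratio defining the normalized measure. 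Therefore the limiting density on $U$ is proportional to $\sqrt{\det g(u)}$, i.e. the Riemannian volume element, as claimed. I would make the "$\approx$" rigorous by dominated convergence after rescaling $v = \sigma^{(2-\alpha)/2} w$, using compactness of $\cM$, continuity of $\phi$, and the uniform $o(1)$ control; the outside-$T_\cM(\epsilon)$ contribution is handled by the confining term as noted. The main obstacle I anticipate is the careful bookkeeping of the change of variables in the tubular neighborhood — controlling the $O(\|v\|)$ Jacobian corrections and the higher-order terms in $d_\cM$ uniformly in $u$ — but these are lower-order and vanish in the limit, so no genuinely new idea is needed beyond what already supports \Cref{thm_main_informal}.
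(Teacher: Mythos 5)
Your proposal is correct and takes essentially the same route as the paper: both combine the path-integral device from \Cref{thm:recover_pdata} (to pass from score error to log-density error, with the additive constant absorbed by normalization), the expansion in \Cref{lemma:limit_V}, the reparameterization $\theta=\sigma^{2-\alpha}$ showing $\sigma^2 f_\sigma = d_\cM + o(\sigma^{2-\alpha})$ uniformly on the tube, and a Laplace/Gaussian integration in the normal fiber whose Hessian is the identity (hence $u$-independent), leaving only the Riemannian volume element. The only cosmetic difference is that you carry out the normal-direction integral by hand while the paper routes it through the general framework of \Cref{thm:dist_f_theta} (its adaptation of the Hwang-type Laplace result), and the paper devotes an explicit ratio argument—which you flag but do not spell out—to verify that the stationary mass concentrates in $T_\cM(\epsilon)$ under the $\sigma^\alpha$ tempering.
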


\textbf{General Non-Gradient Score Oracle.}
While \cref{thm:recover_uniform_gradient} already illustrates the rate separation phenomenon we wish to emphasize, it relies on the highly impractical assumption that the estimated scores $s(\cdot,\sigma)$ are exact gradient fields. To enhance the applicability of our framework, it is crucial to relax this stringent assumption.

As discussed in \Cref{subsec:pre_wkb}, existing approaches to non-gradient scores (and hence non-reversible dynamics) typically assume the existence of a unique point $x^*$ such that $\lim_{\sigma \to 0}\sigma^\alpha s(x^*,\sigma) = 0$, with the key consequence of collapsing the prefactor $c_0$ in \eqref{eq:wkb_expansion} to a normalization constant $c_0(x^*)$. Our framework, however, explicitly violates this assumption: we require that $\lim_{\sigma \to 0}\sigma^\alpha s(\cdot,\sigma)$ stabilizes to a \emph{manifold} rather than a singleton. Under this setting, the limiting behavior of $c_0$ is far from obvious, and the resolution of this issue turns out to be highly nontrivial. 

To this end, a central part of our proof is devoted to showing that $c_0$ nevertheless remains constant, albeit for an entirely different reason: we prove that the higher-order terms in the Fokker--Planck expansion enforce $c_0$ to satisfy a \emph{parabolic PDE} on the manifold, and by the strong maximum principle~\citep{gilbarg1977elliptic}, the only solutions on a compact manifold are constants.

With these techniques, we obtain the same conclusion as
\Cref{thm:recover_uniform_gradient}:
\begin{theorem} \label{thm:recover_uniform}
Assume \Cref{assumption:manifold,assumption:pdata,eq:score_error_uniform}
hold, and further suppose $\pdata \in C^2(U)$. For any
$\max\{-\beta,0\} < \alpha < 2$, assume that the SDE admits a unique
stationary distribution, denoted $\tilde \pi_{\sigma}$, which locally admits a
WKB form (\Cref{assumption:wkb} with $\theta = \sigma^{2-\alpha}$). Then the
conclusion of \Cref{thm:recover_uniform_gradient} holds.
\end{theorem}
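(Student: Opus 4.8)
The plan is to insert the tampered-score ansatz $\tilde\pi_\sigma(x) \propto \exp(-V(x)/\theta)\, c_\theta(x)$ with $\theta = \sigma^{2-\alpha}$ into the stationary Fokker--Planck equation for \eqref{eq:modified_langevin} and balance orders of $\theta$, exactly in the spirit of the WKB expansion \eqref{eq:wkb_expansion}, but now allowing the zero-set of the effective drift to be the whole manifold $\cM$ rather than an isolated point. First I would compute the leading drift: by \Cref{thm_main_informal}, $\sigma^\alpha s^*(x,\sigma) = \sigma^{\alpha-2}(-\nabla d_\cM(x)) + \sigma^\alpha(\nabla\log\pdata\circ\Phi^{-1}\circ\Pm + \nabla H) + \sigma^\alpha\cdot o(1)$, and the error bound \eqref{eq:score_error_uniform} adds a term of size $\sigma^{\alpha+\beta} = o(\sigma^{2-\alpha}\cdot\sigma^{2\alpha+\beta-2})$ with $2\alpha+\beta-2>0$ by the hypothesis $\alpha>-\beta$ — so it is genuinely lower-order than $\theta$ and cannot disturb the leading balance. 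The $O(\theta^0)$ term in $\log\tilde\pi_\sigma$ picks out $V(x) = d_\cM(x)$ as the solution of the eikonal-type equation $|\nabla V|^2 = $ (something proportional to $V$ near $\cM$, vanishing on $\cM$); the key structural fact is that $V$ attains its minimum value $0$ precisely on $\cM$ and is non-degenerate in the $d-n$ normal directions (Hessian of $d_\cM$ has eigenvalue $1$ on the normal bundle and $0$ on the tangent bundle), so $\exp(-V/\theta)$ concentrates a Gaussian collar of width $\sqrt\theta$ around $\cM$.

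Next I would descend to the transport equation for the prefactor. The $O(\theta^{1/2})$ or $O(\theta)$ balance (depending on how the curvature/volume Jacobian terms are counted — I would normalize by factoring out the known $\theta^{-(d-n)/2}$ Gaussian normalization as in \eqref{eq:score_expansion}) yields a first transport equation forcing $c_0$ to be constant along the normal fibers, so $c_0$ descends to a function on $\cM$, call it $\bar c_0$. The heart of the argument — and the step I expect to be the main obstacle — is showing $\bar c_0$ is \emph{globally} constant on $\cM$. In the classical single-well WKB analysis this is automatic because the fiber over the unique critical point is a single point; here the fiber is all of $\cM$, and a priori $\bar c_0$ could be any positive function reflecting a nonuniform limiting density. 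The resolution sketched in the excerpt is that the next-order ($O(\theta^{3/2})$ or $O(\theta^2)$) balance in Fokker--Planck, after integrating out the normal Gaussian fluctuations, produces a solvability (Fredholm) condition that takes the form of a parabolic/elliptic PDE for $\bar c_0$ \emph{intrinsic to} $\cM$ — schematically $\Delta_\cM \bar c_0 + (\text{lower order}) = 0$ or a drift-diffusion operator $L\bar c_0 = 0$ on $\cM$. Because $\cM$ is compact and boundaryless (\Cref{assumption:manifold}), the strong maximum principle~\citep{gilbarg1977elliptic} forces any solution of such an equation to be constant; this is where compactness and boundarylessness are essential, and where the non-gradient (non-self-adjoint) structure of the operator must be handled carefully — one cannot simply appeal to an energy/Dirichlet argument, so I would invoke the maximum-principle form directly, checking that the zeroth-order coefficient of $L$ has the right sign (or is absent) so that the principle applies.

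With $\bar c_0$ constant, the limiting marginal on $\cM$ is obtained by integrating $\exp(-d_\cM(x)/\theta)\,\bar c_0 \cdot \theta^{-(d-n)/2}$ over the normal directions: the Gaussian integral over the $(d-n)$-dimensional normal fiber at $\Pm(x)=\Phi(u)$ contributes a factor that, together with the curvature term $H$, must be shown to be independent of $u$ (or to contribute only a bounded, $u$-dependent factor that is in fact constant at leading order) — this is the same computation that underlies \Cref{thm:recover_uniform_gradient}, so I would reuse that lemma's bookkeeping. What survives is the Riemannian volume element $\sqrt{\det g(u)}\,du$ coming from pushing the ambient Lebesgue measure forward through the tubular-neighborhood coordinates, exactly as in the integration formula $\int_\cM f\,d\cM = \int_U f(\Phi(u))\sqrt{\det g(u)}\,du$ recalled in \Cref{sec:prelim}. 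Hence $\tilde\pi(u) \propto \sqrt{\det g(u)} = (d\cM/du)(u)$, which is the claimed uniform distribution. Finally I would package the convergence: uniform control of the $o(1)$ remainder in \eqref{eq:score_expansion} over $T_\cM(\epsilon)$ plus \Cref{assumption:learned_score}(1) (concentration on the compact $K \supset T_\cM(\epsilon)$) upgrades the pointwise density convergence to weak convergence of measures, testing against bounded continuous functions and splitting $K = T_\cM(\epsilon) \cup (K\setminus T_\cM(\epsilon))$ with the latter carrying vanishing mass as $\sigma\to0$. The role of the extra hypotheses — $\pdata\in C^2$ and the assumed existence of a unique stationary distribution with a local WKB form (\Cref{assumption:wkb}) — is precisely to make the term-by-term Fokker--Planck expansion and the parabolic-PDE solvability argument rigorous rather than merely formal.
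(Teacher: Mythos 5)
Your proposal follows the same strategy as the paper's proof: insert the WKB ansatz into the stationary Fokker--Planck equation with $\theta = \sigma^{2-\alpha}$, solve the eikonal equation at order $\theta^{-1}$ to get $V = d_\cM$, use the order-$\theta^0$ transport equation to show the prefactor is constant along normal fibers, then go one order higher on $\cM$ to obtain an intrinsic parabolic PDE for $\bar c_0$ and invoke the strong maximum principle on the compact, boundaryless $\cM$. Your identification of the "main obstacle" (the fiber over the critical set being all of $\cM$ rather than a point, so constancy of $\bar c_0$ must be earned via a higher-order solvability condition) and its resolution is exactly what the paper does in \Cref{thm:wkb_solution}, \Cref{lemma:local_O1_u}, and \Cref{lemma:grad_r}.

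However, there is a genuine gap at the very first step of your accounting. You write $\sigma^\alpha s^*(x,\sigma) = \sigma^{\alpha-2}(-\nabla d_\cM) + \sigma^\alpha(\nabla\log\pdata\circ\Phi^{-1}\circ\Pm + \nabla H) + \sigma^\alpha\cdot o(1)$ and cite \Cref{thm_main_informal} as the source. But \Cref{thm_main_informal} is an expansion of $\log p_\sigma$ with a uniformly $o(1)$ remainder; it does \emph{not} license differentiating term by term, since the gradient of an $o(1)$ function need not be $o(1)$ (or even bounded). What the WKB setup actually needs is $\|\sigma^2\nabla\log p_\sigma + \nabla d_\cM\|_{L^\infty(T_\cM(\epsilon))} = o(\sigma^{2-\alpha})$ so that $b_\theta = \sigma^2 s = -\nabla d_\cM + o(\theta)$ with $f_1 \equiv 0$. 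The paper establishes the key lemma $\|\sigma^2\nabla\log p_\sigma + \nabla d_\cM\| = O(\sigma^2)$ (their \Cref{eq:nabla_log_p_expansion}) by a separate argument: the naive Laplace expansion of the numerator $\int\mathcal N(x;\Phi(u),\sigma^2 I)\pdata(u)(\Phi(u)-x)/\sigma^2\,du$ only gives an $O(1/\sigma)$ error, which is too weak, so they invoke a sharper Laplace result with $O(\sigma^2)$ error — and this is precisely where the extra hypothesis $\pdata \in C^2(U)$ enters (the $F$ must be $C^4$ and the prefactor $g$ must be $C^2$). Your proposal vaguely credits $\pdata\in C^2$ with "making the term-by-term expansion rigorous" but never identifies this mechanism; without the $O(\sigma^2)$ gradient bound, the decomposition $b_\theta = -\nabla f_0 + o(\theta)$ is not established and the whole WKB machine has no input.

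A secondary slip: you claim the score error contributes a drift term of size $\sigma^{\alpha+\beta}$ and then assert $2\alpha+\beta-2>0$ follows from $\alpha>-\beta$, which is false (e.g., $\alpha=1,\beta=-1/2$ gives $2\alpha+\beta-2 = -1/2$). The correct bookkeeping compares the error in $b_\theta = \sigma^2 s$ to $\theta$: the score error gives $\sigma^2\cdot o(\sigma^\beta) = o(\sigma^{2+\beta}) = o(\sigma^{2-\alpha})$ iff $\alpha+\beta>0$, and the gradient lemma above gives $O(\sigma^2) = o(\sigma^{2-\alpha})$ iff $\alpha>0$ — so the hypothesis $\alpha > \max\{-\beta,0\}$ is exactly what is used. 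Also, the concentration on $T_\cM(\epsilon)$ you attribute to \Cref{assumption:learned_score}(1) is in this theorem part of \Cref{assumption:wkb} itself; \Cref{assumption:learned_score} is not assumed here.
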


Setting $\alpha = 0$ in \cref{eq:modified_langevin} recovers the standard Langevin sampler or the ``Corrector'' step commonly used in diffusion-based sampling~\citep{songscore}. Our results in \Cref{thm:recover_uniform_gradient,thm:recover_uniform} therefore imply that a simple, one-line modification of these standard schemes is enough to recover the uniform distribution on the data manifold \emph{from samples of $\pdata$}, even when the score error is as large as $o(\sigma^{-2})$—a substantially weaker requirement than the $o(1)$ accuracy needed to recover $\pdata$ itself.

\begin{remark}
In \Cref{sec:ts_convergence}, we provide further discussion on the convergence (mixing time) of TS Langevin compared to standard Langevin dynamics. While characterizing the general convergence rate is a non-trivial problem left for future work, our analysis indicates that TS Langevin maintains comparable algorithmic efficiency. In fact, by analyzing the Poincaré constant, we identify concrete examples where TS Langevin converges provably exponentially faster than standard, untempered Langevin dynamics.
\end{remark}

\section{Uniform Prior is More Robust For Bayesian Inverse Problems}
\label{sec:bayesian_inverse}

In Bayesian learning, one often sets the prior $\pprior$ to the Gaussian-smoothed data distribution $p_{\sigma}$ defined in \Cref{eq:p_sigma} with some small smoothing parameter $\sigma$. To ensure asymptotically correct posterior samples under this choice, the learned score typically must be exact \citep{pesme2025map}, $\hat s = \nabla \log p_\sigma$, or achieve vanishing error, $\|\hat s - \nabla \log p_\sigma\|_{\mathcal L^\infty}=o(1)$ \citep[Proposition 3.3 and H2]{laumont2022bayesian}. In contrast, under our framework, if one adopts the manifold volume measure (i.e., the uniform distribution on $\mathcal M$) as the prior, then correct posterior sampling can be attained under a substantially weaker requirement: it suffices that the score error scales as $o(\sigma^{-2})$. The precise statement is given in the theorem below.%

\begin{theorem} \label{thm:with_guidence}
  Under the same assumptions as in \Cref{thm:recover_uniform}, and suppose
  $v: \bR^d \to \bR$ is bounded on $\bR^d$,
  and $C^1$ on $T_{\cM}(\epsilon)$.  
  Then, as $\sigma \to 0$, the stationary distribution of the SDE
  \begin{align} \label{eq:modified_langevin_with_guidence}
      dx_t = -\nabla v(x_t)\, dt - \sigma^{\alpha} \nabla f_{\sigma}(x_t)\, dt
      + \sqrt{2}\, dW_t,
  \end{align}
  converges weakly to a distribution supported on $\cM$ with density 
  $\propto \exp\bigl(-v(\Phi(u))\bigr)\, \frac{d\cM}{du}(u)$.
\end{theorem}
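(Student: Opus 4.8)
The plan is to reduce \eqref{eq:modified_langevin_with_guidence} to the WKB analysis already developed for \Cref{thm:recover_uniform} and then isolate the effect of the extra drift $-\nabla v$. Writing the oracle drift as $\sigma^{\alpha}s(x,\sigma)$ (so that $\sigma^{\alpha}s=-\sigma^{\alpha}\nabla f_{\sigma}$ as in \eqref{eq:modified_langevin_with_guidence}) and setting $\theta\coloneqq\sigma^{2-\alpha}$, I would combine the error bound \eqref{eq:score_error_uniform} with the small-$\sigma$ expansion of $s^{*}(\cdot,\sigma)=\nabla\log p_{\sigma}$ entailed by \Cref{lemma:limit_V} (exactly as already used in the proof of \Cref{thm:recover_uniform}). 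Since $\alpha>0$ and $\alpha+\beta>0$, the $\Theta(\sigma^{\alpha})$ density/curvature contributions of $\nabla\log p_{\sigma}$ and the $o(\sigma^{\alpha+\beta})$ estimation error are all $o(1)$, so uniformly on $T_{\cM}(\epsilon)$ the drift of \eqref{eq:modified_langevin_with_guidence} is
\begin{equation*}
  -\nabla v(x)+\sigma^{\alpha}s(x,\sigma)\;=\;-\frac{1}{\theta}\,\nabla d_{\cM}(x)\;-\;\nabla v(x)\;+\;o(1).
\end{equation*}
Hence \eqref{eq:modified_langevin_with_guidence} has exactly the same singular $\Theta(1/\theta)$ part $-\frac{1}{\theta}\nabla d_{\cM}$ as the tampered dynamics \eqref{eq:modified_langevin} analyzed in \Cref{thm:recover_uniform}; the only new feature is the additional $\Theta(1)$ drift $-\nabla v$ (recall that the tampering already suppresses the density/curvature information of $p_{\sigma}$ to $o(1)$, which is precisely why \Cref{thm:recover_uniform} yields the uniform law).

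Next I would run the WKB machinery: insert the ansatz $\exp(-V(x)/\theta)\,c_{\theta}(x)$ (\Cref{assumption:wkb}) into the stationary Fokker--Planck equation of \eqref{eq:modified_langevin_with_guidence} and balance powers of $\theta$. The leading (eikonal) relation reads $\nabla V\cdot(\nabla V-\nabla d_{\cM})=0$, whose admissible branch is $V=d_{\cM}$, exactly as in \Cref{thm:recover_uniform}; hence $\tilde\pi_{\sigma}$ concentrates on $\cM$ and it remains to determine $c_{0}$. Because $\nabla d_{\cM}$ vanishes on $\cM$, the next-order transport relation is degenerate there and leaves $c_{0}|_{\cM}$ free; as in the proof of \Cref{thm:recover_uniform} I would then descend one more order, where the higher-order Fokker--Planck balance forces $c_{0}|_{\cM}$ to satisfy a parabolic PDE on $\cM$. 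The new drift $-\nabla v$ enters this equation only through its tangential component (its normal component merely shifts the normal Gaussian profile by $O(\theta)$ and is negligible at this order), contributing precisely the first-order term $-\nabla_{\cM}(v\circ\Phi)\cdot\nabla_{\cM}c_{0}$ and its companion zeroth-order term; equivalently, the substitution $c_{0}=e^{-v\circ\Phi}\,\tilde c_{0}$ should turn it into the homogeneous parabolic PDE already analyzed for \Cref{thm:recover_uniform}, whence by the strong maximum principle on the compact boundaryless $\cM$ the factor $\tilde c_{0}$ is constant and $c_{0}(\Phi(u))\propto e^{-v(\Phi(u))}$.

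Finally I would integrate out the $d-n$ normal directions by Laplace's method: with $V=d_{\cM}$, the integral of $\exp(-d_{\cM}/\theta)\,c_{\theta}$ over each normal fibre is Gaussian and, after the change of variables to $u\in U$, contributes the Riemannian volume element $\sqrt{\det g(u)}$, exactly as in the proof of \Cref{thm:recover_uniform}, the only addition being the $u$-dependent prefactor $c_{0}(\Phi(u))$. Thus the pushforward under $\Pm$ has density $\propto c_{0}(\Phi(u))\sqrt{\det g(u)}$ with respect to $\mathrm{d}u$, and substituting $c_{0}(\Phi(u))\propto e^{-v(\Phi(u))}$ and $\sqrt{\det g(u)}=\frac{d\cM}{du}(u)$ gives the stated limit $\propto\exp(-v(\Phi(u)))\,\frac{d\cM}{du}(u)$. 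As a sanity check (and a genuine shortcut when $s(\cdot,\sigma)=-\nabla f_{\sigma}$ is an exact gradient field): then \eqref{eq:modified_langevin_with_guidence} has drift $-\nabla(v+\sigma^{\alpha}f_{\sigma})$ and stationary law $\propto e^{-v}e^{-\sigma^{\alpha}f_{\sigma}}$, i.e. the reweighting of the stationary law $\tilde\pi_{\sigma}$ of \eqref{eq:modified_langevin} by the bounded, strictly positive $e^{-v}$; since $\tilde\pi_{\sigma}$ converges weakly to the uniform law $\nu$ on $\cM$ by \Cref{thm:recover_uniform_gradient} and $e^{-v}$ is bounded continuous, the reweighted laws converge weakly to $e^{-v}\nu / \int e^{-v}\,\mathrm{d}\nu$, which is the claimed density.

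The hard part will be the middle step: showing that $-\nabla v$ enters the prefactor PDE in the conjugation-compatible way, so that $c_{0}=e^{-v\circ\Phi}\tilde c_{0}$ really does reduce it to the $v$-free equation, and confirming that the normal component of $\nabla v$ perturbs the limit only at subleading order. The gradient-field computation above fixes the expected answer and makes this very plausible, but establishing it for a general non-gradient oracle requires propagating the $\Theta(1)$ drift terms carefully through the same delicate higher-order Fokker--Planck bookkeeping already used for \Cref{thm:recover_uniform}.
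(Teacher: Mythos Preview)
Your proposal is correct and follows essentially the same route as the paper: the paper packages the eikonal/transport/strong-maximum-principle computation you outline into a general WKB lemma (\Cref{thm:wkb_solution}) valid for any order-$\theta$ gradient drift $-\theta\nabla f_1$, so the proof of \Cref{thm:with_guidence} becomes a one-liner by invoking that lemma and \Cref{thm:dist_f_theta} with $f_1=v$---which in particular already handles your ``hard part'' via the substitution $\tilde c_0=e^{f_1}c_0$. Your gradient-field sanity check is a pleasant bonus the paper does not spell out.
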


\textbf{Diffusion Models with Classifier-Free Guidance.}
The above result can also be applied to diffusion models.
The drift term in \Cref{eq:modified_langevin_with_guidence} represents the
effective score of a diffusion model with classifier-free guidance~\citep{ho2022classifier}. In this formulation, $-\nabla f_{\sigma}$
denotes the unconditional score estimate, while the guidance term
$-\nabla v$ equals the guidance scale $w$ times the difference between the
conditional and unconditional score estimates. 
Our tempered score can be applied directly to CFG diffusion models with a
Predictor--Corrector sampler: in the corrector (Langevin) step, replace the
score by its tempered version according to
\Cref{eq:modified_langevin_with_guidence} (i.e., scale the unconditional score by
$\sigma^{\alpha}$). We will demonstrate the effectiveness of this modification
empirically in \Cref{sec:exp_diffusion}.

\vspace{-0.5em}
\section{Experiments}
\label{sec:exp}
\vspace{-0.5em}

To empirically validate our theory, we present preliminary experiments on both
simple synthetic manifolds and a real-world image–generation setting with
diffusion models. On synthetic manifolds, we directly verify the claims of
\Cref{sec:recover_uniform}, demonstrating recovery of the uniform
distribution on the manifold. In the image domain, we show that our proposed
algorithm yields samples that are both more diverse and high-quality. Further
experimental details are provided in \Cref{sec:exp_details}.

\subsection{Numerical Simulations on Ellipse}
\vspace{-0.5em}

In this subsection, we illustrate our theoretical results with numerical simulations.
We consider a simple manifold given by an ellipse embedded in the two-dimensional
Euclidean space, $\cM = \{ (x,y) \in \bR^2 \mid \left(x/a\right)^2 +
  \left(y/b\right)^2 = 1 \}, \quad a,b > 0$,
   and $\pdata$ is chosen to be a von Mises distribution supported on the angular
parameterization of the ellipse. The score function is parameterized using a
transformer-based neural network, trained with the loss function introduced in
\citep{song2019generative}. After training, we evaluate the learned score
function with $\sigma = 10^{-2}$ and perform Langevin dynamics until convergence.
Training hyperparameters are tuned to minimize the test loss.

\begin{figure}[t]
\centering
\begin{subfigure}[t]{0.22\textwidth}
  \centering
  \includegraphics[width=\textwidth]{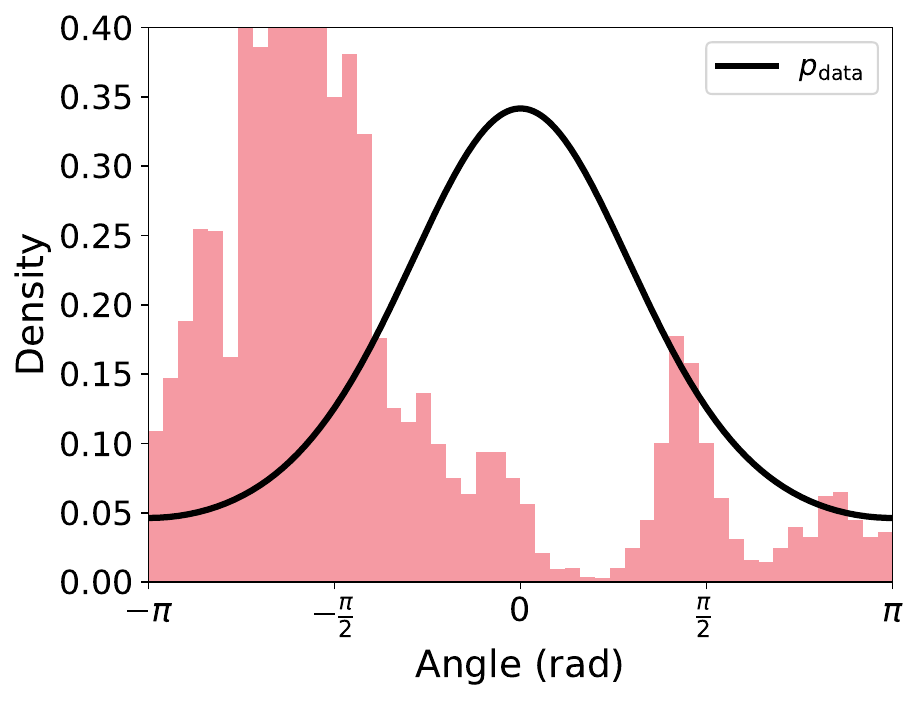}
  \caption{L (ellipse)}
  \label{fig:b_2_alpha_0}
\end{subfigure}
\hfill
\begin{subfigure}[t]{0.22\textwidth}
  \centering
  \includegraphics[width=\textwidth]{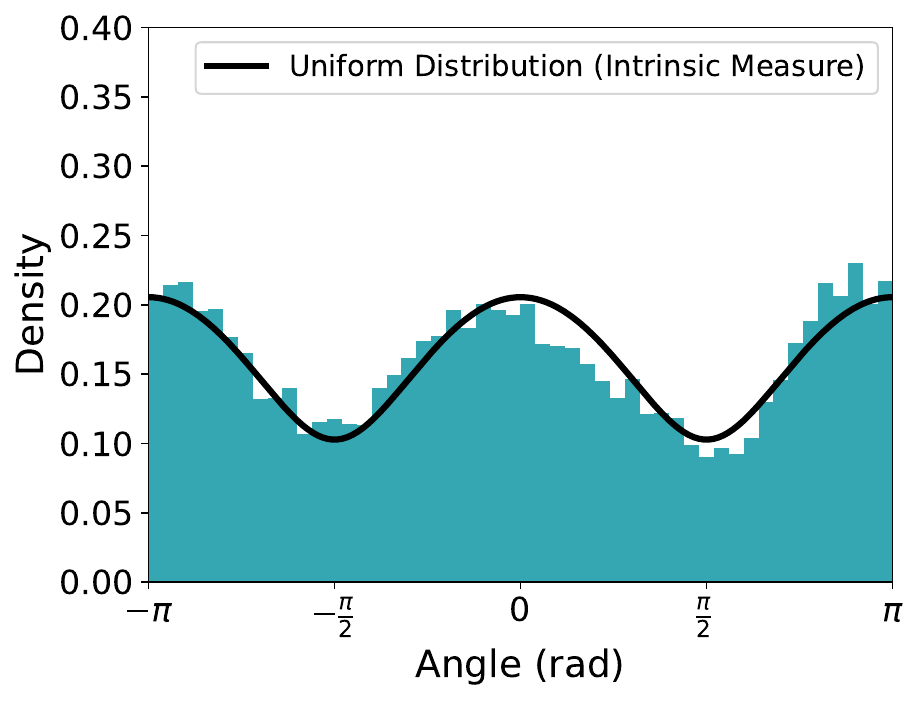}
  \caption{TS-1 (ellipse)}
  \label{fig:b_2_alpha_1}
\end{subfigure}
\hfill
\begin{subfigure}[t]{0.22\textwidth}
  \centering
  \includegraphics[width=\textwidth]{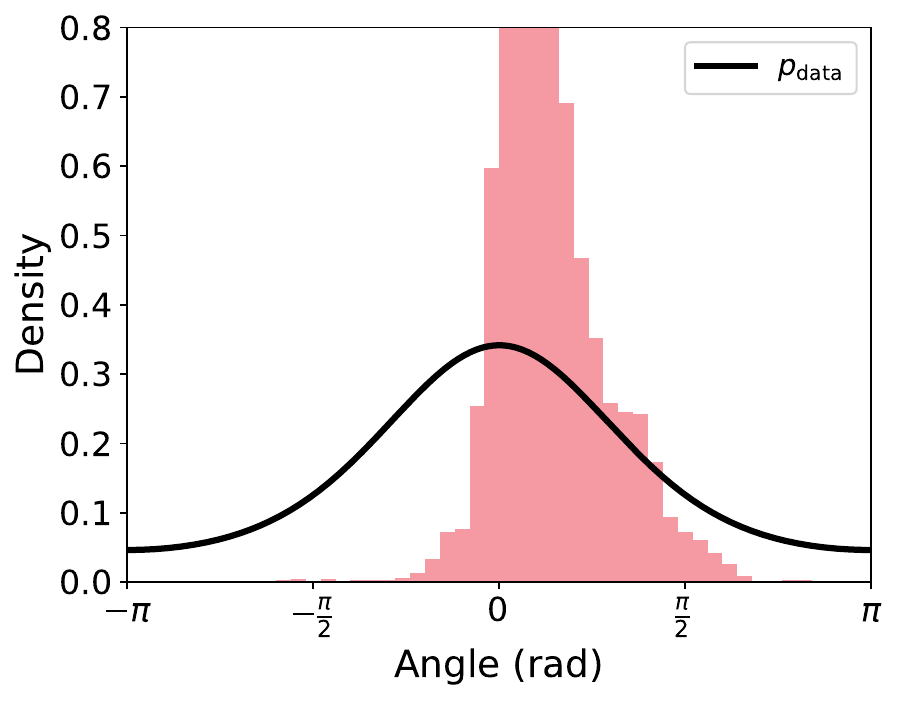}
  \caption{L (circle)}
  \label{fig:b_1_alpha_0}
\end{subfigure}
\hfill
\begin{subfigure}[t]{0.22\textwidth}
  \centering
  \includegraphics[width=\textwidth]{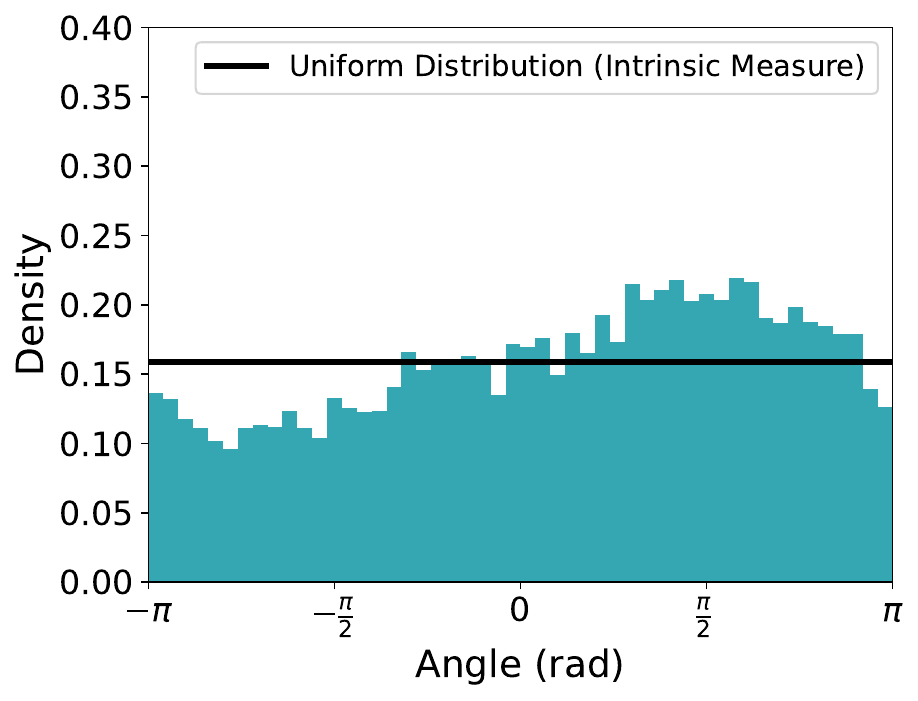}
  \caption{TS-1 (circle)}
  \label{fig:b_1_alpha_1}
\end{subfigure}
\caption{Comparison of stationary sample distributions generated with standard Langevin dynamics (L)
versus our Tempered Score Langevin dynamics~\Cref{eq:modified_langevin} with $\alpha = 1$ (TS-1).
The circle and ellipse correspond to manifolds with $(a,b) = (1,1)$ and $(a,b) = (1,2)$,
respectively.}
\label{fig:ellipse}
\end{figure}

As shown in \Cref{fig:ellipse}, the stationary distribution produced by standard
Langevin dynamics deviates substantially from $\pdata$, even in this simple
elliptical setting, highlighting the difficulty of accurately learning the score
function at small $\sigma$. In contrast, our TS Langevin dynamics reliably
recovers the uniform distribution on the manifold, in agreement with
\Cref{thm:recover_uniform}.

\vspace{-0.5em}
\subsection{Image Generation with Diffusion Models}
\label{sec:exp_diffusion}
\vspace{-0.5em}

\begin{table}[t]
  \centering
  \setlength{\tabcolsep}{4pt}
  \begin{tabular}{lccc ccc cc}
  \toprule
  \textbf{Prompt}
  & \multicolumn{2}{c}{\textbf{Furniture}}
  & \multicolumn{2}{c}{\textbf{Car}}
  & \multicolumn{2}{c}{\textbf{Architecture}} \\
  \cmidrule(lr){2-3} \cmidrule(lr){4-5} \cmidrule(lr){6-7}
  \textbf{Method} & P-sim$\uparrow$ & I-sim$\downarrow$ & P-sim & I-sim & P-sim & I-sim \\
  \midrule
  DDPM          & 29.56 & 80.78 & 26.23 & 87.30 & \textbf{27.36} & 81.53 \\
  PC            & 29.40 & 81.24 & 26.30 & 87.20 & 27.13 & 81.03 \\
  \emph{TS (ours)} & \textbf{30.20} & \textbf{80.76} & \textbf{26.62} & \textbf{87.14} & 27.32 & \textbf{80.76} \\
  \bottomrule
  \end{tabular}
  \caption{Comparison of images generated by DDPM, PC, and TS. The prompts used are
  ``Creative furniture,’’ ``An innovative car design,’’ and ``A creative
  architecture.’’ For PC and TS, the number of corrector steps and
  $\alpha$ (for TS) are tuned.}
  \label{tab:best_result}
  \vspace{-0.5em}
  \end{table}

To validate our theoretical findings in a practical, large-scale setting, we
conducted experiments on image generation. We demonstrate that a one-line
modification to the widely-used Predictor-Corrector (PC) sampling algorithm
\citep{songscore} can enhance both the quality and diversity of images generated
by a pre-trained diffusion model. These experiments serve as a proof of concept,
applying our proposed Tempered Score (TS) method to off-the-shelf diffusion 
models.
Our modification targets the corrector step of the PC algorithm, which uses
Langevin dynamics to refine the sample at each stage of the reverse process. In
our TS method, we scale the unconditioned score prediction by a factor of
$\sigma^{\alpha}$, as motivated by our analysis and discussion in \Cref{sec:bayesian_inverse}.
The standard classifier-free guidance term, i.e., $\nabla v$ in \Cref{eq:modified_langevin_with_guidence},
remains unchanged.
Specifically, we compare Stable Diffusion 1.5~\citep{Rombach_2022_CVPR}
with a DDPM sampler~\citep{ho2020denoising},
DDPM with PC sampler, and DDPM with our TS sampler.

We evaluate the performance using two metrics derived from CLIP scores
\citep{hessel2021clipscore}, which measure the cosine similarity between feature
embeddings. \textbf{Quality}: We use the CLIP Prompt Similarity (P-sim), defined
as the average CLIP score between the generated images and their
corresponding text prompt. A higher P-sim value indicates better alignment
with the prompt and thus higher image quality.
\textbf{Diversity}: We use the CLIP Inter-Image Similarity (I-sim),
which is the average pairwise CLIP score between all images generated
with the same prompt. A lower I-sim value means greater diversity among the
samples.

The experimental results in \Cref{tab:best_result} and
\Cref{tab:corrector_steps} provide empirical validation of our theoretical
framework. Our proposed TS method consistently generates more diverse images
than the DDPM and standard PC baselines across three distinct prompts, while
maintaining very high image quality.  
In particular, \Cref{tab:corrector_steps} shows that, for all numbers of
corrector steps considered, TS outperforms standard PC in nearly every case.
Crucially, these improvements are robust to the choice of $\alpha$ and are not merely the result of a larger tuning budget; as demonstrated in \Cref{tab:corrector_steps}, simply setting $\alpha = 1$ without further tuning is sufficient to consistently enhance both quality and diversity compared to the baseline.
Examples of the generated images by PC and TS are shown in
\Cref{fig:generated-samples}.

\begin{table}[t]
  \centering
  \setlength{\tabcolsep}{3pt}
  \resizebox{0.90\textwidth}{!}{
  \begin{tabular}{ll cc cc cc cc cc}
  \toprule
  \multicolumn{2}{c}{\textbf{Num. Corrector Steps}} & \multicolumn{2}{c}{\textbf{5}} & \multicolumn{2}{c}{\textbf{10}} & \multicolumn{2}{c}{\textbf{15}} & \multicolumn{2}{c}{\textbf{20}} & \multicolumn{2}{c}{\textbf{30}} \\
  \cmidrule(lr){3-4} \cmidrule(lr){5-6} \cmidrule(lr){7-8} \cmidrule(lr){9-10} \cmidrule(lr){11-12}
  \textbf{Prompt} & \textbf{Method} & P-sim$\uparrow$ & I-sim$\downarrow$ & P-sim & I-sim & P-sim & I-sim & P-sim & I-sim & P-sim & I-sim \\
  \midrule
  \textbf{Furniture} & PC       & 29.40 & 81.34 & 29.30 & 81.24 & 29.32 & 81.64 & 28.98 & 81.72 & 28.67 & 82.33 \\
                     & \emph{TS (ours)} & \textbf{29.54} & \textbf{81.11} & \textbf{29.58} & \textbf{80.95} & \textbf{29.68} & \textbf{81.34} & \textbf{29.52} & \textbf{81.15} & \textbf{29.43} & \textbf{81.87} \\
  \midrule
  \textbf{Car} & PC           & 26.20 & 87.20 & 26.30 & 87.57 & 26.24 & 87.98 & 26.26 & \textbf{88.06} & 26.17 & 87.94 \\
               & \emph{TS (ours)}  & \textbf{26.23} & \textbf{87.14} & \textbf{26.37} & \textbf{87.42} & \textbf{26.32} & \textbf{87.88} & \textbf{26.28} & 88.07 & \textbf{26.20} & \textbf{87.87} \\
  \midrule
  \textbf{Architect.} & PC    & 27.13 & 81.83 & 27.13 & 81.81 & 26.92 & 81.64 & 26.87 & 81.60 & 26.60 & 81.03 \\
                        & \emph{TS (ours)} & \textbf{27.23} & \textbf{81.58} & \textbf{27.27} & \textbf{81.57} & \textbf{27.14} & \textbf{81.54} & \textbf{27.06} & \textbf{80.97} & \textbf{26.84} & \textbf{80.76} \\
  \bottomrule
  \end{tabular}
  }
  \caption{Comparison of images generated by PC and TS across different numbers of
  corrector steps. For TS, $\alpha = 1$ is used without further tuning. The prompts are the same as in \Cref{tab:best_result}. }
  \label{tab:corrector_steps}
\end{table}

\begin{figure}[t]
  \centering
  \begingroup
  \setlength{\tabcolsep}{0pt}
  \renewcommand{\arraystretch}{0}

  \newcommand{\imgheight}{1.5cm} %
  \newcommand{\imgdir}{imgs/generated_samples}

  \begin{tabular}{*{8}{c}@{}}
    \includegraphics[height=\imgheight]{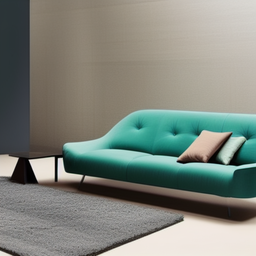} &
    \includegraphics[height=\imgheight]{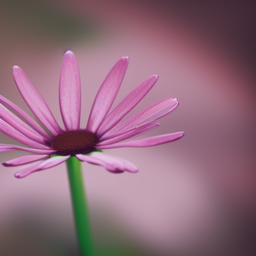} &
    \includegraphics[height=\imgheight]{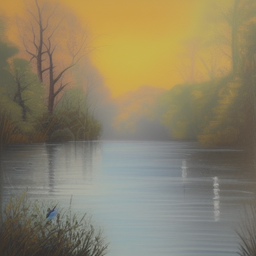} &
    \includegraphics[height=\imgheight]{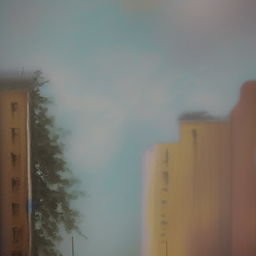} &
    \includegraphics[height=\imgheight]{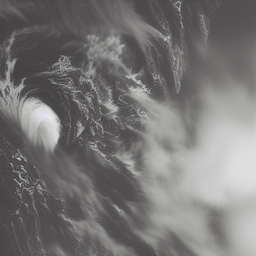} &
    \includegraphics[height=\imgheight]{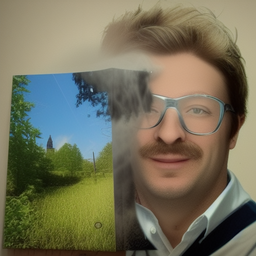} &
    \includegraphics[height=\imgheight]{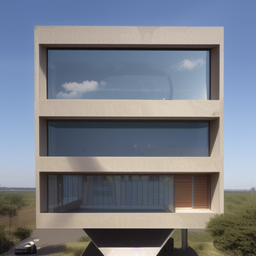} &
    \includegraphics[height=\imgheight]{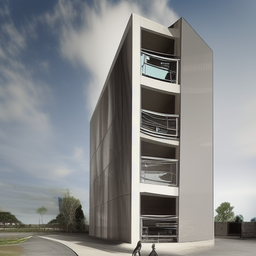}
    \\
    \includegraphics[height=\imgheight]{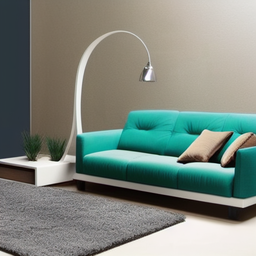} &
    \includegraphics[height=\imgheight]{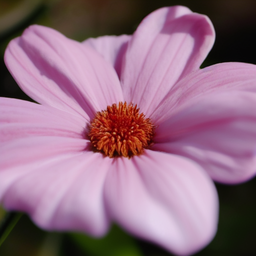} &
    \includegraphics[height=\imgheight]{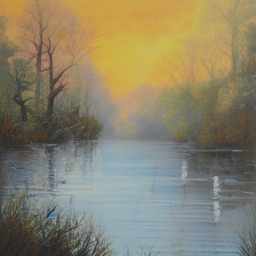} &
    \includegraphics[height=\imgheight]{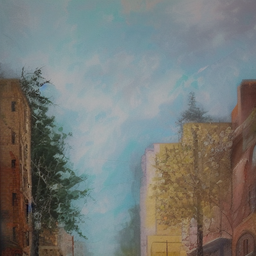} &
    \includegraphics[height=\imgheight]{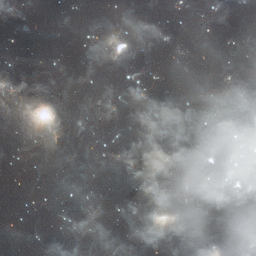} &
    \includegraphics[height=\imgheight]{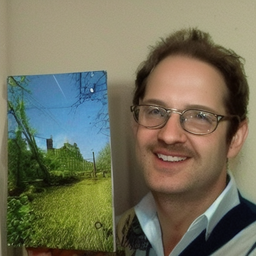} &
    \includegraphics[height=\imgheight]{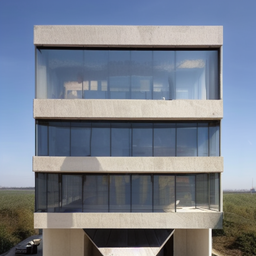} &
    \includegraphics[height=\imgheight]{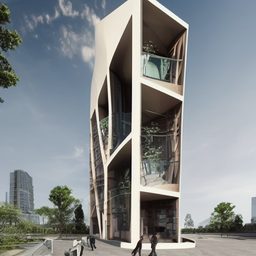}
  \end{tabular}
  \endgroup
  \caption{Top row: PC. Bottom row: \emph{TS (ours)}. Samples in the same column are
  generated using the same prompt, the same number of corrector steps, and the
  same random seed. As shown, TS produces samples that appear more authentic and
  contain richer details. }
  \label{fig:generated-samples}
  \vspace{-0.5em}
\end{figure}

\vspace{-0.5em}
\section{Conclusion}
\vspace{-0.4em}
This paper advocates for a paradigm shift in score-based learning, moving from
the difficult goal of full distributional recovery to a more robust,
geometry-first approach. We demonstrate a fundamental rate separation in the
low-noise limit, where information about the data manifold is encoded at a
significantly stronger scale ($\Theta(\sigma^{-2})$) than details about the
on-manifold distribution ($\Theta(1)$). This finding explains why models often
succeed at capturing the data support even with imperfect score estimates.
Building on this insight, we introduce Tempered Score (TS) Langevin dynamics, a
simple one-line modification that robustly targets the uniform volume measure on
the manifold, tolerating score errors up to $o(\sigma^{-2})$. This geometric
approach not only provides a more stable foundation for Bayesian inverse
problems but also, as shown in our experiments with models like Stable
Diffusion, empirically improves the diversity and fidelity of generated samples.

\textbf{Limitations and future work.}
Key limitations and future directions include:
\begin{enumerate*}[label=\alph*)]
    \item The implications for diffusion models are presently limited: we do not track cumulative error along the sampling trajectory; instead, we analyze a simplified setting that assumes access to the error of the final generated distribution.
    \item Our $L^{\infty}$ score–error assumption could potentially be relaxed to an $L^{2}$ bound, thereby aligning our theoretical framework with practical training objectives like denoising score matching (Fisher divergence) that minimize $L^{2}$ error.
    \item It remains to generalize the rate separation in score estimation into corresponding results on statistical sample complexity.
    \item Our analyses on the uniform sampling are in continuous time; we do not quantify discretization error arising in practical implementations.
    \item Our experiments are preliminary; we have not conducted a large-scale study with state-of-the-art diffusion models.
\end{enumerate*}

\section*{Acknowledgment}
The work is supported by ETH research grant, Swiss National Science Foundation (SNSF) Project Funding No. 200021-207343, and SNSF Starting Grant.

\bibliography{ref}
\bibliographystyle{iclr2026_conference}

\appendix

\crefalias{section}{appendix}
\crefalias{subsection}{appendix}

\section{Additional Notation and Preliminaries}
\label{sec:apx_prelim}

In this section, we provide some notation and preliminaries complementary to
\Cref{sec:prelim}.

We denote by $W_t$ a standard Brownian motion, with its dimension clear from
context. The Gaussian density with mean $\mu$ and covariance $\Sigma$, evaluated
at $x$, is written as $\cN(x \mid \mu, \Sigma)$. The symbol $\ast$ denotes the
convolution operator. We use $\propto$ to indicate proportionality, i.e., that
the left-hand side and right-hand side are equal up to a constant factor.  
For a set $S$, we write $\overline{S}$ for its closure, $\partial S$ for its
boundary, and $S^c$ for its complement. Throughout the paper, by the term
\emph{limiting distribution} or by convergence of a distribution/density
function, we mean convergence of the corresponding measures in the weak sense.

\subsection{The Manifold Hypothesis}
We outline
few notations and standard results from differential geometry.
By the tubular neighborhood
theorem~\citep{milnor1974characteristic,weyl1939volume}, there exists
$\epsilon > 0$ such that the normal tube
\[
T_{\cM}(\epsilon) := \{x \in \bR^d : \operatorname{dist}(x,\cM) < \epsilon\}.
\]
admits local $C^4$ coordinate
\[
\Phi : U \times R \to T_{\cM}(\epsilon), \quad \text{where} \quad U \subset \bR^n, R := \{r \in \bR^{d-n} : \|r\| < \epsilon\},
\]
such that $\Phi$ is a diffeomorphism mapping from local coordinates to ambient Euclidean space.
With this result, we can then work with local coordinates to describe the manifold.
For notational simplicity, we work with a single chart and suppress indices:
$u \in U$ denote tangential coordinates and $r \in R$ denote normal coordinates.
The slice $r = 0$ corresponds to points on $\cM$, and we write
$\Phi(u) := \Phi(u,0)$.
Let $J(u,r)$ denote the Jacobian of $\Phi(u,r)$ with respect to $(u,r)$, i.e., 
$J(u,r) = \partial \Phi(u,r) / \partial(u,r)$.
Furthermore, let $g(u)$ denote the Riemannian metric tensor of the manifold
$\cM$, defined as $g(u) := J(u,0)^\top J(u,0)$. Intuitively, the Riemannian
metric tensor gives a way to measure lengths and angles of the manifold geometry.

\section{Proofs of Main Theorems}
\label{sec:proof_main_theorems}

In this section, we prove the main theorems of the paper. We begin by
developing a general framework for characterizing the limiting distribution
when the density admits a specific form. This framework will then be applied
to establish the results in \Cref{sec:scale}, where such a density form was
assumed.

The results in \Cref{sec:recover_uniform} require a different approach, since
no explicit form of the density is available. In this case, we employ the WKB
approximation to obtain an approximate stationary distribution, which we then
substitute into the general framework to derive the limiting distribution.

\subsection{A General Framework for the Convergence of the Limiting Distribution}
\label{subsec:general_proof}

In this subsection, we will establish a general framework for the limiting
distribution of density proportional to
\begin{align} \label{eq:general_density}
  \exp\left(- \left(f_{\theta}(x)\right)  / \theta\right), \quad
  \text{with} \quad  f_{\theta}(x) = f_0(x) + \theta f_1(x) + \hat f(x, \theta),
\end{align}
where $f_0$'s minimizer is on the manifold $\cM$ and $\hat f(x, \theta)$ is a 
perturbation that is uniformly $o(\theta)$ so that it does not affect the
limiting distribution.
This general result is stated in \Cref{thm:dist_f_theta}.
Our main results fall into this framework by letting $\theta = \sigma^2$
for \Cref{thm:recover_pdata} and $\theta = \sigma^{2-\alpha}$ for \Cref{thm:recover_uniform}.

In all cases the theorems we will prove later, the density will concentrate on
the tubular neighborhood of $M$, i.e., $T_{\cM}(\epsilon)$. Therefore, we will
discuss the lemmas and intermediate results in such a neighborhood and use local
coordinates $(u, r)$. The notations used can be found in \Cref{sec:prelim}.
When we use local coordinates, we assume the discussion is in the closure of $T_{\cM}(\epsilon)$.
We define the local coordinate versions of the functions:
$f_{\theta}(u, r) \coloneqq f_{\theta}(\Phi(u, r))$,
$f_0(u, r) \coloneqq f_0(\Phi(u, r))$, $f_1(u, r) \coloneqq f_1(\Phi(u, r))$,
and $\hat f(u, r, \theta) \coloneqq \hat f(\Phi(u, r), \theta)$.

Our assumptions are stated as follows. 
\begin{assumption} \label{assumption:f}
  We assume that
  \begin{enumerate}
    \item $\cM \subset \bR^d$ is a compact $C^4$ manifold without boundary
      with dimension $n < d$.
    \item $\cM = \argmin_{x \in T_{\cM}(\epsilon)} f_0(x)$. In addition, we assume that
      there exists $0 < \hat \epsilon < \epsilon$ such that
      $\inf_{x \in T_{\cM}(\epsilon) \backslash \overline{T_{\cM}(\hat \epsilon)}} f_0(x) - \min_{x \in T_{\cM}(\epsilon)}f_0(x)$
      is bounded away from zero.
    \item The absolute value of $\hat f(u, r, \theta)$ is $o(\theta)$ as $\theta \to 0$ uniformly for all $u \in U$ and $\norm{r} < \epsilon$.
  \item $f_0 \geq 0$ is $C^3$, $f_1$ is $C^1$, and $f_\theta$ is continuous on coordinates $(u, r)$
    for all $u \in U$ and $\norm{r} \leq \epsilon$, i.e., in the closure of $T_{\cM}(\epsilon)$.
  \item Further, we assume that the smallest eigenvalue of $\frac{\partial^2 f_{0}}{\partial r^2} (u, r)$
    is uniformly bounded away from zero for all $u \in U$ and $\norm{r} < \epsilon$.
  \end{enumerate}
\end{assumption}

\begin{remark}[Compactness of the manifold implies boundedness of gradients.] \label{remark:boundedness_in_compact_manifold}
Consider $f \in C^k(\overline{T_{\cM}(\epsilon)})$. In local coordinates $(u,r)$ induced by a tubular
atlas, we write $f(u,r) \coloneqq f(\Phi(u,r))$.
Since $\cM$ is compact,
one can choose a finite atlas with precompact coordinate domains.
Let the cover be $\{U_i\}$.
By the Shrinking Lemma (\citet[Theorem~32.3]{munkres2000topology} combined with 
\citet[Theorem~15.10]{willard2012general}), there exist open subsets $\{V_i\}$ with
$\overline{V_i} \subset U_i$ such that $\{V_i\}$ still forms a cover.
We use these $\{V_i\}$ as the new atlas.
The transition maps
$\Phi$ and their derivatives are then bounded on these sets (since $\overline{V_i}$
is compact), and by the chain rule the same holds for $f(u,r)$ and its derivatives up to order $k$.
Thus, throughout our arguments we may freely assume uniform boundedness
of such derivatives without loss of generality.
The same reasoning applies to $\pdata$, we can use the same constructed atlas
such that $\pdata$ is uniformly lower and upper bounded, and gradients of
$\pdata$ are uniformly upper bounded.
\end{remark}

During our proofs, we will frequently use Laplace's method for integrals.
We adapt the error estimate from \citet{lapinski2019multivariate} as follows.
\begin{corollary}[{Theorem~2 of \citet{lapinski2019multivariate}}]
  \label[corollary]{corollary:laplace}
  Let $\Omega\subset\bR^{m}$ be an open set and let $\Omega'\subset\Omega$ be a closed ball.
  Let $c_{1} \coloneqq \vol(\Omega')$.
  Let $F,g:\Omega\to\bR$ with the following assumptions:
  \begin{enumerate}
    \item $F|_{\Omega'}\in C^{3}(\Omega')$ and $F\ge 0$ on $\Omega$.
    There is a unique minimizer $x^{*}\in\operatorname{int}(\Omega')$ of $F$ on $\Omega$.
    Define
    \[
      m_{1}\;:=\;\inf_{x\in\Omega\setminus\Omega'}\big\{F(x)-F(x^{*})\big\}>0,
      \qquad
      m_{2}\;:=\;\inf_{x\in\Omega'}\lambda_{\min}\big(\nabla^{2}F(x)\big)>0.
    \]
    Let
    \[
      c_{2}\;:=\;\sup_{x\in\Omega'}\|\nabla^{2}F(x)\|,
      \qquad
      c_{3}\;:=\;\sup_{x\in\Omega'}\|\nabla^{3}F(x)\|.
    \]
    \item $g|_{\Omega'}\in C^{1}(\Omega')$ and $\int_{\Omega}|g(x)|\,dx<\infty$.
    Let
    \[
      c_{4}\;:=\;\sup_{x\in\Omega'}|g(x)|,\qquad
      c_{5}\;:=\;\sup_{x\in\Omega'}\|\nabla g(x)\|,\qquad
      c_{6}\;:=\;\int_{\Omega}|g(x)|\,dx.
    \]
  \end{enumerate}
  Then, for every $\theta>0$,
  \[
    \int_{\Omega} g(x)\,e^{-F(x)/\theta}\,dx
    \;=\;
    \exp(- F(x^*) / \theta) \frac{(2 \pi \theta)^{m/2}}{\sqrt{\abs*{\nabla^2 F(x^*)}}}
    \left(g(x^*) + h(\theta) \right),
  \]
  where $\abs{h(\theta)}$ can be upper bounded by a function of $\left(c_{1},\dots,c_{6},m_{1},m_{2}\right)$.
  Moreover, $h(\theta)=O(\sqrt{\theta})$ as $\theta\to0$.
  The $O(\sqrt{\theta})$ is \emph{uniform} over any class of pairs $(F,g)$ for which
  $c_{1},\dots,c_{6}$ are bounded above and $m_{1},m_{2}$ are bounded below by
  strictly positive constants uniformly over the class.
\end{corollary}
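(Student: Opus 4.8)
Since the estimate is, up to reformulation, Theorem~2 of \citet{lapinski2019multivariate}, the plan is to run the standard localize--rescale argument for Laplace's method while tracking every error constant explicitly, which is precisely what yields the stated uniformity. Throughout write $A := \nabla^2 F(x^*)$, so that $m_2 \le \lambda_{\min}(A)$, $\|A\| \le c_2$, and hence $m_2^{m} \le \det A \le c_2^{m}$. First I would discard the far field: splitting $\int_{\Omega} g\,e^{-F/\theta}\,dx = \int_{\Omega'} + \int_{\Omega\setminus\Omega'}$, the hypothesis $F \ge F(x^*) + m_1$ on $\Omega\setminus\Omega'$ gives $\bigl|\int_{\Omega\setminus\Omega'} g\,e^{-F/\theta}\,dx\bigr| \le c_6\,e^{-m_1/\theta}\,e^{-F(x^*)/\theta}$, which --- relative to the target order $e^{-F(x^*)/\theta}(2\pi\theta)^{m/2}/\sqrt{\det A}$, a quantity pinched between fixed multiples of $\theta^{m/2}e^{-F(x^*)/\theta}$ by the eigenvalue bounds --- is $o(\theta^{k})$ for every $k$ and is absorbed into $h(\theta)$.

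For the main piece $\int_{\Omega'}$, I would exploit that $\Omega'$ is a \emph{closed ball}, hence convex, and that $x^* \in \operatorname{int}\Omega'$ is the minimizer: for every $x \in \Omega'$ the segment $[x^*,x]$ lies in $\Omega'$, so the integral form of Taylor's theorem together with $\lambda_{\min}(\nabla^2 F) \ge m_2$ on $\Omega'$ gives the clean lower bound $F(x) - F(x^*) \ge \tfrac{m_2}{2}\|x-x^*\|^2$, while simultaneously $F(x) - F(x^*) = \tfrac12 (x-x^*)^{\top} A (x-x^*) + R(x)$ with $|R(x)| \le \tfrac{c_3}{6}\|x-x^*\|^{3}$ and $|g(x)-g(x^*)| \le c_5\|x-x^*\|$. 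Substituting $x = x^* + \sqrt\theta\,y$ and writing $D_\theta := (\Omega'-x^*)/\sqrt\theta$,
\[
\int_{\Omega'} g\,e^{-F/\theta}\,dx = e^{-F(x^*)/\theta}\,\theta^{m/2}\!\int_{D_\theta} g(x^*+\sqrt\theta\,y)\,e^{-a_\theta(y)}\,dy,\qquad a_\theta(y) := \tfrac1\theta\bigl(F(x^*+\sqrt\theta\,y)-F(x^*)\bigr),
\]
and on $D_\theta$ the integrand is dominated by the $\theta$-independent envelope $c_4\,e^{-(m_2/2)\|y\|^{2}}$. Setting $b(y) := \tfrac12 y^{\top}Ay$, I would compare the $y$-integral with $g(x^*)\int_{\bR^m} e^{-b(y)}\,dy = g(x^*)(2\pi)^{m/2}/\sqrt{\det A}$ by splitting the difference into (i) $\int_{D_\theta}[g(x^*+\sqrt\theta\,y)-g(x^*)]e^{-a_\theta(y)}\,dy$, (ii) $g(x^*)\int_{D_\theta}[e^{-a_\theta(y)}-e^{-b(y)}]\,dy$, and (iii) $-g(x^*)\int_{\bR^m\setminus D_\theta} e^{-b(y)}\,dy$. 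Term (i) is at most $c_5\sqrt\theta\int_{\bR^m}\|y\|\,e^{-(m_2/2)\|y\|^2}\,dy = O(\sqrt\theta)$; for (ii), the elementary bound $|e^{-a}-e^{-b}| \le e^{-\min(a,b)}|a-b|$ with $\min(a_\theta(y),b(y)) \ge \tfrac{m_2}{2}\|y\|^{2}$ on $D_\theta$ and $|a_\theta(y)-b(y)| = \theta^{-1}|R(x^*+\sqrt\theta\,y)| \le \tfrac{c_3}{6}\sqrt\theta\,\|y\|^{3}$ gives $|\text{(ii)}| \le c_4\tfrac{c_3}{6}\sqrt\theta\int_{\bR^m}\|y\|^{3}e^{-(m_2/2)\|y\|^2}\,dy = O(\sqrt\theta)$; all the Gaussian moments appearing are explicit in $m$ and $m_2$.

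The genuinely delicate point --- and what I expect to be the main obstacle --- is term (iii): a crude bound on $\int_{\bR^m\setminus D_\theta} e^{-b(y)}\,dy$ involves $\operatorname{dist}(x^*,\partial\Omega')$, which is not among the prescribed constants and could a priori be tiny (if $x^*$ hugs the boundary), destroying uniformity. The resolution is to show this distance is bounded below in terms of the listed constants: letting $x_\partial \in \partial\Omega'$ be nearest to $x^*$ and $\eta := \|x_\partial - x^*\|$, continuity of $F$ up to $\partial\Omega'$ forces $F(x_\partial) - F(x^*) \ge m_1$, whereas the second-order Taylor bound gives $F(x_\partial) - F(x^*) \le \tfrac12 c_2\eta^{2} + \tfrac16 c_3\eta^{3}$; comparing the two yields $\eta \ge \eta_0(m_1,c_2,c_3) > 0$. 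Hence $\bR^m\setminus D_\theta \subseteq \{\|y\| \ge \eta_0/\sqrt\theta\}$ and term (iii) is $\le |g(x^*)|\int_{\|y\|\ge \eta_0/\sqrt\theta} e^{-(m_2/2)\|y\|^2}\,dy$, which is super-polynomially small in $1/\theta$ and uniform. Assembling (i)--(iii) gives $\int_{D_\theta} g(x^*+\sqrt\theta\,y)\,e^{-a_\theta(y)}\,dy = g(x^*)(2\pi)^{m/2}/\sqrt{\det A} + O(\sqrt\theta)$; multiplying by $e^{-F(x^*)/\theta}\theta^{m/2}$, factoring out $(2\pi\theta)^{m/2}/\sqrt{\det A}$, and re-adding the negligible far field produces exactly the claimed identity with $h(\theta)=O(\sqrt\theta)$. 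Because every $O(\cdot)$ constant produced along the way is a function of $(c_1,\dots,c_6,m_1,m_2)$ and $m$ alone, the $O(\sqrt\theta)$ is uniform over any class on which these are controlled. What remains is the routine bookkeeping of the Gaussian moment integrals and the elementary constant-tracking, which I would not spell out.
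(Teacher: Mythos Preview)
Your argument is a correct, self-contained execution of the localize--rescale Laplace method with all constants tracked, which is precisely what the corollary claims. The paper itself does not prove this statement at all: its entire proof is the one line ``The result follows directly from \citet[Theorem~2]{lapinski2019multivariate}.'' So your route is genuinely different only in the sense that you actually supply the argument rather than deferring to the cited source; what you wrote is in effect a reconstruction of that theorem's proof. This buys self-containment and makes the uniformity in $(c_1,\dots,c_6,m_1,m_2)$ transparent, at the cost of length.

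One small technical point worth flagging, since you yourself identified term~(iii) as the delicate step: to pass from $\inf_{\Omega\setminus\Omega'}(F-F(x^*))\ge m_1$ to $F(x_\partial)-F(x^*)\ge m_1$ at a boundary point $x_\partial\in\partial\Omega'$ by approaching from the exterior, you need $F$ continuous on $\Omega$, not merely $F|_{\Omega'}\in C^3(\Omega')$ as the corollary literally states. This is harmless in practice --- the original result in \citet{lapinski2019multivariate} carries such regularity, and every invocation of the corollary in this paper (in \Cref{lemma:limit_f_theta} and \Cref{lemma:limit_V}) has $F$ smooth on all of $\Omega$ --- but strictly from the hypotheses written here the step is not justified. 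Everything else (the Gaussian envelope, the splitting into (i)--(iii), the far-field estimate, and the lower bound $\eta\ge\eta_0(m_1,c_2,c_3)$ via the Taylor upper bound) is clean.
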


\begin{proof}
  The result follows directly from \citet[Theorem~2]{lapinski2019multivariate}.
\end{proof}

To show the convergence of the distribution to a distribution on the manifold,
a key step is to integrate out the normal direction so as to obtain a
distribution on $u$, such as what \citet{hwang1980laplace} did.
The following lemma proves Laplace's type of result for integrating out $r$.
\begin{lemma} \label[lemma]{lemma:limit_f_theta}
  Assume \Cref{assumption:f}, and let $h(x): \bR^d \to \bR$  be
  $C^1$ and uniformly bounded in $T_{\cM}(\epsilon)$. Define  $h(u, r): = h(\Phi(u, r))$.
  Then we have
  \begin{align*}
    &\fakeeq \int_{\norm{r} < \epsilon} \exp\left( - \frac{f_{\theta}(u, r)}{\theta} \right) h(u, r) dr \\
    &= \exp\left( - \frac{f_0(u, 0)}{\theta} \right) \exp\left(-f_1(u, 0)\right) \frac{\left(2 \pi \theta\right)^{(d-n)/2}}{\sqrt{\abs*{ \frac{\partial^2 f_{0}}{\partial r^2} (u, 0) }}} 
    \left(h(u, 0) + o(1)\right),
  \end{align*}
  where the $o(1)$ term is uniform for $u$.
\end{lemma}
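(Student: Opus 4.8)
The plan is to reduce the statement to a single application of Laplace's method in the normal variable $r$, namely \Cref{corollary:laplace}, after stripping away the factors that are constant or asymptotically negligible in $r$. Fix $u \in U$ and view the left-hand side as an integral over $r$ in the ball $\{\|r\| < \epsilon\}$. Using the decomposition $f_{\theta}(u,r) = f_0(u,r) + \theta f_1(u,r) + \hat f(u,r,\theta)$ from \eqref{eq:general_density}, I would factor
\[
  \exp\!\left(-\tfrac{f_{\theta}(u,r)}{\theta}\right)
  = \exp\!\left(-\tfrac{f_0(u,r)}{\theta}\right)\,
    \exp\!\left(-f_1(u,r)\right)\,
    \exp\!\left(-\tfrac{\hat f(u,r,\theta)}{\theta}\right).
\]
By \Cref{assumption:f}(3), $\hat f(u,r,\theta)/\theta \to 0$ uniformly over $(u,r)$, hence $\exp(-\hat f/\theta) = 1 + \rho(u,r,\theta)$ with $\sup_{u,\,\|r\|<\epsilon}|\rho(u,r,\theta)| = o(1)$. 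Setting $g(u,r) := \exp(-f_1(u,r))\,h(u,r)$ --- which is $C^1$ in $r$ and, by \Cref{remark:boundedness_in_compact_manifold} together with the hypothesis that $h$ is uniformly bounded, has $g$ and $\partial_r g$ uniformly bounded --- the integral splits as $\int \exp(-f_0/\theta)\,g\,dr + \int \exp(-f_0/\theta)\,g\,\rho\,dr$.

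For the main term I would invoke \Cref{corollary:laplace} with $m = d-n$, $\Omega = \{\|r\| < \epsilon\}$, $\Omega' = \{\|r\| \le \hat\epsilon\}$ ($\hat\epsilon$ from \Cref{assumption:f}(2)), $F(r) = f_0(u,r)$, and integrand weight $g(u,\cdot)$. The hypotheses hold: $f_0 \ge 0$ and $f_0 \in C^3$ by \Cref{assumption:f}(4); the unique minimizer of $f_0(u,\cdot)$ on $\Omega$ is $r^* = 0$ since $\cM = \argmin f_0$ by \Cref{assumption:f}(2), so $0 \in \operatorname{int}(\Omega')$; the well-separation constant $m_1 = \inf_{\hat\epsilon \le \|r\| < \epsilon}(f_0(u,r) - f_0(u,0))$ is bounded below uniformly in $u$ by \Cref{assumption:f}(2); the curvature constant $m_2 = \inf_{\|r\| \le \hat\epsilon}\lambda_{\min}\!\big(\tfrac{\partial^2 f_0}{\partial r^2}(u,r)\big)$ is bounded below uniformly by \Cref{assumption:f}(5); and $c_1,\dots,c_6$ are bounded above uniformly in $u$ by \Cref{remark:boundedness_in_compact_manifold} (for the $f_0$-derivatives) and the uniform bounds on $f_1$ and $h$ (for $g$ and $\partial_r g$). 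Since $\nabla^2 F(r^*) = \tfrac{\partial^2 f_0}{\partial r^2}(u,0)$ and $g(u,0) = \exp(-f_1(u,0))\,h(u,0)$, the corollary yields exactly the claimed leading term, with additive error $O(\sqrt\theta)$ that is uniform in $u$.

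It remains to absorb the $\rho$-remainder. From \Cref{assumption:f}(5) and a second-order Taylor bound one has $f_0(u,r) \ge f_0(u,0) + c\|r\|^2$ on $\Omega'$ for a uniform $c > 0$, while on $\{\hat\epsilon \le \|r\| < \epsilon\}$ the factor $\exp(-f_0/\theta)$ is exponentially smaller than $\exp(-f_0(u,0)/\theta)$ by \Cref{assumption:f}(2); combined with $|g| \le C$ uniformly, this gives
\[
  \left|\int_{\|r\|<\epsilon} \exp\!\left(-\tfrac{f_0(u,r)}{\theta}\right) g(u,r)\,\rho(u,r,\theta)\,dr\right|
  \;\le\; C\,\Big(\sup_{u,r}|\rho|\Big)\,\exp\!\left(-\tfrac{f_0(u,0)}{\theta}\right)\!\left(\int_{\bR^{d-n}}\! e^{-c\|r\|^2/\theta}\,dr + o(\theta^{(d-n)/2})\right),
\]
which is $o(1)\cdot\exp(-f_0(u,0)/\theta)\,\Theta(\theta^{(d-n)/2})$, i.e.\ $o(1)$ times the leading scale $\exp(-f_0(u,0)/\theta)(2\pi\theta)^{(d-n)/2}$. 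Dividing both sides of the identity by this scale, the Laplace error $O(\sqrt\theta)$ and the remainder $o(1)$ merge into the single uniform $o(1)$ asserted in the statement. The main obstacle throughout is the bookkeeping of \emph{uniformity in $u$}: every quantity passed to \Cref{corollary:laplace} --- the spectral gap $m_2$, the separation $m_1$, and the derivative bounds $c_2,\dots,c_5$ --- must be controlled simultaneously for all $u$, which is exactly where \Cref{assumption:f}(2,4,5) and the finite precompact atlas of \Cref{remark:boundedness_in_compact_manifold} enter; everything else is routine Laplace-method estimation.
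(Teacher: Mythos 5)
Your proof is correct and takes essentially the same route as the paper: factor out $\exp(-f_1)\exp(-\hat f/\theta)$, split into a main term handled by \Cref{corollary:laplace} (with $F=f_0(u,\cdot)$, $g=\exp(-f_1)h$, $\Omega'=\{\|r\|\le\hat\epsilon\}$) plus a remainder, and bound the remainder by $o(1)$ times the leading scale. The only cosmetic difference is that you bound the $\rho$-remainder directly via a quadratic lower bound on $f_0$ and a Gaussian integral, whereas the paper bounds it by a second invocation of \Cref{corollary:laplace} applied to $\int \exp(-f_0/\theta)\exp(-f_1)\,dr$; both are the same Laplace-type estimate.
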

\begin{proof}
  We have that
  \begin{align*}
    &\fakeeq \int_{\norm{r} < \epsilon} \exp\left( - \frac{f_{\theta}(u, r)}{\theta} \right) h(u, r) dr \\
    &= \int_{\norm{r} < \epsilon} \exp\left( - \frac{f_0(u, r)}{\theta} \right) \exp\left(-f_1(u, r)\right) h(u, r) 
    \left(\exp\left(- \frac{\hat f\left(u, r, \theta\right)}{\theta}\right) \right)dr \\
    &= \int_{\norm{r} < \epsilon} \exp\left( - \frac{f_0(u, r)}{\theta} \right) \exp\left(-f_1(u, r)\right) h(u, r) dr + \\
    &\fakeeq \int_{\norm{r} < \epsilon} \exp\left( - \frac{f_0(u, r)}{\theta} \right) \exp\left(-f_1(u, r)\right) h(u, r)
    \left(\exp\left(- \frac{\hat f\left(u, r, \theta\right)}{\theta}\right) - 1 \right) dr.
  \end{align*}
  For the first term, we can directly apply \Cref{corollary:laplace}
  with $F(r) = f_0(u, r)$, $g(r) = \exp(-f_1(u, r))h(u,r)$, and
  $\Omega'$ being the ball $\{r \mid \norm{r} \leq \hat \epsilon \}$.
  Define
  \begin{align*}
    J = \exp\left( - \frac{f_0(u, 0)}{\theta} \right) \exp\left(-f_1(u, 0)\right) \frac{\left(2 \pi \theta\right)^{(d-n)/2}}{\sqrt{\abs*{ \frac{\partial^2 f_{0}}{\partial r^2} (u, 0) }}}.
  \end{align*}
  The first term can be approximated as $J \left(h(u, 0) + o(1)\right)$.
  The boundedness of the quantities in \Cref{corollary:laplace} will be discussed later.
  The second term can be upper bounded by
  \begin{align*}
    &\fakeeq \sup_r \abs{h(u, r)} \cdot \sup_{r} \abs*{\exp\left(-\frac{\hat f(u, r, \theta)}{\theta} \right) - 1} \int_{\norm{r} < \epsilon} \exp\left( - \frac{f_0(u, r)}{\theta} \right) \exp\left(-f_1(u, r)\right) dr \\
    &= o(1) J(1 + o(1)) = o(1) J,
  \end{align*}
  where we used \Cref{corollary:laplace} for the integral.
  The lower bound can be obtained similarly.
  The result follows.

  Regarding the uniform boundedness of the quantities in \Cref{corollary:laplace},
  $\{c\}_1^5$ is uniformly bounded by the compactness of the manifold.
  The constant $c_6$ is uniformly bounded by our assumption on $h$.
  The uniform lower bounds of $m_1$ and $m_2$ is guaranteed by \Cref{assumption:f}.
\end{proof}

Next, we will prove that the support of the limiting distribution will concentrate
on the minimizers of the leading term.
Previously, we considered $f_{\theta}$ consisting of $f_0 + \Theta(\theta) + o(\theta)$.
Next, we will show that as long as $f_{\theta}$ is $f_0 + o(1)$, the concentration
on $f_0$'s minimizers will happen.

\begin{lemma} \label[lemma]{lemma:limit_support}
Let $f_{\theta}(x) = f_0(x) + \tilde{f}(x, \theta)$, such that $\exp(- f_{\theta}(x) / \theta)$
is a normalized density function on $\bR^d$.
Suppose $\cM$ is a connected and compact $C^4$ manifold without boundary. Assume that:
\begin{enumerate}
  \item $f_0(x)$ is continuous with $\argmin_{x \in \overline{T_{\cM}(\epsilon)}} f_0(x) = \cM$ and
    $\min_{x \in x \in \overline{T_{\cM}(\epsilon)}} f_0(x) = 0$.
  \item $\tilde f(x, \theta)$ is continuous and uniformly $o(1)$ as $\theta \to 0$
    for all $x \in \overline{T_{\cM}(\epsilon)}$.
  \item The density concentrates in $T_{\cM}(\epsilon)$, i.e.,
  \[
    \lim_{\theta \to 0} \int_{T_{\cM}(\epsilon)} \exp\left(-\frac{f_{\theta}(x)}{\theta}\right) dx = 1.
  \]
\end{enumerate}
For any $\eta > 0$, define the set $C_{\eta} = \{x \mid f_0(x) > \eta\}$. Then,
\begin{align*}
  \int_{C_{\eta} \cup T_{\cM}(\epsilon)^c} \exp(-f_{\theta}(x)/\theta) \, dx \to 0 \quad \text{as} \quad \theta \to 0.
\end{align*}
If in addition, $\exp(-f_{\theta}(x)/\theta)$ converges weakly to a distribution as $\theta \to 0$, the support of the limiting distribution is contained in $\cM$.
\end{lemma}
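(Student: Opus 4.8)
The plan is a Laplace-type concentration estimate for the first assertion, followed by a localization argument via the Portmanteau theorem for the support assertion. For the first assertion, I would decompose the integration domain into the two disjoint pieces $T_{\cM}(\epsilon)^c$ and $C_\eta \cap T_{\cM}(\epsilon)$ (note $C_\eta \setminus T_{\cM}(\epsilon) \subseteq T_{\cM}(\epsilon)^c$), so that
\[
  \int_{C_\eta \cup T_{\cM}(\epsilon)^c} e^{-f_\theta(x)/\theta}\,dx
  = \int_{T_{\cM}(\epsilon)^c} e^{-f_\theta(x)/\theta}\,dx
  + \int_{C_\eta \cap T_{\cM}(\epsilon)} e^{-f_\theta(x)/\theta}\,dx .
\]
The first term equals $1 - \int_{T_{\cM}(\epsilon)} e^{-f_\theta(x)/\theta}\,dx$ since the density is normalized on $\bR^d$, and it vanishes as $\theta \to 0$ by the third hypothesis. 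For the second term I would use the uniform bound $\sup_{x \in \overline{T_{\cM}(\epsilon)}}|\tilde f(x,\theta)| = o(1)$ to pick $\theta_0$ with this supremum below $\eta/2$ for $\theta < \theta_0$; then on $C_\eta \cap T_{\cM}(\epsilon)$ one has $f_\theta(x) = f_0(x) + \tilde f(x,\theta) > \eta - \eta/2 = \eta/2$, so the integrand is at most $e^{-\eta/(2\theta)}$. Since $\cM$ is compact, $T_{\cM}(\epsilon)$ is bounded with $\vol(T_{\cM}(\epsilon)) < \infty$, hence the second term is bounded by $\vol(T_{\cM}(\epsilon))\, e^{-\eta/(2\theta)} \to 0$. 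This proves the first assertion.

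For the support assertion, write $\nu$ for the weak limit and $\mu_\theta$ for the probability measure with density $e^{-f_\theta/\theta}$. It suffices to exhibit, for every $x_0 \notin \cM$, an open ball $B \ni x_0$ with $\nu(B) = 0$: since $\bR^d$ is Lindelöf, the union $W$ of all open $\nu$-null sets is $\nu$-null, and $W \supseteq \cM^c$, so $\nu(\cM) = 1$ and $\operatorname{supp}(\nu) \subseteq \overline{\cM} = \cM$. Given $x_0 \notin \cM$, there are two cases. If $x_0 \notin \overline{T_{\cM}(\epsilon)}$, pick $B \ni x_0$ with $B \subseteq T_{\cM}(\epsilon)^c$; the Portmanteau theorem gives $\nu(B) \le \liminf_{\theta \to 0}\mu_\theta(B) \le \liminf_{\theta \to 0}\mu_\theta(T_{\cM}(\epsilon)^c) = 0$, the last equality by the third hypothesis. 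If instead $x_0 \in \overline{T_{\cM}(\epsilon)}$, then $\eta_0 := f_0(x_0) > 0$ by the first hypothesis, and continuity of $f_0$ yields $B \ni x_0$ with $f_0 > \eta_0/2$ on $B$, i.e.\ $B \subseteq C_{\eta_0/2}$, whence $\nu(B) \le \liminf_{\theta \to 0}\mu_\theta(C_{\eta_0/2}) = 0$ by the first assertion.

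The argument is largely routine; the one point requiring care is that each $\mu_\theta$ is absolutely continuous, so $\mu_\theta(\cM^c) = 1$ for every $\theta$ and one cannot argue directly that $\mu_\theta(\cM^c) \to 0$. The resolution is to argue locally and to exploit that the first assertion is available for \emph{every} $\eta > 0$: a point off $\cM$ but arbitrarily close to it is still separated from $\cM$ in the value of $f_0$, and weak convergence supplies exactly the one-sided control $\nu(B) \le \liminf_{\theta \to 0}\mu_\theta(B)$ on open sets that this requires. I would also spell out that $\bR^d \setminus \cM \subseteq \{f_0 > 0\}\cup T_{\cM}(\epsilon)^c$, which is what makes the two-case split exhaustive, including boundary points of the tube.
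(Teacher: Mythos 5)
Your proof is correct and follows essentially the same route as the paper's: the first assertion is handled by the identical split $C_\eta \cup T_{\cM}(\epsilon)^c = (C_\eta \cap T_{\cM}(\epsilon)) \cup T_{\cM}(\epsilon)^c$ together with a uniform $|\tilde f| < \eta/2$ bound, and the second assertion by applying Portmanteau to the open sets $C_\eta$ and (the complement of) the tube. The only cosmetic difference is in how countability is packaged in part two: the paper writes $\cM^c = \bigcup_{m} C_{1/m} \cup \overline{T_{\cM}(\epsilon)}^c$ and uses countable subadditivity directly, whereas you argue pointwise and invoke Lindelöf to glue the local open null neighborhoods; both are standard and equivalent.
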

\begin{proof}
  Since we have that $\int_{T_{\cM}(\epsilon)} \exp(-f_{\theta}(x)/\theta) dx \to 1$,
  for the first result, it suffices to show that $\int_{T_{\cM}(\epsilon) \cap C_{\eta}} \exp(-f_{\theta}(x)/\theta) dx \to 0$.
  According to the assumptions, we have that
  for any $\delta$, $\exists \theta_0$, such that $\forall \theta < \theta_0$,
  $\abs{\tilde f(x, \theta)} < \delta$.
  Therefore, we have
  \begin{align*}
    \int_{T_{\cM}(\epsilon) \cap C_{\eta}} \exp(-f_{\theta}(x)/\theta) dx
    \leq \int_{T_{\cM}(\epsilon) \cap C_{\eta}} \exp((-\eta + \delta)/\theta) dx
    \leq \vol(T_{\cM}(\epsilon)) \exp((-\eta + \delta)/\theta).
  \end{align*}
  We choose $\delta = \eta/2$, then the right-hand side goes to zero as $\theta \to 0$.

  Let the limiting measure be $P$, and $P_{\theta}$ be the probability measure
  corresponding to the density $\exp(-f_{\theta}(x)/\theta)$.
  Since $C_{\eta}$ is an open set, we have that
  \begin{align*}
    P(C_{\eta}) \leq \liminf_{\theta \to 0} P_{\theta}(C_{\eta}) = 0.
  \end{align*}
  We also have that
  \begin{align*}
    P\left(\overline{T_{\cM}(\epsilon)}^c\right) \leq 
    \liminf_{\theta \to 0} P_{\theta}\left(\overline{T_{\cM}(\epsilon)}^c\right)
    \leq \liminf_{\theta \to 0} P_{\theta}\left(T_{\cM}(\epsilon)^c\right) = 0.
  \end{align*}
  Denote $C \coloneqq \cM^c$.
  We have that $C = \cup_{m = 1}^{\infty} C_{1/m} \cup \overline{T_{\cM}(\epsilon)}^c$.
  Then we have
  \begin{align*}
    P(C) \leq \sum_{m=1}^{\infty} P(C_{1/m}) + P\left(\overline{T_{\cM}(\epsilon)}^c\right) = 0.
  \end{align*}
  which concludes the proof.
\end{proof}

\begin{theorem} \label{thm:dist_f_theta}
  Assume \Cref{assumption:f}.
  Define
  \begin{align*}
    \pi_{\theta} (x) \propto \exp\left( - \frac{f_{\theta}(x)}{\theta} \right),
  \end{align*}
  Assume that $1 - \int_{x \in T_{\cM}(\epsilon)} \pi_{\theta}(x) dx \to 0$
   as $\theta \to 0$.
  Then we have that
  as $\theta \to 0$, $\pi_{\theta}$ converges weakly to the following distribution:
  \begin{align*}
    \pi(u) = \frac{\exp(-f_1(u, 0))\abs*{\frac{\partial^2 f_0(u, 0)}{\partial r^2}}^{-1/2} d \cM(u) / du}{\int_{\cM} \exp(-f_1(u, 0)) \abs*{\frac{\partial^2 f_0(u, 0)}{\partial r^2}}^{-1/2} d \cM(u) / du},
  \end{align*}
  where $d \cM$ is the intrinsic measure on the manifold $\cM$,
  i.e., $d \cM(u) = \abs{g(u)}^{1/2} du$, and $du$ is the Lebesgue measure on the local parameterization domain $U$.
\end{theorem}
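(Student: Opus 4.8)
The plan is to prove weak convergence by computing $\int h\,d\pi_\theta$ for bounded $C^1$ test functions $h:\bR^d\to\bR$ and showing it converges to $\int h\,d\pi$; since the family $\{\pi_\theta\}$ is tight (by hypothesis at least a $1-o(1)$ fraction of the mass lies in the fixed compact set $\overline{T_{\cM}(\epsilon)}$) and the bounded $C^1$ functions contain the convergence-determining class $C_c^\infty$, this upgrades to $\pi_\theta\Rightarrow\pi$. Writing $\pi_\theta=e^{-f_\theta/\theta}/Z_\theta$ with $Z_\theta=\int_{\bR^d}e^{-f_\theta/\theta}\,dx$, the standing assumption $\int_{T_{\cM}(\epsilon)}\pi_\theta\,dx\to1$ makes the portion of each integral over $T_{\cM}(\epsilon)^c$ negligible, so it suffices to analyze
\[
  R_\theta\;\coloneqq\;\frac{\int_{T_{\cM}(\epsilon)}h(x)\,e^{-f_\theta(x)/\theta}\,dx}{\int_{T_{\cM}(\epsilon)}e^{-f_\theta(x)/\theta}\,dx}.
\]
On $T_{\cM}(\epsilon)$ I would pass to the tubular coordinates $x=\Phi(u,r)$; the change of variables introduces the Jacobian factor $|\det J(u,r)|$, and by construction of the tubular chart one has $|\det J(u,0)|=\sqrt{\det g(u)}$, i.e. $|\det J(u,0)|\,du=d\cM(u)$.

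Next I would apply \Cref{lemma:limit_f_theta} to the inner, normal-direction integral, taking as its bounded $C^1$ test function $(u,r)\mapsto h(\Phi(u,r))\,|\det J(u,r)|$ (legitimate because $\Phi$ is a $C^4$ diffeomorphism on $T_{\cM}(\epsilon)$, so $|\det J|$ pulls back to a $C^1$ function there, and the hypotheses of that lemma are exactly \Cref{assumption:f}). This gives, uniformly in $u$,
\[
  \int_{\|r\|<\epsilon} h(\Phi(u,r))\,|\det J(u,r)|\,e^{-f_\theta(u,r)/\theta}\,dr
  \;=\; K_\theta(u)\,\bigl(h(\Phi(u,0))\,|\det J(u,0)|+o(1)\bigr),
\]
with $K_\theta(u)=e^{-f_0(u,0)/\theta}\,e^{-f_1(u,0)}\,(2\pi\theta)^{(d-n)/2}\,|\partial_r^2 f_0(u,0)|^{-1/2}$. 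Because $\cM=\argmin f_0$, the value $f_0(u,0)$ equals a $u$-independent constant $c_0\coloneqq\min f_0$, so the factor $e^{-c_0/\theta}(2\pi\theta)^{(d-n)/2}$ is common to the numerator and denominator of $R_\theta$ and cancels. Integrating the displayed asymptotic over $u\in U$, using that the $o(1)$ is uniform in $u$ and that $\int_U e^{-f_1(u,0)}|\partial_r^2 f_0(u,0)|^{-1/2}\,du<\infty$ (boundedness from \Cref{remark:boundedness_in_compact_manifold}), the numerator of $R_\theta$ becomes $e^{-c_0/\theta}(2\pi\theta)^{(d-n)/2}\bigl(\int_U e^{-f_1(u,0)}|\partial_r^2 f_0(u,0)|^{-1/2}h(\Phi(u,0))|\det J(u,0)|\,du+o(1)\bigr)$, and likewise the denominator with $h\equiv1$. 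Taking the ratio, cancelling the common factors, and recognizing $|\det J(u,0)|\,du=d\cM(u)$ yields
\[
  R_\theta\;\longrightarrow\;\frac{\int_\cM h\,e^{-f_1(\cdot,0)}\,|\partial_r^2 f_0(\cdot,0)|^{-1/2}\,d\cM}{\int_\cM e^{-f_1(\cdot,0)}\,|\partial_r^2 f_0(\cdot,0)|^{-1/2}\,d\cM}\;=\;\int h\,d\pi,
\]
which is precisely the claimed limiting distribution.

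The hard part will be turning the single-chart estimate above into a statement on the compact manifold while keeping every error term $o(1)$: I would cover $\cM$ by finitely many tubular charts, take a subordinate partition of unity $\{\chi_i\}$, apply \Cref{lemma:limit_f_theta} to $h\chi_i$ (pulled back) on each chart, and sum. The crux is that the $o(1)$ produced by \Cref{corollary:laplace} is \emph{uniform over each chart}—and uniform in the right way across the finitely many charts—provided the constants $c_1,\dots,c_6$ and $m_1,m_2$ there are controlled; this is where \Cref{assumption:f} (parts 2 and 5 supply the uniform lower bounds on the normal-direction gap $m_1$ and the normal Hessian $m_2$) and the chart-wise uniform derivative bounds of \Cref{remark:boundedness_in_compact_manifold} are used. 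A secondary, purely soft point is the reduction to $C^1$ test functions together with tightness; it follows from the concentration hypothesis and is consistent with \Cref{lemma:limit_support}, which also gives directly that the limit $\pi$ is supported on $\cM$.
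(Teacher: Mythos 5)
Correct, and essentially the same approach as the paper: both use \Cref{lemma:limit_f_theta} as the key fiber-wise Laplace estimate, pass to tubular coordinates, cancel the common $u$-independent factor, and take the ratio; the paper merely cites \citet[Theorem 3.1]{hwang1980laplace} for this assembly step rather than writing out the change-of-variables, partition-of-unity, and tightness details you spell out.
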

\begin{proof}
  The proof follows the same as the proof in \citet[Theorem 3.1]{hwang1980laplace}.
  The only difference is that we replace the estimate of \citet[Equation~(3.2)]{hwang1980laplace}
  with our \Cref{lemma:limit_f_theta}.
  Note that the Q in \citet[Theorem 3.1]{hwang1980laplace} is assumed as
  a probability measure, thus $f$ (in his notation) integrates to one.
  However, the proof technique of \citet[Theorem 3.1]{hwang1980laplace}
  remains valid even if $f$ is not a probability density, so applying to
  our case.
\end{proof}

\subsection{Proof for \Cref{thm_main_informal}}

The remaining the proof is to expand the true log-density w.r.t. $\sigma$,
analyze the error of the learned log-density,
and then to plug in the result obtained from \Cref{subsec:general_proof}.

\begin{theorem} \label{lemma:limit_V}
  Assume \Cref{assumption:manifold,assumption:pdata} holds.
  Suppose $x \in T_{\cM}(\epsilon)$. Then we have that
  \begin{align*}
  \log p_{\sigma}^\mathrm{VE}(x)
  &= -\frac{1}{2 \sigma^2} \norm{ x - \Pm(x) }^2 + \log \pdata(\Phi^{-1}(\Pm(x)))
    - \frac{d-n}{2} \log(2\pi \sigma^2) - \\ 
    &\fakeeq  \log \sqrt{\abs*{\hat H(\Phi^{-1}(\Pm(x)), x)}} 
    + \hat p^\mathrm{VE}(x, \sigma), \\
  \log p_{\sigma}^\mathrm{VP}(x)
  &= -\frac{1}{2 \sigma^2} \norm{ x - \Pm(x) }^2 + \log \pdata(\Phi^{-1}(\Pm(x)))
    - \frac{d-n}{2} \log(2\pi \sigma^2) - \\ 
    &\fakeeq \log \sqrt{\abs*{\hat H(\Phi^{-1}(\Pm(x)), x)}} 
    - \frac{1}{2} \inp*{\Pm(x)}{x - \Pm(x)}
    + \hat p^\mathrm{VP}(x, \sigma),
  \end{align*}
  where $\hat p^\mathrm{VE}(x, \sigma)$ and $\hat p^\mathrm{VP}(x, \sigma)$ are
  functions that are $o(1)$ uniformly for $x \in T_{\cM}(\epsilon)$.
  The matrix $\hat H(u, x)$ is such that
  \begin{align*}
    \hat H(u, x)_{i, j} = \inp*{\frac{\partial^2 \Phi(u)}{\partial u_i \partial u_j}}{\Phi(u) - x}
  + \inp*{\frac{\partial \Phi(u)}{\partial u_i}}{\frac{\partial \Phi(u)}{\partial u_j}}.
  \end{align*}
\end{theorem}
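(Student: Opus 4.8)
The plan is to read the expansion off Laplace's method (\Cref{corollary:laplace}) applied to the integral representation \eqref{eq:p_sigma} of $p_\sigma$, viewed as an integral over the local parameter $u$. Write $p_\sigma(x)=(2\pi\sigma^2)^{-d/2}\int_U \pdata(u)\exp\!\big(-F_\sigma(u;x)/\sigma^2\big)\,du$ with phase $F_\sigma(u;x):=\tfrac12\norm{x-\gamma(\sigma)\Phi(u)}^2$. First I locate the concentration point: for $x\in T_\cM(\epsilon)$ with $\epsilon$ smaller than the reach of $\cM$, the map $u\mapsto f_0(u;x):=\tfrac12\norm{x-\Phi(u)}^2$ has the unique minimizer $u^*(x)=\Phi^{-1}(\Pm(x))$ with minimum value $d_\cM(x)$, and differentiating twice shows its $u$-Hessian at $u^*$ is exactly $\hat H(u^*,x)$ (the first-order term vanishes because $x-\Pm(x)$ is normal to $\cM$). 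Writing $\hat H(u,x)=g(u)+R(u,x)$, where $R$ collects the second derivatives of $\Phi$ contracted against $\Phi(u)-x$ and hence is $O(\epsilon)$, one sees $\hat H(u,x)\succ0$ uniformly after possibly shrinking $\epsilon$.

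Next I treat the two smoothing conventions. For VE, $\gamma\equiv1$ and the phase is $\sigma$-independent, equal to $f_0$. For VP, expanding $\gamma(\sigma)=\sqrt{1-\sigma^2}=1-\tfrac{\sigma^2}2+O(\sigma^4)$ and dividing by $\sigma^2$ gives, uniformly in $u$, $F_\sigma(u;x)/\sigma^2=f_0(u;x)/\sigma^2+f_1(u;x)+O(\sigma^2)$ with $f_1(u;x)=\tfrac12\inp*{\Phi(u)}{x-\Phi(u)}$. In both cases I absorb the $\Theta(1)$ and $o(1)$ pieces of the exponent into the amplitude, reducing to a Laplace integral against the fixed phase $f_0$ with $C^1$ amplitude $a_\sigma(u)=\pdata(u)$ (VE), resp.\ $a_\sigma(u)=\pdata(u)\exp(-f_1(u;x)+O(\sigma^2))$ (VP), which converges in $C^1$ uniformly as $\sigma\to0$; $C^1$-regularity of $a_\sigma$ follows from \Cref{assumption:pdata} and the $C^4$ smoothness of $\Phi$.

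The third step is localization and the Laplace estimate. Using the finite tubular atlas and partition of unity of \Cref{remark:boundedness_in_compact_manifold}, I split the integral into a fixed-radius neighborhood of $\Pm(x)$ and its complement. On the complement $f_0(u;x)\ge d_\cM(x)+c$ for some $c>0$ uniform over $x\in\overline{T_\cM(\epsilon)}$ (compactness plus the reach bound), so that part is $o(1)$ relative to the leading term after the $(2\pi\sigma^2)^{-d/2}$ prefactor. On the neighborhood I apply \Cref{corollary:laplace} with $m=n$, $\theta=\sigma^2$, $F=f_0(\cdot;x)$, $g$ the localized amplitude, and $\Omega'$ a fixed closed ball around $u^*(x)$, obtaining
\[
  \int\cdots\,du=\exp\!\big(-d_\cM(x)/\sigma^2\big)\,\frac{(2\pi\sigma^2)^{n/2}}{\sqrt{\abs*{\hat H(u^*,x)}}}\,\big(a_\sigma(u^*)+O(\sigma)\big),
\]
the contributions recombining over the partition of unity to $a_\sigma(u^*)$. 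Multiplying by $(2\pi\sigma^2)^{-d/2}$, adding back the negligible far part, and taking logarithms --- using that $a_\sigma(u^*)$ is uniformly bounded above and away from $0$, so $\log(a_\sigma(u^*)+O(\sigma))=\log a_\sigma(u^*)+O(\sigma)$ --- yields the two claimed expansions with $\hat p^{\mathrm{VE}},\hat p^{\mathrm{VP}}=O(\sigma)=o(1)$; here $\log a_\sigma(u^*)=\log\pdata(u^*)$ in the VE case and $\log\pdata(u^*)-\tfrac12\inp*{\Pm(x)}{x-\Pm(x)}+O(\sigma^2)$ in the VP case, producing the extra inner-product term.

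The main obstacle is not any single computation but making every error term uniform in $x\in T_\cM(\epsilon)$: one must check that the Laplace constants $c_1,\dots,c_6$ of \Cref{corollary:laplace} are bounded and that the gaps $m_1$ (the energy gap above) and $m_2=\lambda_{\min}(\hat H(u,x))$ are bounded below, all uniformly as $x$ ranges over the compact set $\overline{T_\cM(\epsilon)}$ and $\sigma\to0$. This is exactly where compactness of $\cM$, the tubular/reach structure, and the bookkeeping of \Cref{remark:boundedness_in_compact_manifold} enter; a secondary point of care is verifying that the VP residual exponent is genuinely $o(1)$ uniformly in $(u,x)$ so that it may be moved into the amplitude without disturbing the leading Laplace term.
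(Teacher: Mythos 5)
Your proposal is correct and follows essentially the same route as the paper's proof: both apply \Cref{corollary:laplace} to the integral representation \eqref{eq:p_sigma} with phase $\tfrac12\norm{x-\Phi(u)}^2$, identify the minimizer as $\Phi^{-1}(\Pm(x))$ and its Hessian as $\hat H$, handle the VP case by Taylor-expanding $\sqrt{1-\sigma^2}$ so that the extra $\Theta(\sigma^2)$ piece of the exponent produces the inner-product correction in the amplitude, and then verify uniformity of the Laplace constants via compactness and a shrinking of $\epsilon$ to control $\lambda_{\min}(\hat H)$. Your explicit localization/partition-of-unity bookkeeping and $C^1$-convergence of the amplitude are harmless elaborations of steps the paper handles implicitly through the corollary's gap constant $m_1$ and \Cref{remark:boundedness_in_compact_manifold}.
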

\begin{proof}
  We can apply \Cref{corollary:laplace} as an error estimate for Laplace's
  method, to the integral in $p_{\sigma}$.
  The minimizer of $F(u)$ is $\Phi^{-1}(\Pm(x))$ for both VE and VP.

  We first consider the case of VE.
  By letting $F(u) = \| x - \Phi(u) \|^2 / 2$, $g(u) = \pdata(u)$ and $\theta = \sigma^2$
  we can obtain that
  \begin{align} \label{eq:p_sigma_ve}
    p_{\sigma} (x) = \exp\left(- \frac{\norm{ x - \Pm(x) }^2}{2\sigma^2}\right)
  \frac{\left(2\pi \sigma^2\right)^{(n-d)/2}}{\sqrt{\abs*{\hat H(\Phi^{-1}(\Pm(x)), x)}}} \left( \pdata(\Phi^{-1}(\Pm(x))) + h(\sigma^2) \right)
  \end{align}
  where $\abs{h(\sigma^2)}$ is $O(\sigma)$.
  Now we take logarithmic and use the fact that $\log(A+B) = \log(A) + \log(1+B/A)$,
  we obtain
  \begin{align*}
    &\fakeeq \log p_{\sigma}(x) \\
    &= -\frac{\norm{ x - \Pm(x) }^2}{2\sigma^2} + \frac{n-d}{2} \log(2\pi \sigma^2) +
    \log \abs*{\hat H(\Phi^{-1}(\Pm(x)), x)}^{-1/2} + \\
    &\fakeeq \log \left( \pdata(\Phi^{-1}(\Pm(x))) + h(\sigma^2) \right) \\
    &= -\frac{\norm{ x - \Pm(x) }^2}{2\sigma^2} + \frac{n-d}{2} \log(2\pi \sigma^2) +
    \log \pdata(\Phi^{-1}(\Pm(x))) +  \\
    &\fakeeq \log \abs*{\hat H(\Phi^{-1}(\Pm(x)), x)}^{-1/2} +
    \log \left( 1 + \frac{h(\sigma^2)}{\pdata(\Phi^{-1}(\Pm(x)))} \right).
  \end{align*}
  Therefore, we have
  \begin{align*}
    \hat p(x, \sigma) = \log \left( 1 + \frac{h(\sigma^2)}{\pdata(\Phi^{-1}(\Pm(x)))} \right)
  \end{align*}
  The remaining is to show that $h(\sigma^2) / \pdata(\Phi^{-1}(\Pm(x)))$
  is uniformly $o(1)$ for all $x \in T_{\cM}(\epsilon)$.
  Since the manifold is compact, $\pdata(u)$ is uniformly bounded away from zero (see \Cref{remark:boundedness_in_compact_manifold}).
  The remaining is to find a suitable $\Omega'$ and upper and lower bound the constants
  in \Cref{corollary:laplace}. We will discuss this later.
  
  Now let us look at the case of VP.
  The only difference is in the exponential, we changed from $\norm{x - \Phi(u)}^2$
  to
  \[
  \norm{x - \sqrt{1-\sigma^2} \Phi(u)}^2 = \norm{x - \Phi(u) + \left(1-\sqrt{1-\sigma^2}\right) \Phi(u)}^2.
  \]
  If we do a Taylor expansion of $1 - \sqrt{1-\sigma^2}$:
  \begin{align*}
    1 - \sqrt{1-\sigma^2} = \frac{1}{2} \sigma^2 + o(\sigma^2).
  \end{align*}
  Using this expansion, we have that
  \begin{align*}
   &\fakeeq \norm{x - \Phi(u) + \left(1-\sqrt{1-\sigma^2}\right) \Phi(u)}^2 \\
   & = \norm{x - \Phi(u)}^2 + \sigma^2 \inp*{\Phi(u)}{x - \Phi(u)}
     + o(\sigma^2) \inp*{x}{\Phi(u)}.
  \end{align*}
  Then we can use the same argument as in the proof \Cref{lemma:limit_f_theta}
  to show that the $o(\sigma^2)$ does not affect the approximation.
  Specifically, let
  \begin{align*}
    J \coloneqq \exp\left(-\frac{\norm{x - \Pm(x)}^2}{2\sigma^2}\right) \frac{\left(2\pi \sigma^2\right)^{(n-d)/2}}{\sqrt{\abs*{\hat H(\Phi^{-1}(\Pm(x)), x)}}},
  \end{align*}
  and
  \begin{align*}
    K \coloneqq \frac{1}{(2\pi \sigma^2)^{d/2}} \exp\left(-\frac{\norm{x - \Phi(u)}^2}{2\sigma^2}\right) \exp\left(- \frac{1}{2} \inp*{\Phi(u)}{x - \Phi(u)} \right)\pdata(u).
  \end{align*}
  We have
  \begin{align*}
    &\fakeeq \int_{\cM} \frac{1}{(2\pi \sigma^2)^{d/2}} \exp\left(-\frac{\norm{x - \sqrt{1-\sigma^2} \Phi(u)}^2}{2\sigma^2}\right) \pdata(u) du \\
    &= \int_{\cM} K du + 
    \int_{\cM} K \left(\exp\left(o(1) \inp*{\Phi(u)}{x}\right) - 1\right)du \\
    &\leq \int_{\cM} K du + \int_{\cM} K o(1) du \\
    &\leq J \left( \pdata(\Phi^{-1}(\Pm(x))) \exp\left(-\frac{1}{2} \inp*{\Pm(x)}{x - \Pm(x)}\right)
     + o(1) \right).
  \end{align*}
  The rest of the proof follows similarly to the proof of the VE case.

  Then we need to discuss the upper and lower bounds in \Cref{corollary:laplace}.
  For the upper bounds, since the manifold is compact, there exists such uniform
  upper bounds for $\{c_i\}_1^6$ (see \Cref{remark:boundedness_in_compact_manifold}).
  For the lower bounds we first consider $\lambda_{\min}\left(\hat H(u, x)\right)$.
  The part $\frac{\partial \Phi(u)}{\partial u}^\T
  \frac{\partial \Phi(u)}{\partial u}$ is positive definite and uniformly
  bounded away from zero for all $u$. 
  The eigenvalues of other part, i.e., $\inp*{\frac{\partial \Phi(u)}{\partial u_i \partial u_j}}{\Phi(u) - x}$,
  may be negative.
  However, as long as its eigenvalues are small enough, by Weyl's inequality,
  we can still lower bound the smallest eigenvalue of $\hat H(u, x)$.
  The eigenvalues of $\inp*{\frac{\partial \Phi(u)}{\partial u_i \partial u_j}}{\Phi(u) - x}$,
  can then be bounded by $\norm{\nabla^2 \Phi(u)}$ $\norm{\Phi(u) - x}$.
  Therefore, as long as the tubular neighborhood and the set $\Omega'$ is small
  enough, we can lower bound $\lambda_{\min}\left(\hat H(u, x)\right)$.
  Formally, let $G > 0$ be the lower bound of the smallest eigenvalue of
  $\frac{\partial \Phi(u)}{\partial u}^\T
  \frac{\partial \Phi(u)}{\partial u}$.
  Let $C_2$ be the uniform upper bound of $\norm{\nabla^2 \Phi(u)}$,
  and $C_1$ be that of $\norm{\nabla \Phi(u)}$.
  Those constants are uniform for a fixed finite atlas since the manifold is compact.
  Let the radius of $\Omega'$ be $r_0$.
  We have that in $\Omega'$, $\lambda_{\min}\left(\hat H(u, x)\right) \geq G - C_2 (\norm{\Phi(u) - \Pm(x))}
  + \norm{\Pm(x) - x}) \geq G - C_2 (C_1 r_0 + \epsilon)$.
  Therefore, we can choose $r_0$ and $\epsilon$ small enough (but away from zero) such that
  $\lambda_{\min}\left(\hat H(u, x)\right) \geq G/2$, e.g., $\epsilon$ is the minimum
  of $G / (4 C_2)$ and the original $\epsilon$ in the tubular neighborhood definition,
  and $r_0 = G / (4 C_1 C_2)$. This way, $m_1$ can be lower bounded by $G r_0^2/2$.
\end{proof}

\subsection{Proofs for \Cref{sec:scale}}
\label{subsec:proof_scale}

The results in \Cref{subsec:general_proof,subsec:proof_scale} consider only
points in $T_{\cM}(\epsilon)$. Therefore, to use the results, we need first show that
the density outside the tubular neighborhood becomes negligible as $\sigma \to 0$.
In the following two lemmas, we will show the concentration of the density
for $p_{\sigma}$ and $\exp(-f_{\sigma})$.

\begin{lemma}\label[lemma]{lemma:limit_p_outside}
  Assume \Cref{assumption:manifold,assumption:pdata} holds.
  We have that $\lim_{\sigma \to 0} \int_{x \in T_{\cM}(\epsilon)} p_{\sigma}(x) dx = 1$.
\end{lemma}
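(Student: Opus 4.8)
The plan is to reduce the claim to a uniform Gaussian–tail estimate via Tonelli's theorem. First I would record that $p_\sigma$ is genuinely a probability density on $\bR^d$: writing the defining integral of \eqref{eq:p_sigma} over the coordinate domain $U$ and using $\int_{\bR^d}\cN(x\mid\gamma(\sigma)\Phi(u),\sigma^2 I)\,dx=1$ together with $\int_U\pdata(u)\,du=1$ gives $\int_{\bR^d}p_\sigma(x)\,dx=1$. Hence it is equivalent to prove that the mass outside the tube vanishes, $\int_{T_{\cM}(\epsilon)^c}p_\sigma(x)\,dx\to 0$ as $\sigma\to0$.

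The key step is to swap the order of integration,
\[
\int_{T_{\cM}(\epsilon)^c}p_\sigma(x)\,dx
=\int_U\pdata(u)\Big[\int_{T_{\cM}(\epsilon)^c}\cN\!\big(x\mid\gamma(\sigma)\Phi(u),\sigma^2 I\big)\,dx\Big]\,du,
\]
and bound the bracketed quantity uniformly in $u$. The geometric observation driving this is that the Gaussian center $\gamma(\sigma)\Phi(u)$ sits close to $\cM$. For VE, $\gamma(\sigma)=1$, so the center lies exactly on $\cM$; for VP, $\|\gamma(\sigma)\Phi(u)-\Phi(u)\|=\big(1-\sqrt{1-\sigma^2}\big)\|\Phi(u)\|$, which by the expansion $1-\sqrt{1-\sigma^2}=\tfrac12\sigma^2+o(\sigma^2)$ and the compactness bound $\sup_{x\in\cM}\|x\|<\infty$ is $O(\sigma^2)$, hence $<\epsilon/2$ for all small $\sigma$. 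Consequently $\|x-\gamma(\sigma)\Phi(u)\|<\epsilon/2$ forces $\dist(x,\cM)\le\|x-\Phi(u)\|<\epsilon$, so that $T_{\cM}(\epsilon)^c\subseteq\{x:\|x-\gamma(\sigma)\Phi(u)\|\ge\epsilon/2\}$ (with $\epsilon/2$ improvable to $\epsilon$ in the VE case). The bracketed integral is therefore at most $\Pr\big(\|Z\|\ge\epsilon/(2\sigma)\big)$ with $Z\sim\cN(0,I)$, which by Markov's inequality is $\le 4d\sigma^2/\epsilon^2$ and in particular independent of $u$. Substituting back and using $\int_U\pdata(u)\,du=1$ gives $\int_{T_{\cM}(\epsilon)^c}p_\sigma(x)\,dx\le 4d\sigma^2/\epsilon^2\to0$.

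I expect no real obstacle in this lemma; the only mild subtleties are handling the VP rescaling factor $\gamma(\sigma)=\sqrt{1-\sigma^2}$ — dealt with by its Taylor expansion together with the boundedness of $\cM$, so that the displacement of the Gaussian mean from $\cM$ is negligible relative to the fixed radius $\epsilon$ — and verifying that the tail estimate is uniform in $u$, which holds because after the containment bound the remaining probability no longer depends on $u$. Since the argument is identical for $\gamma(\sigma)=1$ and $\gamma(\sigma)=\sqrt{1-\sigma^2}$, it simultaneously covers $p_\sigma^{\mathrm{VE}}$ and $p_\sigma^{\mathrm{VP}}$, consistent with the unified notation $p_\sigma$.
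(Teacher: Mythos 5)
Your proof is correct and follows essentially the same route as the paper: Tonelli's theorem to swap the order of integration, then a uniform-in-$u$ Gaussian tail estimate. The one place you go further is the explicit treatment of the VP case, where you show the displacement $\|\gamma(\sigma)\Phi(u)-\Phi(u)\|=O(\sigma^2)$ is eventually below $\epsilon/2$ so the containment argument still applies; the paper's proof writes the inner integral only with $\Phi(u)$ (the VE case) and leaves VP implicit, so your version is slightly more complete. Using Markov on $\|Z\|^2$ instead of Gaussian concentration gives a weaker (polynomial) bound, but since the lemma only needs the limit to be zero this costs nothing.
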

\begin{proof}
  We have that
  \begin{align*}
    &\fakeeq \int_{x \in \bR^d / T_{\cM}(\epsilon)} p_{\sigma}(x) dx \\
    &= \int_{x \in \bR^d / T_{\cM}(\epsilon)} \int_{u \in \cM} \frac{1}{(2\pi \sigma^2)^{d/2}} \exp\left(-\frac{\norm{x - \Phi(u)}^2}{2\sigma^2}\right) \pdata(u) du dx \\
    &= \int_{u \in \cM} \pdata(u) \int_{x \in \bR^d / T_{\cM}(\epsilon)} \frac{1}{(2\pi \sigma^2)^{d/2}} \exp\left(-\frac{\norm{x - \Phi(u)}^2}{2\sigma^2}\right) dx du \\
    &\leq \int_{u \in \cM} \pdata(u) \int_{\norm{x - \Phi(u)} \geq \epsilon} \frac{1}{(2\pi \sigma^2)^{d/2}} \exp\left(-\frac{\norm{x - \Phi(u)}^2}{2\sigma^2}\right) dx du,
  \end{align*}
  where the exchange of the integral is justified by Tonelli's theorem with
  the non-negativity of the integrand.
  The last inequality holds since any point in $\bR^d / T_{\cM}(\epsilon)$
  is at least $\epsilon$ away from any point on the manifold.
  Now the inner integral is the integral of a Gaussian density with distance
  to the origin at least $\epsilon$. It will decay exponentially fast as $\sigma \to 0$.
  Let $Z$ be a standard Gaussian random variable of dimension $d$, and then the
  above integral is equivalent to
  \begin{align*}
    \int_{u \in \cM} \pdata(u) P\left(\norm{Z} \geq \frac{\epsilon}{\sigma}\right) du
    = P\left(\norm{Z} \geq \frac{\epsilon}{\sigma}\right).
  \end{align*}
  The RHS can be shown to decay exponentially fast by the Gaussian concentrations.
\end{proof}

\begin{lemma}\label[lemma]{lemma:limit_f_outside}
  Assume \Cref{assumption:manifold,assumption:pdata} holds.
  Further assume that
  \begin{align*}
    \sup_{x \in K} \norm{\nabla f_{\sigma}(x) + \nabla \log p_{\sigma}(x)} = o\left(\sigma^{-2}\right)
  \end{align*}
  We have that
  \begin{align*}
    \lim_{\sigma \to 0} \int_{x \in K \backslash T_{\cM}(\epsilon)} \exp(-f_{\sigma}(x)) dx = 0.
  \end{align*}
\end{lemma}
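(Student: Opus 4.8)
The plan is to show that the recovered density $\pi_\sigma \coloneqq e^{-f_\sigma}$ and the true smoothed density $p_\sigma$ have \emph{comparable shapes} on $K$ — their ratio varies there by at most a multiplicative factor $e^{o(\sigma^{-2})}$ — and then to combine this with the concentration of $p_\sigma$ on $T_\cM(\epsilon)$, strengthened to an exponential rate.

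\textbf{Step 1: a comparison inequality on $K$.} Set $g_\sigma \coloneqq f_\sigma + \log p_\sigma$, so that $\pi_\sigma = p_\sigma\, e^{-g_\sigma}$. Since $p_\sigma$ is smooth and strictly positive on $\bR^d$ and $f_\sigma \in C^1(K)$ (\Cref{assumption:learned_score}), we have $g_\sigma \in C^1(K)$ with $\nabla g_\sigma = \nabla f_\sigma + \nabla \log p_\sigma$; hence $\sup_{x \in K} \norm{\nabla g_\sigma(x)} = o(\sigma^{-2})$ by the hypothesis of the lemma. By \Cref{assumption:learned_score}, $T_\cM(\epsilon) \subset K$ and $K$ is uniformly rectifiably path-connected: there is $L < \infty$ such that any two points of $K$ are joined by a rectifiable path in $K$ of length at most $L$. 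Integrating $\nabla g_\sigma$ along such a path gives $\abs{g_\sigma(x) - g_\sigma(y)} \le L \sup_K \norm{\nabla g_\sigma} \eqqcolon \delta_\sigma$, with $\delta_\sigma = o(\sigma^{-2})$, for all $x, y \in K$. Exponentiating and using $\pi_\sigma = p_\sigma e^{-g_\sigma}$ yields the pointwise comparison
\[
  \pi_\sigma(x)\, p_\sigma(y) \;\le\; e^{\delta_\sigma}\, p_\sigma(x)\, \pi_\sigma(y), \qquad x, y \in K.
\]

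\textbf{Step 2: integrate out.} Integrating this inequality over $x \in K \setminus T_\cM(\epsilon)$ and $y \in T_\cM(\epsilon)$ (both subsets of $K$) gives
\[
  \Big( \int_{K \setminus T_\cM(\epsilon)} \pi_\sigma \Big)\Big( \int_{T_\cM(\epsilon)} p_\sigma \Big) \;\le\; e^{\delta_\sigma}\, \Big( \int_{K \setminus T_\cM(\epsilon)} p_\sigma \Big)\Big( \int_{T_\cM(\epsilon)} \pi_\sigma \Big).
\]
Here $\int_{T_\cM(\epsilon)} \pi_\sigma \le 1$ since $\pi_\sigma$ is a probability density, and $\int_{T_\cM(\epsilon)} p_\sigma \to 1$ by \Cref{lemma:limit_p_outside}, so it exceeds $\tfrac12$ for small $\sigma$. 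For the remaining factor I would quantify the Gaussian tail already appearing in the proof of \Cref{lemma:limit_p_outside}: any $x \in K \setminus T_\cM(\epsilon)$ has $\dist(x,\cM) \ge \epsilon$, and each Gaussian kernel in $p_\sigma(x)$ is centered at distance $\ge \epsilon$ (VE) or $\ge \epsilon - (1-\sqrt{1-\sigma^2})\max_{x'\in\cM}\norm{x'} \ge \epsilon/2$ (VP, small $\sigma$) from $x$, whence $\int_{K \setminus T_\cM(\epsilon)} p_\sigma \le P(\norm{Z} \ge \epsilon/(2\sigma)) \le e^{-c/\sigma^2}$ for some $c > 0$ by Gaussian concentration, where $Z \sim \cN(0, I_d)$. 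Substituting, $\int_{K \setminus T_\cM(\epsilon)} \pi_\sigma \le 2\, e^{\delta_\sigma - c/\sigma^2}$, and since $\delta_\sigma = o(\sigma^{-2})$ is eventually below $c/(2\sigma^2)$, the right-hand side tends to $0$, which is the claim.

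\textbf{Main obstacle.} The one real subtlety is that one cannot control $\pi_\sigma$ at a single point: $\pi_\sigma(y_0) = e^{-f_\sigma(y_0)}$ at a fixed reference $y_0$ may diverge as $\sigma \to 0$, so the naive ``evaluate at one point, then propagate by path-connectedness'' strategy fails to give a useful bound. The resolution is precisely to integrate the comparison inequality against $p_\sigma$ over $T_\cM(\epsilon)$, which converts the uncontrolled factor into $\int_{T_\cM(\epsilon)} \pi_\sigma \le 1$ and the controlled factor into $\int_{T_\cM(\epsilon)} p_\sigma \to 1$. The two remaining ingredients — the line-integral bound on $g_\sigma$ and the quantitative Gaussian tail — are routine.
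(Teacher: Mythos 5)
Your proof is correct and in fact more self-contained than the paper's. The paper's proof of this lemma, read in isolation, has a gap at exactly the step you flag: it passes from $p_\sigma(x)\le (2\pi\sigma^2)^{-d/2}e^{-\epsilon^2/2\sigma^2}$ directly to $e^{-f_\sigma(x)}\le (2\pi\sigma^2)^{-d/2}e^{-\epsilon^2/2\sigma^2+o(\sigma^{-2})}$, which requires a \emph{pointwise} bound $\sup_K\lvert f_\sigma+\log p_\sigma\rvert=o(\sigma^{-2})$ rather than the gradient bound that the lemma actually assumes; the gradient bound only controls the variation, leaving an uncontrolled additive constant. The paper fills this in afterward, inside the proof of \Cref{thm:recover_pdata}, by showing $\Delta_\sigma(x_0)=\log\int_K e^{-f_\sigma}-\log\int_K p_\sigma+O(\sup_K\lvert g\rvert)$ and using that both integrals tend to~$1$ (via \Cref{assumption:learned_score} and \Cref{lemma:limit_p_outside}); so the lemma as the paper uses it is really invoked with that extra information in hand. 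Your doubly-integrated comparison inequality $\pi_\sigma(x)p_\sigma(y)\le e^{\delta_\sigma}p_\sigma(x)\pi_\sigma(y)$ is an equivalent way to kill the constant by normalization, with the bonus that it produces a clean closed bound $\int_{K\setminus T_\cM(\epsilon)}\pi_\sigma\le 2e^{\delta_\sigma-c/\sigma^2}$ entirely inside the lemma proof. You also correctly implement the same Gaussian-tail estimate (and are careful to handle the VP case, where the kernel centers shift by $O(\sigma^2)$, which the paper's proof glosses over). One small bookkeeping point, which applies equally to the paper: the lemma statement lists only \Cref{assumption:manifold,assumption:pdata}, but both your proof and the paper's in-context use implicitly rely on \Cref{assumption:learned_score} (for $f_\sigma\in C^1(K)$, the path-connectedness of $K$, and $\pi_\sigma$ being a probability density); you make that dependence explicit, which is the right thing to do.
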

\begin{proof}
  For $x \notin T_{\cM}(\epsilon)$, the points are at least $\epsilon$ away
  from the manifold. Therefore, we have that
  \begin{align*}
    p_{\sigma}(x) \leq \int_{u \in \cM} \frac{1}{(2\pi \sigma^2)^{d/2}} \exp\left(-\frac{\epsilon^2}{2\sigma^2}\right) \pdata(u) du
    = \frac{1}{(2\pi \sigma^2)^{d/2}} \exp\left(-\frac{\epsilon^2}{2\sigma^2}\right),
  \end{align*}
  as $\pdata$ is a density function.
  Therefore, we have that
  \begin{align*}
    \exp(-f_{\sigma}(x)) \leq \frac{1}{(2\pi \sigma^2)^{d/2}} \exp\left(-\frac{\epsilon^2}{2\sigma^2} + o\left(\sigma^{-2}\right)\right),
  \end{align*}
  There exists $\sigma_0$, such that for all $\sigma < \sigma_0$,
  the $o(\sigma^{-2})$ term is upper bounded by $\epsilon^2 / 4 \sigma^2$.
  Then we have that
  \begin{align*}
    \int_{x \in K \backslash T_{\cM}(\epsilon)} \exp(-f_{\sigma}(x)) dx
    \leq \vol(K) \frac{1}{(2\pi \sigma^2)^{d/2}} \exp\left(-\frac{\epsilon^2}{4\sigma^2}\right).
  \end{align*}
  The RHS goes to zero as $\sigma \to 0$ as $\pdata$ is bounded.
\end{proof}

Now we are ready to prove our main theorems.

\begin{proof}[Proof of \Cref{thm:recover_pdata}]

First, since both $f_{\sigma}$ and $\log p_{\sigma}$ are $C^1$ functions
on $K$, we have the that $L^{\infty}$ norm of their gradients
is the same as the supremum.
First we will show that for any $\eta \geq -2$,
\begin{align*}
  \sup_{x \in K} \norm{\nabla f_{\sigma}(x) + \nabla \log p_{\sigma}(x)} = o(\sigma^{\eta})
  \quad \text{as } \sigma \to 0,
\end{align*}
implies that
\begin{align*}
  \sup_{x \in K} \abs{f_{\sigma}(x) + \log p_{\sigma}(x)} = o(\sigma^{\eta})
  \quad \text{as } \sigma \to 0.
\end{align*}
Given our assumption, for any two points $x, y \in K$, there exists a finite
length path, say $\gamma_{x, y}(\cdot): [0, 1] \to K$ with and $\norm{\gamma'}$
being upper bounded uniformly.
Consider an arbitrary point $x_0 \in K$, then we have
\begin{align*}
  &\fakeeq \Delta_{\sigma}(x) \coloneqq - f_{\sigma}(x) - \log p_{\sigma}(x) \\
  &= - f_{\sigma}(x_0) - \log p_{\sigma}(x_0) + \int_0^1 (- \nabla f_{\sigma}(\gamma(t)) - \nabla \log p_{\sigma}(\gamma(t))) \cdot \gamma'(t) dt \\
  &= \Delta_{\sigma} (x_0) + g(x, \sigma),
\end{align*}
where $\sup_x \abs{g(x, \sigma)}$ is $o(\sigma^{\eta})$ uniformly for $x \in K$
according to the assumption.
Further, we have that
\begin{align*}
  \int_{x \in K} \exp(-f_{\sigma}(x)) dx
  &= \int_{x \in K}  p_{\sigma}(x) \exp(\Delta_{\sigma}(x)) dx \\
  &= \int_{x \in K}  p_{\sigma}(x) \exp(\Delta_{\sigma}(x_0) + g(x, \sigma)) dx,
\end{align*}
which then imply that
\begin{align*}
  \Delta_{\sigma}(x_0) \geq
  \log \int_{x \in K} \exp(-f_{\sigma}(x)) dx
  - \log \int_{x \in K}  p_{\sigma}(x) dx
  - \sup_{x \in K} \abs{g(x, \sigma)},
\end{align*}
and
\begin{align*}
  \Delta_{\sigma}(x_0) \leq
  \log \int_{x \in K} \exp(-f_{\sigma}(x)) dx
  - \log \int_{x \in K}  p_{\sigma}(x) dx
  + \sup_{x \in K} \abs{g(x, \sigma)}.
\end{align*}
The first two terms on the right-hand side is $o(1)$ as $\sigma \to 0$
as our assumption about $f_{\sigma}$ and $\int_{K} p_{\sigma}(x) dx \geq \int_{T_{\cM}(\epsilon)} p_{\sigma}(x) dx \to 1$
according to \Cref{lemma:limit_p_outside}.
Thus, $\abs{\Delta_{\sigma}(x_0)}$ is $o(\sigma^{\eta})$.
Therefore, $\abs{\Delta_{\sigma}(x)}$ is $o(\sigma^{\eta})$ uniformly for $x \in K$.
Further we can apply \Cref{lemma:limit_f_outside} to conclude that
the density of $\exp(-f_{\sigma})$ concentrates in $T_{\cM}(\epsilon)$ as $\sigma \to 0$.

Then, we can prove the first conclusion that the support is on the manifold.
By the expansion of $\log p_{\sigma}$ in
\Cref{lemma:limit_V}, we have that
\begin{align*}
  f_{\sigma}(x) &= \frac{1}{2 \sigma^2} \norm{ x - \Pm(x) }^2 + o\left(1/\sigma^2\right).
\end{align*}
Then we can apply \Cref{lemma:limit_support} with
$f_{\theta}(x) = \sigma^2 f_{\sigma}(x)$, $\theta = \sigma^2$
and $\eta = \delta^2 / 2$
to conclude the claim.

To prove that the limiting distribution is $\pdata$ on the manifold,
we have
\begin{align*}
  f_{\sigma}(x) &= \frac{1}{2 \sigma^2} \norm{ x - \Pm(x)}^2 - \log \pdata(\Phi^{-1}(\Pm(x))) + \\ 
    &\fakeeq \log \sqrt{\abs*{\hat H(\Phi^{-1}(\Pm(x)), x)}} 
    + \frac{d-n}{2} \log(2\pi \sigma^2)
    + o(1).
\end{align*}
Then we can apply \Cref{thm:dist_f_theta}.
Then the $f_0$ becomes the distance function (changed to local coordinates),
and $f_1$ is $- \log \pdata + \log\sqrt{\abs*{\hat H(u), \Phi(u,r))}}$,
In addition, we note that for $r = 0$, $\sqrt{\abs*{\hat H(u), \Phi(u,r))}} = d \cM(u) / du$,
and therefore, we recover $\pdata$.
The $(d-n) \log(2 \pi \sigma^2)$ term is simply a constant and does not affect the
result after normalization.
One can replace $f_{\sigma}$ with $f_{\sigma} + \frac{d-n}{2} \log(2 \pi \sigma^2)$
and then apply \Cref{thm:dist_f_theta}, and this does not change the distribution
after normalization.

What remains is to ensure \Cref{assumption:f} holds, especially the second
condition, i.e., to ensure that the Hessian of $\norm*{\Phi(u,r) - \Phi(u)}^2 /2$
w.r.t. $r$ is uniformly bounded away from zero.
We can write $\Phi(u, r)$ as $\Phi(u) + \cN(u) r$,
where $\cN(u)$ is the normal vector field on the manifold $\cM$ at point $\Phi(u)$~\citep{weyl1939volume}.
We have that
\begin{align*}
\frac{\partial}{\partial r} \frac{\norm*{\Phi(u,r) - \Phi(u)}^2}{2}
= \frac{\partial \Phi(u, r)}{\partial r}^\T \left(\Phi(u, r) - \Phi(u)\right)
= \cN(u)^\T \cN(u) r = r,
\end{align*}
since the columns of $\cN(u)$ are orthonormal.
Therefore, the Hessian of $\norm*{\Phi(u,r) - \Phi(u)}^2 /2$ w.r.t. $r$
is simply the identity matrix, which satisfies the assumption.

To construct a $s(\sigma, x)$ such that the limiting distribution is arbitrarily,
say $\hat \pi$, we let $s(\sigma, x)$ being the gradient of
\begin{align*}
- \frac{1}{2 \sigma^2} \norm{ x - \Pm(x) }^2 + \log \hat \pi(\Phi^{-1}(\Pm(x))) -
\log \sqrt{\abs*{\hat H(\Phi^{-1}(\Pm(x)), x)}} + o(1).
\end{align*}
The difference between $f_{\sigma}$ and $\log p_{\sigma}$ is then $\Omega (1)$.
\end{proof}

\subsection{Manifold WKB Analysis of the Stationary Distribution}
\label{subsec:wkb_approximation}

A key difference between our theorem in \Cref{sec:recover_uniform} and the
results in \Cref{sec:scale} is that, in the former, the density does not admit
an explicit form. When $s(x,\sigma)$ is a gradient field, a closed-form
expression for the density is readily available; however, this property is not
guaranteed for most parameterized models, such as neural networks. We therefore
resort to the WKB approximation to approximate the stationary distribution.
Similarly to \Cref{subsec:general_proof}, we first present a general framework and then
apply it to our specific setting. 
We will show that SDE with the following form admits a stationary distribution
of the form \Cref{eq:general_density}.
Interested readers may refer to \citet{bouchet2016generalisation,bonnemain2019mean}
for more details on WKB applied on Fokker-Planck equation.

We consider the following SDE:
\begin{align*} 
  dx_t = b_{\theta}(x_t) dt + \sqrt{2 \theta} dW_t,
  \quad \text{with} \quad b_{\theta}(x) = -\nabla f_0(x) - \theta \nabla f_1(x) + \hat b(x, \theta),
\end{align*}
or the following SDE with the same stationary distribution,
\begin{align} \label{eq:general_sde}
  dx_t = \frac{b_{\theta}(x_t)}{\theta} dt + \sqrt{2} dW_t.
\end{align}
We assume that $\hat b(x, \theta)$ is uniformly $o(\theta)$ in $T_{\cM}(\epsilon)$
as $\theta \to 0$.
Also, we have $\argmin f_0(x) = \cM$.
This framework is general enough to cover the cases of 
\Cref{thm:recover_uniform,thm:with_guidence}.
We will see later that in these two cases, the function $f_0$ is the distance
function to the manifold, and $\theta$ will be chosen differently in different
cases. We make the following assumptions about the SDE.

Let $\pi_{\theta}(x)$ be the stationary distribution of the SDE~\Cref{eq:general_sde}.
First we assume the WKB ansatz:
\begin{assumption}[Local WKB ansatz] \label{assumption:wkb}
  We assume that $\lim_{\theta \to 0} \int_{T_{\cM}(\epsilon)} \pi_{\theta}(x) dx = 1$,
  and that $\pi_{\theta}(x)$ admits
  a local WKB form within compact set $T_{\cM}(\epsilon)$:
\begin{align*}
  \pi_{\theta}(x) \propto \exp\left(- \frac{V(x)}{\theta}\right) c_{\theta}(x)
  \quad \text{with} \quad c_{\theta}(x) = c_0(x) + \hat c(x, \theta),
\end{align*}
where $c_0 \in C^2(T_{\cM}(\epsilon))$ is positive, and $c_{\theta} \to c_0$ in $C^2(T_{\cM}(\epsilon))$.
We further assume that $V \in C^3(T_{\cM}(\epsilon))$
admits a unique solution.
\end{assumption}
The normalization constant can be explicitly written as
\begin{align*}
  \int_{x \in T_{\cM}(\epsilon)} \pi_{\theta}(x) dx / \int_{x \in T_{\cM}(\epsilon)} \exp\left(- \frac{V(x)}{\theta}\right) c_{\theta}(x) dx,
\end{align*}
since we have for $x \in T_{\cM}(\epsilon)$,
\begin{align*}
  \pi_{\theta}(x) &= \pi_{\theta}(x) \cdot \mathbf{1}_{T_{\cM}(\epsilon)}(x) = \pi_{\theta}(x \mid x \in T_{\cM}(\epsilon)) \pi_{\theta}(T_{\cM}(\epsilon)) \\
  &= \frac{c_{\theta}(x)\exp\left(- \frac{V(x)}{\theta}\right)}{\int_{x \in T_{\cM}(\epsilon)} c_{\theta}(x)\exp\left(- \frac{V(x)}{\theta}\right) dx}
  \pi_{\theta}(T_{\cM}(\epsilon)).
\end{align*}

Our goal would be to solve for $V(x)$ and $c_0(x)$ with the Fokker-Planck equation.
Once solved, to study the limit of $\pi_{\theta}$, we can
use results in \Cref{subsec:general_proof} as
\begin{align*}
  \pi_{\theta}(x) \propto \exp\left(- \frac{V(x) - \theta \log c_0(x) + o(\theta)}{\theta}\right).
\end{align*}

\begin{theorem} \label{thm:wkb_solution}
  Consider the SDE described in \Cref{eq:general_sde}. Assume \Cref{assumption:wkb} holds.
  Then we have that
  \begin{align*}
    V(x) = f_0(x), \quad c_0(x) = C \exp(-f_1(x)),
  \end{align*}
  for some constant $C$.
\end{theorem}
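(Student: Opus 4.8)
The plan is to substitute the local WKB ansatz of \Cref{assumption:wkb}, $\pi_\theta\propto e^{-V(x)/\theta}c_\theta(x)$, into the stationary Fokker--Planck equation for the diffusion \eqref{eq:general_sde}, which in divergence form reads $\nabla\cdot\bigl(\nabla\pi_\theta-\theta^{-1}b_\theta\,\pi_\theta\bigr)=0$ on the interior of $T_{\cM}(\epsilon)$, and then to balance powers of $\theta$. After inserting the ansatz, multiplying by $\theta^{2}e^{V/\theta}$, and using $b_\theta=-\nabla f_0-\theta\nabla f_1+\hat b$ with $\hat b=o(\theta)$ and $c_\theta=c_0+\hat c$ with $\hat c\to0$ in $C^{2}$, the resulting identity is a degree-two polynomial in $\theta$ plus a remainder that is $o(1)$ relative to every retained order. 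Setting the coefficients of $\theta^{0},\theta^{1},\theta^{2}$ to zero yields a hierarchy: an eikonal equation for $V$, a first-order transport equation for $c_0$, and a transport equation for the next prefactor whose solvability condition removes the remaining degree of freedom in $c_0$.

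First I would analyze the leading ($\theta^{0}$) equation, the eikonal relation $|\nabla V|^{2}=\nabla f_0\cdot\nabla V$ (the factor $c_0>0$ cancels), equivalently $\nabla V\cdot(\nabla V-\nabla f_0)=0$. Since $\cM=\argmin f_0$ we have $\nabla f_0=0$ on $\cM$, so the relation forces $\nabla V=0$ on $\cM$, hence $V$ is constant on each connected component of $\cM$. Differentiating the eikonal relation twice in the normal directions at a point of $\cM$, and using that $\partial_r^{2}f_0\succ0$ there (\Cref{assumption:f}) together with $\partial_r^{2}V\succ0$ (needed for the ansatz to describe a distribution concentrating on $\cM$), forces $\partial_r^{2}V=\partial_r^{2}f_0$ on $\cM$, via the elementary fact that a symmetric matrix anticommuting with a positive-definite one must vanish. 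This identifies, near $\cM$, the correct branch $\nabla V=\nabla f_0$ of the Hamilton--Jacobi equation with Hamiltonian $H(x,p)=p\cdot(p-\nabla f_0(x))$; propagating along its outgoing characteristics, the integral curves of $\dot x=\nabla f_0$ issuing from $\cM$ (which foliate $T_{\cM}(\epsilon)$), gives $\dot V=|\nabla f_0|^{2}=\dot f_0$ along each curve, so $V\equiv f_0$ on the whole tube, up to an additive constant absorbed in the normalization. The uniqueness hypothesis in \Cref{assumption:wkb} ensures this is the $V$ realized by $\pi_\theta$.

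Next I would substitute $V=f_0$ into the $\theta^{1}$ equation; after the cancellations already enforced by the eikonal relation it should collapse to the transport equation $\nabla f_0\cdot\nabla\bigl(\log c_0+f_1\bigr)=0$. Thus $\log c_0+f_1$ is constant along the gradient-flow lines of $f_0$, and since $f_0$ is minimized on $\cM$ with nondegenerate normal Hessian those lines are exactly the normal fibres of the tube, so $c_0(x)=e^{-f_1(x)}\,\psi\bigl(\Pm(x)\bigr)$ for some positive $\psi$ on $\cM$. The claim then reduces to showing $\psi$ is constant.

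\textbf{Showing that $\psi$ is constant is the main obstacle}, and it is exactly where the non-gradient nature of the drift matters: the perturbation $\hat b/\theta$ is only $o(1)$ in $L^{\infty}$ and need not be conservative, so $c_0$ cannot be read off from a potential. I would extract $\psi$ from the $\theta^{2}$ equation, a linear transport equation for the next WKB prefactor $c_1$ whose source is built from $c_0$ (hence from $\psi$). This transport operator annihilates fibre-constant functions, so by the Fredholm alternative the equation is solvable only if its source integrates to zero over each normal fibre against the natural Gaussian weight $e^{-f_1}\bigl|\partial_r^{2}f_0\bigr|^{-1/2}$; carrying out this fibre integration converts the solvability condition into a second-order (parabolic/elliptic) PDE for $\psi$ on the compact boundaryless manifold $\cM$. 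By the strong maximum principle \citep{gilbarg1977elliptic}, its only solutions on such a manifold are constants, so $\psi\equiv C$ and $c_0=Ce^{-f_1}$, which together with $V=f_0$ is the assertion. The delicate bookkeeping in this step is to verify that $\hat b$ (controlled only in $C^{0}$) and $\hat c$ genuinely stay at strictly lower order throughout the hierarchy --- likely by phrasing the $\theta^{1}$ and $\theta^{2}$ balances, and the fibre-integral solvability condition, in integrated rather than pointwise form --- so that the clean equations above are not corrupted.
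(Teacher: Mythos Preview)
Your hierarchy up through the transport equation matches the paper: the eikonal relation gives $V=f_0$ (the paper simply cites the uniqueness clause in \Cref{assumption:wkb} rather than running characteristics, but your argument is fine), and the order-$\theta$ equation gives $\nabla f_0\cdot\nabla(\log c_0+f_1)=0$, so that $\tilde c_0:=e^{f_1}c_0$ is constant along normal fibres and the problem reduces to showing $\tilde c_0$ is constant on $\cM$.

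Where you diverge from the paper is the mechanism for pinning down $\tilde c_0|_\cM$. You propose to read it off as a Fredholm solvability condition for a ``next prefactor $c_1$'' obtained by fibre-integrating the order-$\theta^{2}$ equation against a Gaussian weight. This route has two problems under the paper's hypotheses. First, \Cref{assumption:wkb} only posits $c_\theta=c_0+\hat c$ with $\hat c\to0$ in $C^{2}$; there is no $c_1$, so there is no equation whose solvability you can invoke. Second, the transport operator $\nabla f_0\cdot\nabla$ restricted to a bounded normal fibre does not have a clean Fredholm alternative with the weight you wrote (the adjoint kernel with respect to Lebesgue is $|r|^{-(d-n)}$, not a Gaussian), so even formally the fibre-integration step would need more care than you indicate.

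The paper avoids all of this by a pointwise trick: it takes the full Fokker--Planck identity at the \emph{next} order and evaluates it \emph{on $\cM$}, where $\nabla f_0=0$. Every appearance of the uncontrolled remainder $\hat c$ in that identity carries a factor $\nabla f_0$, so on $\cM$ they vanish and one is left with the clean relation $\Delta c_0+\langle\nabla f_1,\nabla c_0\rangle+\mathrm{Tr}(\nabla^{2}f_1)c_0=0$, equivalently $\Delta\tilde c_0-\langle\nabla f_1,\nabla\tilde c_0\rangle=0$ on $\cM$. This is an equation in the ambient $\mathbb R^{d}$ variables, so it still contains normal derivatives $\partial_r\tilde c_0$ and $\partial_r^{2}\tilde c_0$; the paper eliminates these not by fibre integration but by \emph{differentiating the transport equation} $\nabla f_0\cdot\nabla\tilde c_0=0$ once and twice in $r$ and evaluating at $r=0$, which yields $\partial_r\tilde c_0|_{r=0}=0$ and expresses $\mathrm{Tr}(\partial_r^{2}\tilde c_0)|_{r=0}$ as a first-order operator in $\partial_u\tilde c_0$ (their Lemma~\ref{lemma:grad_r}). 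After a coordinate computation (Lemma~\ref{lemma:local_O1_u}) the surviving equation on $\cM$ is $\Delta_\cM\tilde c_0+(\text{first-order in }u)=0$ with no zeroth-order term, and the strong maximum principle finishes. So the maximum-principle endgame you anticipated is right, but the route to the intrinsic elliptic equation is ``evaluate on $\cM$ and differentiate the transport relation,'' not ``Fredholm solvability by fibre averaging.''
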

\begin{proof}
By Fokker-Planck equation for the stationary distribution, we have that
\begin{align*}
  0 = \dive \left(- b_{\theta}(x) \pi_{\theta}(x) + \theta \frac{\partial \pi_{\theta}(x)}{\partial x} \right).
\end{align*}
By plugging in the WKB ansatz, we have that
\begin{equation} \label{eq:fp_expand}
\begin{gathered}
  - \dive (b_{\theta}) c_{\theta} 
  - \inp*{b_{\theta}}{\frac{\partial c_{\theta}}{\partial x} - \frac{1}{\theta} \frac{\partial V}{\partial x} c_{\theta}} 
  + \theta \Tr{\frac{\partial^2 c_{\theta}}{\partial x^2}} 
  - 2 \inp*{\frac{\partial c_{\theta}}{\partial x}}{\frac{\partial V}{\partial x}} \\
  - \Tr{\frac{\partial^2 V}{\partial x^2} } c_{\theta}
  + \frac{1}{\theta} \norm*{\frac{\partial V}{\partial x}}^2 c_{\theta} = 0.
\end{gathered}
\end{equation}
Next by the method of WKB, we will equate different orders of $\theta$ in
the above equation to solve for $V(x)$ and $c_0(x)$, starting from
the lowest order $\theta^{-1}$.
It is easier to show a function is constant, therefore, for $c_0$, we will
define $\tilde c_0(x) = \exp(f_1(x)) c_0(x)$, and try to show that it is constant.

\paragraph{Order $\theta^{-1}$} In this order, we have that
\begin{align*}
  \inp*{\frac{\partial f_0}{\partial x}}{\frac{\partial V}{\partial x}} = \norm*{\frac{\partial V}{\partial x}}^2.
\end{align*}
This corresponds to the Hamilton-Jacobi equation typically appears in the WKB
approximation.
The equation gives the solution for $V(x)$ as $V(x) = f_0(x)$.
Plugging this solution into \Cref{eq:fp_expand}, we can get
\begin{equation*}
\begin{gathered}
  - \Tr{- \theta \frac{\partial^2 f_1}{\partial x^2} + \frac{\partial \hat b}{\partial x}} c_{\theta} 
  - \inp*{b_{\theta}}{\frac{\partial c_{\theta}}{\partial x}}
   + \inp*{-\theta \frac{\partial f_1}{\partial x} + \hat b}{\frac{1}{\theta} \frac{\partial f_0}{\partial x} c_{\theta}} 
  + \theta \Tr{\frac{\partial^2 c_{\theta}}{\partial x^2}} \\
  - 2 \inp*{\frac{\partial c_{\theta}}{\partial x}}{\frac{\partial f_0}{\partial x}} = 0.
\end{gathered}
\end{equation*}
We will work with this equation for equating the higher orders.

\paragraph{Order $\theta^0$} In this order, we have that
\begin{align*}
  \inp*{\frac{\partial f_1}{\partial x}}{\frac{\partial f_0}{\partial x}} c_0
  + \inp*{\frac{\partial c_0}{\partial x}}{\frac{\partial f_0}{\partial x}} = 0.
\end{align*}
This is known as the transport equation~\citep{bouchet2016generalisation}.
It shows how $c_0$ changes along the
gradient of $f_0$.
Next, we express the equation in terms of $\tilde c_0$:
\begin{align} \label{eq:O0}
  \inp*{\frac{\partial \tilde  c_0}{\partial x}}{\frac{\partial f_0}{\partial x}} = 0.
\end{align}
This implies that along the gradient of $f_0$, $\tilde c_0$ is constant.
Since the manifold $\cM$ consists of the minimizers of $f_0$, for any point $x$
in $K$, the value of $\tilde c_0(x)$ is the same as the value at the corresponding
minimizer $y$ on $\cM$ following the gradient flow of $f_0$. Formally, we have
\begin{align*}
  \tilde c_0(x) = \tilde c_0(\psi^x(+\infty)),
\end{align*}
where $\psi^x(t)$ follows $d\psi^x(t) / dt = -\nabla f_0(\psi^x(t))$ with $\psi^x(0) = x$
given the initial condition $\psi^x(0) = x$.
Therefore, we see that to solve for $\tilde c_0$, we need to know the value
of it on $\cM$.
We find that the next order equation will help us to solve for $\tilde c_0$ on $\cM$.

\paragraph{Order $\theta^1$} In this order, if we directly find all terms
in \Cref{eq:fp_expand} that are of order $\theta^1$, we will find that it 
includes higher order terms, e.g., $\hat c(x, \theta)$.
However, since we only care about the solution on $\cM$, we evaluate the
equation on $\cM$ and interestingly find that it does not include such higher order terms,
as crucially the factor $\partial f_0 / \partial x$ becomes 0 at $\cM$.
Specifically, for $x \in \cM$, we have that
\begin{align*}
  \Tr{\frac{\partial^2 f_1}{\partial x^2}} c_0
  + \inp*{\frac{\partial f_1}{\partial x}}{\frac{\partial c_0}{\partial x}}
  + \Tr{\frac{\partial^2 c_0}{\partial x^2}} = 0.
\end{align*}
Replacing $c_0$ with $\tilde c_0 \exp(-f_1)$, we have that
\begin{align*}
  \Tr{\frac{\partial^2 \tilde c_0}{\partial x^2}}
  - \inp*{\frac{\partial \tilde c_0}{\partial x}}{\frac{\partial f_1}{\partial x}} = 0.
\end{align*}
Our goal here would be to solve for $\tilde c_0$ on $\cM$, and apparently
it would be helpful to convert the equation to the local coordinates and
establish a PDE for the manifold chart coordinate $u$.

\paragraph{Local coordinates}
We convert the above order $\theta^1$ equation about $\tilde c_0$ 
to the local coordinates $z = (u, r)$ and get that for $r=0$,
i.e., points on $\cM$,
\begin{equation} \label{eq:local_O1}
\begin{aligned}
  0 &= \frac{1}{\abs{J}} \dive_z \left( \abs{J} G^{-1} \frac{\partial \tilde c_0}{\partial z}  \right)
  - \inp*{\frac{\partial \tilde c_0}{\partial z}}{G^{-1} \frac{\partial f_1}{\partial z}} \\
   &= \frac{1}{\abs{J}} \left( \inp*{\dive_z \left( \abs{J} G^{-1} \right)}{\frac{\partial \tilde c_0}{\partial z}}
   +  \Tr{\abs{J} G^{-1} \frac{\partial^2 \tilde c_0}{\partial z^2}} \right)
   - \inp*{\frac{\partial \tilde c_0}{\partial z}}{G^{-1} \frac{\partial f_1}{\partial z}},
\end{aligned}
\end{equation}
where $G = J^\T J$ and the divergence of a matrix is understood as the 
divergence of the column vectors.
Note that we cannot simply conclude from the above equation that $\tilde c_0$ is
constant, by say, the strong maximum principle, since the gradients of $\tilde c_0$
include not only the manifold chart coordinate $u$ but also the normal coordinate $r$.
Therefore, we have to further derive a PDE about $u$ and any gradients of
$\tilde c_0$ w.r.t. $r$ should be replaced by known functions.
Fortunately those gradients can be solved by the equation we obtain at order $\theta^0$.

First, let us derive from \Cref{eq:local_O1} a PDE about $u$:
\begin{lemma} \label{lemma:local_O1_u}
  From \Cref{eq:local_O1}, we have that for $r=0$,
  \begin{align} \label{eq:local_O1_u}
    \Delta_{\cM} \tilde c_0 (u)
    - \inp*{\frac{\partial \tilde c_0}{\partial u}}{g^{-1}\frac{\partial f_1}{\partial u}}
    + \frac{1}{\sqrt{\abs{g}}} \inp*{\frac{\partial \abs{J}}{\partial r}}{\frac{\partial \tilde c_0}{\partial r}}
    - \inp*{\frac{\partial \tilde c_0}{\partial r}}{\frac{\partial f_1}{\partial r}}
    + \Tr{\frac{\partial^2 \tilde c_0}{\partial r^2}} = 0,
  \end{align}
  where $\Delta_{\cM}$ is the Laplace-Beltrami operator on $\cM$.
\end{lemma}
\begin{proof}
  Let the index $i, j$ when showing at $\partial$ be derivatives w.r.t. the $i$ or $j$-th coordinate of $u$,
  and let $p, q$ be the derivatives w.r.t. $r$ respectively.
  Any index variable that is not explicitly defined is understood to be summed over.
  From \Cref{eq:local_O1}, by carefully expanding the divergence, the inner product term becomes
  \begin{align*}
    &\fakeeq \evalat{\inp*{\dive\left(\abs{J} G^{-1} \right)}{\nabla \tilde c_0}}{r=0} \\
    &= \sqrt{\abs{g}} \partial_j \left[g^{-1}\right]_{i,j} \partial_i \tilde c_0
    + \left[g^{-1}\right]_{i,j} \partial_j \sqrt{\abs{g}} \partial_i \tilde c_0
    - \sqrt{\abs{g}}\left[g^{-1}\right]_{i, k} \evalat{\partial_p d_{p,k}}{r=0} \partial_i \tilde c_0
  + \partial_p \abs{J} \partial_p \tilde c_0.
  \end{align*}
  For the trace term, we have
  \begin{align*}
  \evalat{\Tr{\abs{J} G^{-1} \nabla^2 \tilde c_0 }}{r=0} =
  \sqrt{\abs{g}} \left[g^{-1}\right]_{i,j} \partial_{j,i} \tilde c_0
  + \sqrt{\abs{g}} \partial_{p,p} \tilde c_0.
  \end{align*}
  Now we look at \Cref{eq:local_O1_u}.
  From the definition of Laplace-Beltrami operator, we have
  \begin{align*}
    \Delta_{\cM} \hat c_0(u) 
    &= \frac{1}{\sqrt{\abs{g}}} \partial_i \left(\sqrt{\abs{g}} \left[g^{-1}\right]_{i,j} \partial_j \tilde c_0\right) \\
    &= \frac{1}{\sqrt{\abs{g}}} \partial_i \sqrt{\abs{g}} \left[g^{-1}\right]_{i,j} \partial_j \tilde c_0
    + \partial_i \left[g^{-1}\right]_{i,j} \partial_j \tilde c_0
    + \left[g^{-1}\right]_{i,j} \partial_{i,j} \tilde c_0.
  \end{align*}
  Since $G^{-1}$ evaluated at $r=0$ is $\begin{bmatrix}
    g^{-1} & 0 \\
    0 & I
  \end{bmatrix}$,
  the term $- \inp*{\frac{\partial \tilde c_0}{\partial z}}{G^{-1} \frac{\partial f_1}{\partial z}}$
  in \Cref{eq:local_O1} matches
  $- \inp*{\frac{\partial \tilde c_0}{\partial u}}{g^{-1}\frac{\partial f_1}{\partial u}} - \inp*{\frac{\partial \tilde c_0}{\partial r}}{\frac{\partial f_1}{\partial r}}$
  in \Cref{eq:local_O1_u}.
  Now compare the terms of \Cref{eq:local_O1_u} and \Cref{eq:local_O1},
  the only remaining term is 
  \begin{align*}
  \left[g^{-1}\right]_{i, k} \evalat{\partial_p d_{p,k}}{r=0} \partial_i \tilde c_0,
  \end{align*}
  which we will prove is 0. We will show that $\evalat{\sum_p \partial_p d_{p,k}}{r=0} = 0$.

  Since the columns of $\cN$ are orthonormal, we have for any $p$, $\sum_i \left(\cN_{i,p}\right)^2 = 1$.
  Taking derivative for both sides to $u_j$, we have for any $p, j$, $\sum_i \cN_{i,p} \partial_j \cN_{i, p} = 0$.
  We also have by definition that for any $p, j$,
  \begin{align*}
  \left[\cN^\T \nabla \cN r \right]_{p, j} = \cN_{i,p} \partial_j \cN_{i,l} r_l.
  \end{align*}
  Using the above two results, we have for any $j$,
  \begin{align*}
    \sum_p \partial_p d_{p,j} = \sum_p \partial_p \left(\cN_{i,p} \partial_j \cN_{i, l} r_l\right)
  = \sum_p \cN_{i,p} \partial_j \cN_{i, p} = 0. 
  \end{align*}
\end{proof}

From \Cref{eq:local_O1_u}, we see that it contains gradients of $\tilde c_0$ w.r.t. $r$,
which we will solve by the order $\theta^0$ equation.

\begin{lemma} \label{lemma:grad_r}
  From \Cref{eq:O0}, we have that on the manifold $\cM$,
  \begin{align*}
    \frac{\partial \tilde c_0}{\partial r} (u, 0) = 0,
    \quad \text{and} \quad \Tr{\frac{\partial^2 \tilde c_0}{\partial r^2}} (u, 0)
    = \inp*{h(u)}{\frac{\partial \tilde c_0}{\partial u}(u, 0)},
  \end{align*}
  where $h(u)$ does not contain the unknown function $\tilde c_0$.
\end{lemma}
\begin{proof}
  Since we care about the evaluation of the equation on $\cM$, we start by
  changing the coordinates to the local coordinates $z = (u, r)$ from
  \Cref{eq:O0} to get that
  \begin{align*}
    \inp*{\frac{\partial \tilde c_0}{\partial z}}{G^{-1}\frac{\partial f_0}{\partial z}} = 0.
  \end{align*}
  Next, we compute the gradient w.r.t. $z$:
  \begin{align} \label{eq:O0_local_grad}
    \frac{\partial^2 \tilde c_0}{\partial z^2} G^{-1} \frac{\partial f_0}{\partial z}
     + \frac{\partial^2 f_0}{\partial z^2} G^{-1} \frac{\partial \tilde c_0}{\partial z}
     + \left(\frac{\partial \vecop{G^{-1}}}{\partial z}\right)^{T} 
     \left(\frac{\partial f_0}{\partial z} \otimes \frac{\partial \tilde c_0}{\partial z}\right) = 0,
  \end{align}
  where $\otimes$ is the Kronecker product.
  When we evaluate this equation at $r=0$, the factor $\partial f_0 / \partial r$
  becomes 0, $ G^{-1}(u, 0) = \begin{bmatrix}
    g^{-1} & 0 \\
    0 & I
  \end{bmatrix}$ and $\frac{\partial^2 f_0}{\partial z^2} (u, 0) = 
  \begin{bmatrix}
    0 & 0 \\
    0 & \partial^2 f_0 / \partial r^2 (u, 0)
  \end{bmatrix}. $
  Then we have
  \begin{align*}
    \frac{\partial^2 f_0}{\partial r^2} (u, 0) \frac{\partial \tilde c_0}{\partial r} (u, 0) = 0.
  \end{align*}
  Since $\frac{\partial^2 f_0}{\partial r^2} (u, 0)$ is full-rank, we have that
  $\frac{\partial \tilde c_0}{\partial r} (u, 0) = 0$.

  Next, we compute gradient again for \Cref{eq:O0_local_grad}, and evaluate at $r=0$.
  Ignoring $\partial f_0 / \partial z$ which is 0, we have the $i, j$-th element of the matrix is
  \begin{equation} \label{eq:O0_local_grad_2}
  \begin{gathered}
    \left[\frac{\partial^2 \tilde c_0}{\partial z^2} G^{-1} \frac{\partial^2 f_0}{\partial z^2}\right]_{i, j}
    + \left[ \frac{\partial^2 f_0}{\partial z^2} G^{-1} \frac{\partial^2 \tilde c_0}{\partial z^2}\right]_{i, j}
    + \frac{\partial^3 f_0}{\partial z_i \partial z_k \partial z_j} 
        \left[G^{-1} \frac{\partial \tilde c_0}{\partial z}\right]_k \\
    + \frac{\partial^2 f_0}{\partial z_i \partial z_k} 
      \frac{\partial G^{-1}_{k, p}}{\partial z_j} \frac{\partial \tilde c_0}{\partial z_p}
    + \frac{\partial \tilde c_0}{\partial z_k} \frac{\partial G^{-1}_{k, p}}{\partial z_i}
       \frac{\partial^2 f_0}{\partial z_p \partial z_j} = 0,
  \end{gathered}
  \end{equation}
  where $\partial \tilde c_0 / \partial r$ is 0.
  The first two terms have nice structure when evaluated at $r=0$, as
  \begin{align*}
    \frac{\partial^2 \tilde c_0}{\partial z^2} G^{-1} \frac{\partial^2 f_0}{\partial z^2}
     = \begin{bmatrix}
      0 & \frac{\partial^2 \tilde c_0}{\partial u \partial r} \frac{\partial^2 f_0}{\partial r^2} \\
      0 & \frac{\partial^2 \tilde c_0}{\partial r^2} \frac{\partial^2 f_0}{\partial r^2}
     \end{bmatrix}
     \quad \text{and} \quad
    \frac{\partial^2 f_0}{\partial z^2} G^{-1} \frac{\partial^2 \tilde c_0}{\partial z^2}
     = \begin{bmatrix}
      0 & 0 \\
      \frac{\partial^2 f_0}{\partial r^2} \frac{\partial^2 \tilde c_0}{\partial r \partial u}  &
      \frac{\partial^2 f_0}{\partial r^2} \frac{\partial^2 \tilde c_0}{\partial r^2}
     \end{bmatrix}.
  \end{align*}
  We then multiply \Cref{eq:O0_local_grad_2} by matrix $
  \begin{bmatrix}
  0 & 0 \\
  0 & \left(\frac{\partial^2 f_0}{\partial r^2}\right)^{-1}
  \end{bmatrix}
  $ from the left, and get
  \begin{align*}
    \begin{bmatrix}
      0 & 0 \\
      0 & \left(\frac{\partial^2 f_0}{\partial r^2}\right)^{-1} \frac{\partial^2 \tilde c_0}{\partial r^2} \frac{\partial^2 f_0}{\partial r^2}
    \end{bmatrix}
    +
    \begin{bmatrix}
      0 & 0 \\
      \frac{\partial^2 \tilde c_0}{\partial r \partial u}  & \frac{\partial^2 \tilde c_0}{\partial r^2}
    \end{bmatrix}
    + \text{remaining terms} = 0.
  \end{align*}
  Since $\partial \tilde c_0 / \partial r$ is 0, the element of the remaining 
  terms all have one and only one factor of $\partial \tilde c_0 / \partial u_i$
  for some $i$.
  Taking the trace of the above equation, and we have proved the second statement.
\end{proof}

Now we plug in \Cref{lemma:grad_r} to \Cref{lemma:local_O1_u}, and obtain
a PDE about $\tilde c_0(\cdot, 0)$ on $u$ whose second order derivatives are the Laplace-Beltrami operator,
and the zero-th order term, i.e., the term that includes the function value $\tilde c_0(\cdot, 0)$, is 0.
Therefore, we can conclude by strong maximum principle~\citep[Theorem~3.5]{gilbarg1977elliptic} that $\tilde c_0(\cdot, 0)$ is a constant.
According to the equation at order $\theta^0$, we obtain that
$\tilde c_0$ off-manifold is the same constant.
\end{proof}

\subsection{Proof for \Cref{sec:recover_uniform}}
\label{subsec:proof_recover_uniform}

We will first prove \Cref{thm:recover_uniform_gradient}, which follows
similar proof technique as \Cref{thm:recover_pdata}, and then turn to
the harder case of \Cref{thm:recover_uniform}.

\begin{proof}[Proof of \Cref{thm:recover_uniform_gradient}]
  The proof follows the same as \Cref{thm:recover_pdata}, except that now we use
  \Cref{thm:dist_f_theta} with $\theta = \sigma^{2-\alpha}$.
  In this case, $f_0(x) = \norm{x - \Pm(x)}^2/2 $,
  $f_1 \equiv 0$ and all other terms are asymptotically small compared
  to $\sigma^{2-\alpha}$.
  According to the proof of \Cref{thm:recover_pdata}, the determinant of the
  Hessian of $f_0$ in the normal direction is the same for all $u$,
  therefore, we recover the uniform distribution on the manifold.
  
  The only thing remains to verify is to ensure
  \begin{align*}
    \lim_{\sigma \to 0} \int_{\bR^d \backslash T_{\cM}(\epsilon)} \tilde \pi_{\sigma}(x) dx
    = \lim_{\sigma \to 0} \frac{\int_{\bR^d \backslash T_{\cM}(\epsilon)} \exp(-\sigma^{\alpha} f_{\sigma}(x)) dx}
    {\int_{\bR^d} \exp(-\sigma^{\alpha} f_{\sigma}(x)) dx} = 0.
  \end{align*}
  Since we have $\lim_{\sigma \to 0} \int_K \tilde \pi(x) dx \to 1$, we only
  need to consider within $K$.
  For the numerator, we can do similarly as \Cref{lemma:limit_f_outside} to obtain
  \begin{align*}
    \int_{K \backslash T_{\cM}(\epsilon)} \exp(-\sigma^{\alpha} f_{\sigma}(x)) dx
    \leq \vol(K) \left(\frac{1}{(2\pi \sigma^2)^{d/2}} \right)^{\sigma^{\alpha}} 
    \exp\left(-\frac{\epsilon^2}{4\sigma^{2-\alpha}} + o\left(\sigma^{\alpha+\beta}\right)\right),
  \end{align*}
  where $2-\alpha > 0$ and $\alpha + \beta > 0$.
  There exists $\sigma_0$, such that for all $\sigma < \sigma_0$,
  the $o(\sigma^{\alpha+\beta})$ term is upper bounded by $\epsilon^2 / 8 \sigma^{2-\alpha}$.
  Then we have the numerator upper bounded by
  \begin{align*}
    \vol(K) \left(\frac{1}{(2\pi \sigma^2)^{d/2}} \right)^{\sigma^{\alpha}} 
    \exp\left(-\frac{\epsilon^2}{8\sigma^{2-\alpha}}\right).
  \end{align*}
  For the denominator, it is lower bounded by
  \begin{align*}
    &\fakeeq \int_{T_{\cM}(\epsilon / 2)} \left(\frac{1}{(2\pi \sigma^2)^{d/2}} \right)^{\sigma^{\alpha}} 
    \exp\left(-\frac{\norm{x - \Phi(x)}^2}{2 \sigma^{2-\alpha}} + o\left(\sigma^{\alpha+\beta}\right) \right) dx \\
    &\geq \int_{T_{\cM}(\epsilon / 2)} \left(\frac{1}{(2\pi \sigma^2)^{d/2}} \right)^{\sigma^{\alpha}} 
    \exp\left(-\frac{\epsilon^2}{8 \sigma^{2-\alpha}} + o\left(\sigma^{\alpha+\beta}\right) \right) dx.
  \end{align*}
  There exists $\sigma_1$, such that for all $\sigma < \sigma_1$,
  the $o(\sigma^{\alpha+\beta})$ term is lower bounded by $\epsilon^2 / 16 \sigma^{2-\alpha}$.
  Then the denominator is lower bounded by
  \begin{align*}
    \vol(T_{\cM}(\epsilon / 2)) \left(\frac{1}{(2\pi \sigma^2)^{d/2}} \right)^{\sigma^{\alpha}} 
    \exp\left(-\frac{\epsilon^2}{16\sigma^{2-\alpha}}\right).
  \end{align*}
  Therefore, the ratio is upper bounded by
  \begin{align*}
    \frac{\vol(K)}{\vol(T_{\cM}(\epsilon / 2))} \exp\left(-\frac{\epsilon^2}{16\sigma^{2-\alpha}}\right),
  \end{align*}
  which goes to zero as $\sigma \to 0$.
\end{proof}

Next, for \Cref{thm:recover_uniform}, we use results in \Cref{subsec:wkb_approximation} to find
an approximate stationary distribution of the SDEs considered in
\Cref{sec:recover_uniform}, and then use results in \Cref{subsec:general_proof}
to prove the main theorem.

\begin{proof}[Proof of \Cref{thm:recover_uniform}]
  The SDE we consider can be also written as
  \begin{align*}
    dX_t = \frac{\sigma^2 s(X_t, \sigma)}{\sigma^{2-\alpha}} dt + \sqrt{2} dW_t,
  \end{align*}
  Therefore, we want to apply \Cref{thm:wkb_solution} with $\theta = \sigma^{2-\alpha}$
  and $b_{\theta} = \sigma^2 s(X_t, \sigma)$.
  We assert that under our assumption of \Cref{thm:recover_uniform}, we can
  write
  \begin{align*}
    b_{\theta}(x) = - \frac{\partial \norm{x - \Pm(x)}^2 / 2}{\partial x} + o\left(\sigma^{2-\alpha}\right),
  \end{align*}
  meaning that $f_0 = \norm{x - \Pm(x)}^2 / 2$ and $f_1 \equiv 0$.
  We will discuss the proof of this later.
  If we have the above, by \Cref{thm:wkb_solution}, the stationary distribution
   in $T_{\cM}(\epsilon)$ is given by
  \begin{align*}
    \pi_{\sigma}(x) \propto \exp\left(- \frac{\norm{x - \Pm(x))}^2/2}{\sigma^{2-\alpha}}
    + o(1)\right),
  \end{align*}
  where the error in the prefactor is equivalent to the error in the exponent.
  The remaining proof follows the same as \Cref{thm:recover_uniform_gradient}.
  
  It remains to prove the assertion about $b_{\theta}$.
  A sufficient condition is that
  \begin{align} \label{eq:nabla_log_p_expansion}
    \sup_{x \in T_{\cM}(\epsilon)} \norm*{\nabla \log p_{\sigma}(x) + \frac{1}{\sigma^2} \frac{\partial \norm{x - \Pm(x)}^2 / 2}{\partial x}} = O(1).
  \end{align}
  Because if \Cref{eq:nabla_log_p_expansion} holds, we have uniformly for any $x \in T_{\cM}(\epsilon)$,
  \begin{align*}
    &\fakeeq \norm*{b_{\theta}(x) + \frac{\partial \norm{x - \Pm(x)}^2 / 2}{\partial x}} \\
    &= \norm*{\sigma^2 s(x, \sigma) + \frac{\partial \norm{x - \Pm(x)}^2 / 2}{\partial x}} \\
    &= \norm*{\sigma^2 s(x, \sigma) - \sigma^2 \nabla \log p_{\sigma}(x) + \sigma^2 \nabla \log p_{\sigma}(x)
      + \frac{\partial \norm{x - \Pm(x)}^2 / 2}{\partial x}} \\
    &\leq \norm*{\sigma^2 s(x, \sigma) - \sigma^2 \nabla \log p_{\sigma}(x)} 
      + \norm*{\sigma^2 \nabla \log p_{\sigma}(x) + \frac{\partial \norm{x - \Pm(x)}^2 / 2}{\partial x}} \\
    &= o(\sigma^{2 + \beta}) + O(\sigma^2) \\
    &= o(\sigma^{2-\alpha}),
  \end{align*}
  where the last inequality holds because $\alpha > \max\{-\beta, 0\}$.
  In the theorem, we assumed $L^{\infty}\left(T_{\cM}(\epsilon)\right)$ norm,
  which is the same as $\sup_{x \in T_{\cM}(\epsilon)}$ since $s(x, \sigma)$
  and $\nabla \log p_{\sigma}(x)$ are continuous.
  
  Therefore, it remains to prove \Cref{eq:nabla_log_p_expansion}. 
  We will prove for the case of VE, and the case of VP holds with similar argument.
  The gradient of the distance function can be written as:
  \begin{align*}
    \frac{\partial \norm*{x - \Pm(x)}^2/2}{\partial x} = \left(I - \left(\frac{\partial \Pm(x)}{\partial x}\right)^{\T}\right) \left(x - \Pm(x)\right)
    = x - \Pm(x),
  \end{align*}
  where the last equality holds because $x - \Pm(x)$ is orthogonal to the manifold
  and the image of $\frac{\partial \Pm(x)}{\partial x}$ is in the tangent space of the manifold~\citep{leobacher2021existence}.
  Then note that
  \begin{align*}
    \nabla \log p_{\sigma}(x) =
    \frac{\nabla p_{\sigma}(x)}{p_{\sigma}(x)}
    = \frac{\int_{\cM} \cN(x; u, \sigma^2 I) \pdata(u) \frac{\Phi(u) - x}{\sigma^2}du}{\int_{\cM} \cN(x; u, \sigma^2 I) \pdata(u) du}.
  \end{align*}
  For the denominator, follow the same as in the proof of \Cref{lemma:limit_V}
  to obtain that
  \begin{align*}
    p_{\sigma}(x) = \exp\left(-\frac{\norm{x - \Pm(x)}^2}{2\sigma^2}\right)
    \frac{\left(2\pi \sigma^2\right)^{(n-d)/2} \pdata(\Phi^{-1}(\Pm(x))}{\sqrt{\abs*{\hat H(\Phi^{-1}(\Pm(x)), x)}}} \left( 1 + O(\sigma) \right),
  \end{align*}
  since \Cref{eq:p_sigma_ve} holds and $\pdata$ is uniformly bounded away from zero.
  We could do the same for the numerator, however, the $O(\sigma)$ error is not enough
  here. Intuitively, the numerator would be 
  \begin{align*}
    \exp\left(-\frac{\norm{x - \Pm(x)}^2}{2\sigma^2}\right)
    \frac{\left(2\pi \sigma^2\right)^{(n-d)/2} \pdata(\Phi^{-1}(\Pm(x))}{\sqrt{\abs*{\hat H(\Phi^{-1}(\Pm(x)), x)}}} 
    \left( \frac{\Pm(x) - x}{\sigma^2} + O(1/\sigma) \right).
  \end{align*}
  Apparently, the error term is not enough to prove \Cref{eq:nabla_log_p_expansion}.

  Therefore, we turn to stronger Laplace's method result that has an error term
  of $O\left(\sigma^2\right)$, i.e., the $h(\theta)$ term in \Cref{corollary:laplace}
  could be improved to $O(\theta)$ instead of $O(\sqrt{\theta})$.
  However, such result should have the cost of requiring the function $F$ (as the
  notation use in \Cref{corollary:laplace}) to be $C^4$ and $g$ to be $C^2$,
  a stronger condition\footnote{Weaker condition such as $C^{1,1}$ is also possible, see \citet[Theorem 2.4]{majerski2015simple}.}.
  Formally, we have that
  \begin{align*}
    &\fakeeq \sigma^2 \nabla \log p_{\sigma}(x) + (x - \Pm(x)) \\
    &= \frac{\int_{\cM} \cN(x; u, \sigma^2 I) \pdata(u) \left(\Phi(u) - \Pm(x)\right) du}{\int_{\cM} \cN(x; u, \sigma^2 I) \pdata(u) du},
  \end{align*}
  and we want to prove its $L^{\infty}\left(T_{\cM}(\epsilon)\right)$ norm is $O(1)$.
  For any $x \in T_{\cM}(\epsilon)$ and $v \in \{v \mid \norm{v} = 1\}$, we have that
  \begin{align*}
    &\fakeeq v^\T \left(\sigma^2 \nabla \log p_{\sigma}(x) + (x - \Pm(x))\right) \\
    &= \frac{ \int_{\cM} \cN(x; u, \sigma^2 I) \pdata(u) v^\T \left(\Phi(u) - \Pm(x)\right) du}{ \int_{\cM} \cN(x; u, \sigma^2 I) \pdata(u) du} \\
    &= \frac{ \int_{\cM} \cN(x; u, \sigma^2 I) \pdata(u) \left(v^\T \left(\Phi(u) - \Pm(x)\right) + 1\right) du}{ \int_{\cM} \cN(x; u, \sigma^2 I) \pdata(u) du}  - 1.
  \end{align*}
  The last step where we add 1 is a simple trick because the Laplace's method we 
  will use does not allow the prefactor to be 0 at the minimizer.
  Next, we multiply the numerator and denominator by $\exp\left(\frac{\norm{x - \Pm(x)}^2}{2\sigma^2}\right) \frac{\sqrt{\abs*{\hat H(\Phi^{-1}(\Pm(x)), x)}}}{\left(2\pi \sigma^2\right)^{(n-d)/2}} $,
  so that their limit does not diminish to 0.
  For the numerator, we apply \citet[Theorem 2.4]{majerski2015simple} with
  their $n=1/\sigma^2$, $t=u$, $f(u)=\norm*{x - \Phi(u)}^2/2$, $\alpha=2$
  ($f(u)$ is $C^4$ since $\Phi(u)$ is $C^4$), $B_{\delta}$ can be selected
  the same as in the proof of \Cref{lemma:limit_V}, $g(u) = \pdata(u)(v^\T \left(\Phi(u) - \Pm(x)\right) + 1)$
  (g(u) is $C^2$ since $\pdata(u)$ is $C^2$), and the minimizer is $\Phi^{-1}(\Pm(x))$.
  The upper boundedness of the constants can be easily verified by compactness
  and one can show that they are uniform for $x$ and $v$.
  Crucially, $g(\Phi^{-1}(\Pm(x))) = \pdata(\Phi^{-1}(\Pm(x)))$ is uniformly lower bounded.
  The lower boundedness of $\lambda_{\min}$ can be reasoned in the same way as in the proof of \Cref{lemma:limit_V}.
  Therefore, we have
  \begin{align*}
    v^\T \left(\sigma^2 \nabla \log p_{\sigma}(x) + (x - \Pm(x))\right)
    = \frac{1 + O(\sigma^2)}{1 + O(\sigma^2)} - 1
    = O(\sigma^2).
  \end{align*}
  Since the bound is uniformly for $x$ and $\norm{v} = 1$, we have that
  \begin{align*}
    &\fakeeq \sup_{x \in T_{\cM}(\epsilon)} \norm*{\sigma^2 \nabla \log p_{\sigma}(x) + (x - \Pm(x))} \\
    &\leq \sup_{x \in T_{\cM}(\epsilon)} \sup_{\norm{v} = 1} v^\T \left(\sigma^2 \nabla \log p_{\sigma}(x) + (x - \Pm(x))\right)
    = O(\sigma^2),
  \end{align*}
  which proves \Cref{eq:nabla_log_p_expansion}.

\end{proof}

\subsection{Proof for \Cref{sec:bayesian_inverse}}
\label{subsec:proof_with_guidence}

\begin{proof}[Proof of \Cref{thm:with_guidence}]
  The proof follows the same as \Cref{thm:recover_uniform}, except that now we
  have $f_1 = v$ when applying \Cref{thm:wkb_solution} and \Cref{thm:dist_f_theta}.
\end{proof}

\section{Experimental Details and Further Experiments}
\label{sec:exp_details}

\subsection{Numerical Simulations on Ellipse}

\paragraph{Loss function.}
In our experiments, we train the score network to predict
\[
  \hat s(x, \sigma) \coloneqq \sigma^2 s(x, \sigma),
\]
instead of $s(x,\sigma)$ directly. This formulation is more stable across noise
levels, since the leading term in the score expansion is of order $1/\sigma^2$,
making $\hat s(x,\sigma)$ an $O(1)$ target.
With this choice, the training objective becomes
\begin{align*}
&\fakeeq \frac{1}{2} \, \mathbb{E}_{u \sim \pdata}
\mathbb{E}_{x \sim \cN(\Phi(u), \sigma^2 I)} 
\left[ \sigma^2 \left\| 
s(x, \sigma) + \tfrac{x - \Phi(u)}{\sigma^2} 
\right\|^2 \right] \\
&= \frac{1}{2} \, \mathbb{E}_{u \sim \pdata}
\mathbb{E}_{x \sim \cN(\Phi(u), \sigma^2 I)} 
\left[ \tfrac{1}{\sigma^2} \left\| 
\hat s(x, \sigma) + x - \Phi(u) \right\|^2 \right].
\end{align*}
The score function $s$ is parameterized by a neural network consisting of four
transformer blocks, each with hidden dimension $128$.

\paragraph{Data and noise.}
Training data is generated from a von Mises distribution with parameter
$\kappa = 1$. The injected Gaussian noise variance $\sigma^2$ is sampled from
a range $\sigma \in [0.01, 50]$.

\paragraph{Optimization.}
We use AdamW with weight decay $1\times 10^{-4}$ and global gradient clipping at
norm $1.0$. The initial learning rate is $3\times 10^{-3}$, decayed
cosine-schedule over $4\times 10^{-4}$ steps down to $1\%$ of its initial value,
after which training continues with a constant learning rate of $4\times 10^{-4}$.
The batch size is set to $1024$.

\paragraph{Sampling.}
For sampling, we run Langevin dynamics
\[
  dx_t = \hat s(x_t, \sigma_{\min}) \, dt + \sqrt{2\sigma_{\min}^2}\, dW_t,
\]
with $\sigma_{\min} = 0.01$. This process has the same stationary distribution
as
\[
  dx_t = s(x_t, \sigma_{\min}) \, dt + \sqrt{2}\, dW_t.
\]
For the TS Langevin dynamics, the diffusion coefficient is
$\sqrt{2 \sigma_{\min}^{2-\alpha}}$ instead of $\sqrt{2 \sigma_{\min}^2}$.
We employ the Euler--Maruyama scheme with a step size of $0.1$,
running $10{,}000$ steps with $10{,}000$ runs.

\subsection{Image Generation with Diffusion Models}
\newcommand{\nco}{n_{\mathrm{corr.}}}

\paragraph{Algorithm details.}
We use a pre-trained Stable Diffusion~1.5 model with a DDPM sampler in a
predictor--corrector (PC) scheme. The pre-trained network provides a denoiser
$\epsilon(x,t,y)$, and the corresponding classifier-free guidance (CFG) score at
time $t$ is
\begin{align*}
  s_t(x,y)
  &= \underbrace{\nabla_x \log p_t(x)}_{\text{unconditional score}}
     \;+\;
     w\,\underbrace{\bigl(\nabla_x \log p_t(x\mid y)-\nabla_x \log p_t(x)\bigr)}_{\text{conditional increment}}
  \\
  &= - \frac{1}{\sigma_t}\!\left[
     \epsilon(x,t,\emptyset)\;+\;w\bigl(\epsilon(x,t,y)-\epsilon(x,t,\emptyset)\bigr)
  \right],
\end{align*}
where $y$ is the conditioning input (prompt embedding), $w$ is the guidance
scale, $\sigma_t=\sqrt{1-\bar{\alpha}_t}$, and $\bar{\alpha}_t$ is as in
\citet{ho2020denoising}. Our tempered-score framework applies to this PC
sampler by modifying only the unconditional component while leaving the guided
increment unchanged:
\begin{align*}
  \tilde{s}_t(x,y)
  \;=\; -\frac{1}{\sigma_t}\!\left[
     \sigma_t^{\alpha}\,\epsilon(x,t,\emptyset)\;+\;
     w\bigl(\epsilon(x,t,y)-\epsilon(x,t,\emptyset)\bigr)
  \right],
\end{align*}
which is consistent with \Cref{eq:modified_langevin}.
Let $\{t_i\}$ denote the discrete reverse-time schedule. After each DDPM
predictor update at level $t_i$, we perform $\nco$ \emph{corrector}
steps of Langevin dynamics with the tempered score:
\begin{align*}
  x_{k+1}
  \;=\;
  x_k \;+\; \delta_i\,\tilde{s}_{t_i}(x_k,y) \;+\; \sqrt{2\delta_i}\,\xi_k,
  \qquad
  \xi_k \sim \mathcal{N}(0,I),
\end{align*}
where the step size $\delta_i$ follows \citet[Algorithm~5]{songscore}. After the
entire reverse process, we apply an additional $\nco$ deterministic
projection steps using the unconditional score (no guidance, no noise) to
further project onto the data manifold:
\begin{align*}
  d x_\tau = \nabla \log p_{t_0}(x_\tau) d\tau.
\end{align*}
We use the same number of projection steps for both the original PC baseline
and our TS to ensure a fair comparison.

\paragraph{Hyperparameter setting.}
We adopt the default configuration of Stable Diffusion~1.5
(\url{https://huggingface.co/stable-diffusion-v1-5/stable-diffusion-v1-5}).
Unless otherwise noted, all results in
\Cref{sec:exp_diffusion} use guidance scale $w=7.5$ and $30$ inference steps.
For the best-results reported in \Cref{tab:best_result}, we perform a grid search
over the number of corrector steps in $\{5,10,15,20,30\}$ and
$\alpha \in \{0.1,0.5,1.0,1.5\}$. The original PC baseline is tuned over the
same numbers of corrector step for fairness. For CLIP evaluations, we
generate $512$ images per setting and downscale each to $256\times 256$ before
computing the scores.

\subsection{Controlled Experiment with Ground Truth Scores}

To empirically validate the rate separation results in \Cref{thm:recover_pdata,thm:recover_uniform_gradient}, we designed a controlled experiment using synthetic data where the manifold and ground truth scores are known analytically.

We consider the unit circle manifold $\mathcal{M} = \{x \in \mathbb{R}^2 \mid \|x\|=1\}$ with a Von Mises distribution $\pdata(\theta) \propto \exp(\kappa \cos(\theta - \theta_0))$, where we used $\kappa=4$ and $\theta_0 = \pi$. This setup allows us to compute the analytic ground truth score $s^*(x, \sigma)$. We then inject a deterministic error field $e(x)$ into the true score:
$$\hat{s}(x, \sigma) = s^*(x, \sigma) + e(x), \quad \text{with} \quad e(x) = -\nabla \left( \frac{1}{2} \left\| x - \begin{bmatrix} 1 \\ 0 \end{bmatrix} \right\|^4 \right).$$
The magnitude of this error term $e(x)$ is $O(1)$ with respect to $\sigma$.

We compare the performance of the standard reverse diffusion process against our proposed TS Langevin dynamics using this corrupted score $\hat{s}$.
As shown in \Cref{fig:inject_noise}, the standard reverse diffusion process using $\hat{s}$ produces samples that deviate significantly from the ground truth $\pdata$, confirming that $O(1)$ score errors are sufficient to corrupt distributional recovery, while the TS Langevin dynamics with $\alpha=1$ robustly recovers the uniform distribution on the circle.

\begin{figure}[t]
\centering
\begin{subfigure}[t]{0.45\textwidth}
  \centering
  \includegraphics[width=\textwidth]{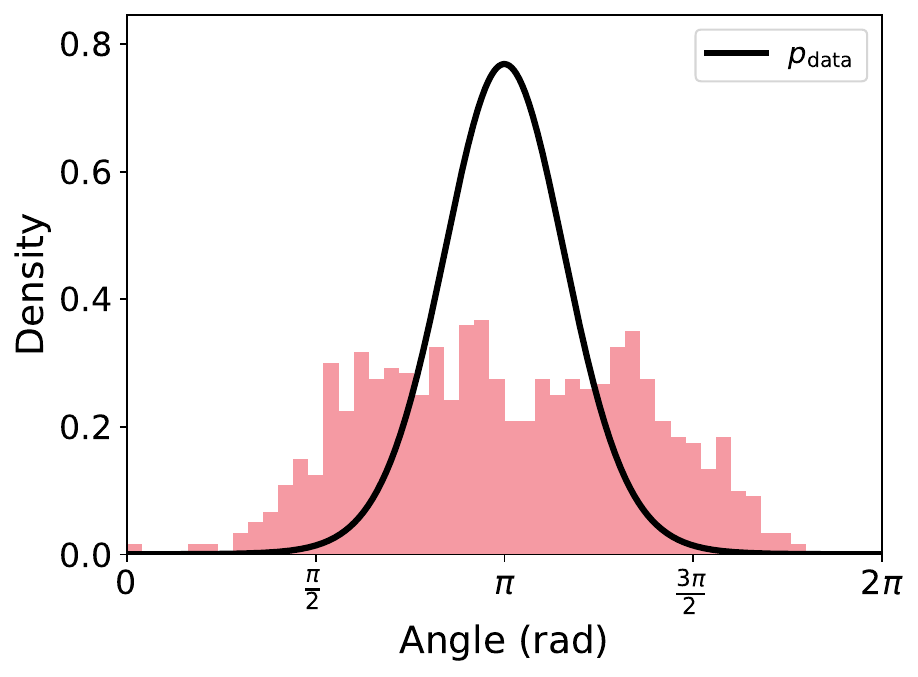}
  \caption{Distribution generated by Diffusion Model}
\end{subfigure}
\hfill
\begin{subfigure}[t]{0.45\textwidth}
  \centering
  \includegraphics[width=\textwidth]{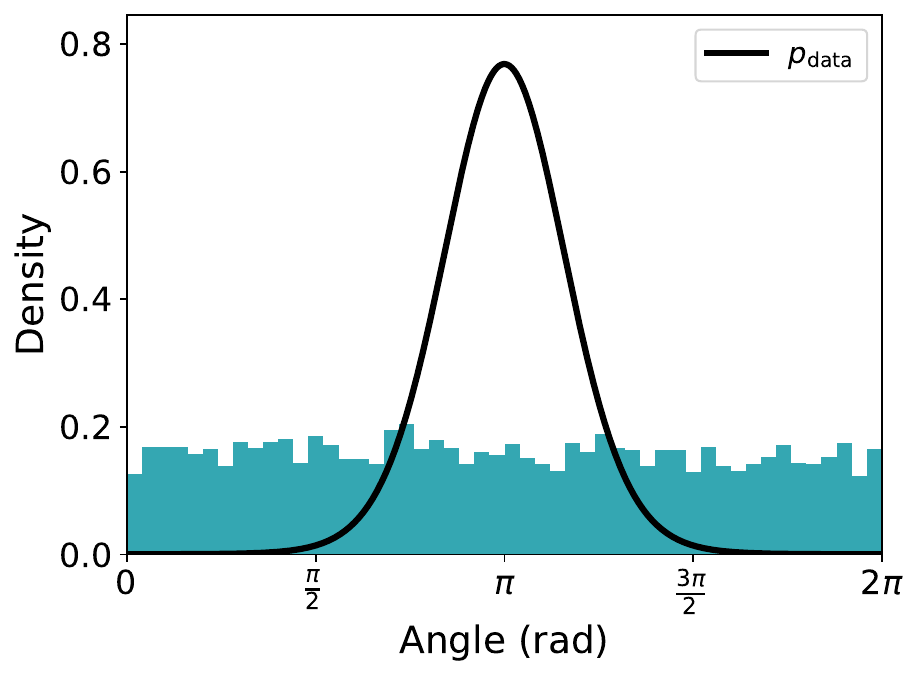}
  \caption{Distribution generated by TS-1}
\end{subfigure}
\caption{Comparison of distributions generated with VE diffusion model
versus our TS Langevin dynamics~\Cref{eq:modified_langevin} with $\alpha = 1$.}
\label{fig:inject_noise}
\end{figure}

\subsection{Sensitivity Analysis of Hyperparameter \texorpdfstring{$\alpha$}{α}}
\label{app:sensitivity_alpha}

To evaluate the sensitivity of the hyperparameter $\alpha$, we performed an ablation study using the Stable Diffusion 1.5 model, under the same setting as in \Cref{sec:exp_diffusion} of our paper. We tested $\alpha \in \{0, 0.1, 0.5, 1.0, 1.5\}$ across three prompt categories, with the number of corrector steps fixed at 10 and 20. Note that $\alpha=0$ corresponds to the standard predictor-corrector baseline.

\begin{table}[t]
  \centering
  \setlength{\tabcolsep}{2pt}
  \resizebox{0.95\textwidth}{!}{
  \begin{tabular}{l cc cc cc cc cc}
  \toprule
   & \multicolumn{2}{c}{\textbf{$\alpha = 0$}} & \multicolumn{2}{c}{\textbf{$\alpha = 0.1$}} & \multicolumn{2}{c}{\textbf{$\alpha = 0.5$}} & \multicolumn{2}{c}{\textbf{$\alpha = 1.0$}} & \multicolumn{2}{c}{\textbf{$\alpha = 1.5$}} \\
  \cmidrule(lr){2-3} \cmidrule(lr){4-5} \cmidrule(lr){6-7} \cmidrule(lr){8-9} \cmidrule(lr){10-11}
  \textbf{Prompt} & P-sim$\uparrow$ & I-sim$\downarrow$ & P-sim & I-sim & P-sim & I-sim & P-sim & I-sim & P-sim & I-sim \\
  \midrule
  \textbf{Architecture} & 27.13 & 81.81 & 27.12 & 81.73 & 27.14 & 81.67 & 27.27 & 81.57 & \textbf{27.32} & \textbf{81.52} \\
  \textbf{Furniture} & 29.30 & 81.24 & 29.32 & 81.37 & 29.33 & 81.06 & 29.58 & 80.95 & \textbf{30.16} & \textbf{80.76} \\
  \textbf{Car} & 26.30 & 87.57 & 26.30 & 87.58 & 26.31 & 87.44 & 26.37 & 87.42 & \textbf{26.50} & \textbf{87.34} \\
  \bottomrule
  \end{tabular}
  }
  \caption{Ablation of $\alpha$ for 10 corrector steps.}
  \label{tab:ablation_10}
\end{table}

\begin{table}[t]
  \centering
  \setlength{\tabcolsep}{2pt}
  \resizebox{0.95\textwidth}{!}{
  \begin{tabular}{l cc cc cc cc cc}
  \toprule
   & \multicolumn{2}{c}{\textbf{$\alpha = 0$}} & \multicolumn{2}{c}{\textbf{$\alpha = 0.1$}} & \multicolumn{2}{c}{\textbf{$\alpha = 0.5$}} & \multicolumn{2}{c}{\textbf{$\alpha = 1.0$}} & \multicolumn{2}{c}{\textbf{$\alpha = 1.5$}} \\
  \cmidrule(lr){2-3} \cmidrule(lr){4-5} \cmidrule(lr){6-7} \cmidrule(lr){8-9} \cmidrule(lr){10-11}
  \textbf{Prompt} & P-sim$\uparrow$ & I-sim$\downarrow$ & P-sim & I-sim & P-sim & I-sim & P-sim & I-sim & P-sim & I-sim \\
  \midrule
  \textbf{Architecture} & 26.87 & 81.60 & 26.85 & 81.56 & 26.97 & 81.49 & 27.06 & \textbf{80.97} & \textbf{27.10} & 81.13 \\
  \textbf{Furniture} & 28.98 & 81.72 & 28.99 & 81.65 & 29.07 & 81.40 & 29.52 & \textbf{81.15} & \textbf{30.20} & 81.39 \\
  \textbf{Car} & 26.26 & 88.06 & 26.26 & 88.09 & 26.25 & 87.95 & 26.28 & 88.07 & \textbf{26.62} & \textbf{87.70} \\
  \bottomrule
  \end{tabular}
  }
  \caption{Ablation of $\alpha$ 20 corrector steps.}
  \label{tab:ablation_20}
\end{table}

As shown in \Cref{tab:ablation_10,tab:ablation_20}, our method yields consistent improvements over the baseline ($\alpha=0$) once $\alpha$ is sufficiently large ($\alpha \ge 0.5$), demonstrating that the performance gains are robust and not limited to a narrow hyperparameter setting. The performance is particularly stable for $\alpha \in [1.0, 1.5]$, which aligns well with our theoretical framework (\Cref{thm:recover_uniform_gradient,thm:recover_uniform}) that guarantees convergence to the uniform distribution for any $\alpha < 2$. While we utilized $\alpha=1$ in \Cref{tab:corrector_steps} for simplicity, these results suggest that slightly more aggressive tempering ($\alpha=1.5$) can provide further gains in diversity and quality.

\section{Convergence of TS Langevin}
\label{sec:ts_convergence}
In this section, we deduce the mixing time analysis, i.e. the convergence analysis for a stochastic process, of the TS Langevin to the estimation of the Poincar\'e constant.
The goal is to show that TS Langevin is not necessarily slower—and can in fact be significantly faster—than the standard Langevin dynamics in terms of mixing time.
To carry out such an analysis, we assume that the score network is a gradient field, i.e. $s(\cdot, \sigma) = \nabla \log p_\theta$ for some parameterized density function.
WLOG, we assume $p_\theta$ is normalized as the normalizing factor does not affect the velocity field $s$.
\subsection{Convergence analysis of Langevin dynamics using functional inequality}
To analyze the convergence of Langevin dynamics, it is customary to use a functional inequality satisfied by the invariant measure $p_\infty$ of the Langevin dynamics (Here, $p_t$ denotes the density of the process at time $t$, and $p_\infty$ is its stationary distribution. This notation differs from $p_\theta$, and the distinction should be clear from context). In this response, we focus on the Poincaré inequality (PI): We say $p_\infty$ satisfies $\mathrm{PI}(C_\mathrm{PI})$ if for all $f \in H^1(p_\infty)$ (Sobolev space weighted by $p_\infty$),
$$
	\int \bigl(f - \int f d p_\infty\bigr)^2  d p_\infty \leq \frac{1}{C_{\mathrm{PI}}}  \int \bigl|\nabla f\bigr|^2  d p_\infty,
$$
where we call $C_{\mathrm{PI}} > 0$ is the Poincaré constant.

Consider the overdamped Langevin dynamics with potential $U_\sigma:\mathbb{R}^d\rightarrow \mathbb{R}$:
$$
	dX(t) = -\nabla U_{\sigma}(X(t))dt + \sqrt{2}dW(t),
$$
and let $p_t = \mathrm{Law}(X(t))$.
Under mild assumptions, $p_\infty \propto \exp(-U_\sigma)$ is the unique invariance measure of the above dynamics.
If $p_\infty \propto \exp(-U_\sigma)$ satisfies $\mathrm{PI}(C_{\mathrm{PI}}$), then
$$
	\chi^2(p_t, p_\infty) \le e^{-C_{\mathrm{PI}} t}\chi^2(p_0, p_\infty),
$$
where $\chi^2$ denotes the $\chi^2$-divergence. In particular, to ensure $\chi^2(p_t, p_\infty) \le \eta$ for some target accuracy $\eta > 0$, it suffices to take $t = O(\frac{1}{C_{\mathrm{PI}}}  \log \frac{1}{\eta})$. \emph{Thus, the larger the Poincaré constant, the faster the convergence.}

\subsection{Analyzing the effect of drift scaling to the Poincar\'e constant.}
Under the assumptions of our paper, the comparison between the mixing of standard Langevin and TS Langevin therefore reduces to comparing their Poincaré constants. We illustrate how drift scaling affects the Poincar\'e constant in the simple case where the data manifold is the unit circle:
$$
	\mathcal{M} = \{x \in \mathbb{R}^d : \|x\| = 1\}.
$$
In this case, the squared distance function can be computed in a closed form:
$$
	d(x) = \frac{1}{2}\mathrm{dist}^2(x, \mathcal{M}) = \frac{1}{2} \|x - \frac{x}{\|x\|}\|^2 = \frac{1}{2}(\|x\|-1)^2.
$$
Following section 5 of our paper, we assume the score error is $O(\sigma^{\beta})$ for some $-2 < \beta < 0$.
Recall that we assume the learned score is a gradient field, i.e. $s(\cdot, \sigma) = \nabla \log p_\theta$. Let us further suppose that the problem dimension is $d=2$, i.e. $x\in\mathbb{R}^2$, and the density function $p_\theta$ (corresponding to the learned score $s(\cdot, \sigma)$) has the following form
$$
	-\log p_\theta = \frac{1}{\sigma^2} d(x) + \sigma^{\beta} \phi(x), \mathrm{where}\ \phi(x) = (|x_1| - 1)^2,
$$
where $x_1$ denotes the first coordinate of $x$. Clearly, this function satisfies all requirement in our paper.
Crucially, such a construction ensures that the score error is $O(\sigma^{\beta})$.

\paragraph{Standard Langevin dynamics.}
We restate the standard Langevin dynamics for the ease of reference:
$$
	dX(t) = \nabla \log p_\theta (X(t)) dt + \sqrt{2} dW(t).
$$
Without temperature scaling, the error function $\phi(x)$ introduces two separated modes $(-1, 0)$ and $(+1, 0)$. For such a multimodal measure, classical Eyring-Kramers law or the large deviation principle results imply that the Poincaré constant can scale as
$$
	C^\mathrm{LD}_{\mathrm{PI}} = O(\exp\bigl(-\sigma^{\beta}\bigr)).
$$
Consequently, the mixing time of the original Langevin dynamics can become \emph{exponentially large} as $\sigma \to 0$.

\paragraph{TS Langevin.}
We restate the standard Langevin dynamics for the ease of reference:
$$
	dX(t) = \sigma^{\alpha}\nabla \log p_\theta (X(t)) dt + \sqrt{2} dW(t) = \nabla \log p^{\sigma^{\alpha}}_\theta (X(t)) dt + \sqrt{2} dW(t).
$$

Under mild conditions, the unique equilibrium measure is $p_\theta^{\sigma^\alpha}$.
We show that, under our standing assumptions and $\alpha > -\beta$, that its Poincaré constant,
denoted as $C_{\mathrm{PI}}^{\mathrm{TS}}$, is \emph{uniformly bounded away from zero}, \emph{independent of $\sigma$} for sufficiently small $\sigma$. Here we summarize the main steps:
\begin{itemize}
	\item Recall the Holley–Stroock perturbation principle \citep{holley1987logarithmic}:
	      Let $U$ and $\tilde U$ be two potential functions defined on $\mathbb{R}^d$.
	      Suppose that the corresponding Gibbs measures $p_\infty \propto \exp(-U)$ and $\tilde p_\infty \propto \exp(-\tilde U)$ satisfy Poincaré inequality with constants $C_\mathrm{PI}$ and $\tilde C_\mathrm{PI}$ respectively. One has
	      $$
		      \tilde C_{\mathrm{PI}} \geq \exp(-osc(\tilde U, U))C_{\mathrm{PI}},
	      $$
	      where the oscillation between $U$ and $\tilde U$ is defined as
	      $$
		      osc(\tilde U,  U):=\sup_{x \in \mathbb{R}^d}(\tilde U -  U) - \inf_{x \in \mathbb{R}^d}(\tilde U - U).
	      $$
	      Since $2 > \alpha > -\beta$, a Holley–Stroock perturbation argument implies that the PI constant of $p_\theta^{\sigma^\alpha}$ is comparable (up to a fixed factor) to that of the measure $\mu_d \propto \exp(-d(x)/\sigma^{2-\alpha})$ for small $\sigma$. We denote the Poincaré constant of this ideal potential as $C_{\mathrm{PI}}^{\mathrm{dist}}$.

	      A short proof for the above statement: Pick
	      $$
		      \tilde U = \log p_\theta^{\sigma^\alpha} \text{ and } U = d(x)/\sigma^{2-\alpha}.
	      $$
	      One can bound $osc(\tilde U, U)$ using Theorem 3.1 of our submission.
	      Apply the above principle to yield
	      $$
		      C_{\mathrm{PI}}^{\mathrm{TS}} \geq \exp(- O(\sigma^{\alpha+\beta})) C_{\mathrm{PI}}^{\mathrm{dist}} \geq \exp(- 1) C_{\mathrm{PI}}^{\mathrm{dist}},
	      $$
	      for a sufficiently small $\sigma$.

	\item  We note that the distance function $d(x)$ is locally Polyak–Łojasiewicz, and hence one can expect the recent results \citep{gong2024poincare} on the temperature-independent Poincaré constant for locally log-PL measure can be applied. The only requirement in \citep{gong2024poincare} that is not satisfied by $\mu_d$ is that it is not $C^2$ at $x=0$.

	\item We therefore introduce a smoothed potential
	$$
		V_c(x) := \frac{\|x\|^2}{2} + \frac{1}{2} - \sqrt{\|x\|^2 + c^2},
	$$
	and apply Holley–Stroock again to compare the PI constant of $\mu_d$ with that of $\mu_c \propto \exp( -V_c/\sigma^{2-\alpha})$. Choosing $c = \sigma^{3-\alpha}$, we can verify that $V_c$ satisfies the assumptions of the log-PL result \citep{gong2024poincare}, which implies that the corresponding Poincaré constant (denoted as $C_{\mathrm{PI}}^{\mathrm{smooth}}$) is \emph{independent of $\sigma$}.

	A short proof to bound $C_{\mathrm{PI}}^{\mathrm{dist}}$ with $C_{\mathrm{PI}}^{\mathrm{smooth}}$: Pick
	$$
		\tilde U(x) = d(x) /\sigma^{2-\alpha} \text{ and } U(x) = V_c(x)/\sigma^{2-\alpha}.
	$$
	To bound $osc(\tilde U,  U)$, notice that
	$$
		|d(x) - V_c(x)| = |\|x\| - \sqrt{\|x\|^2 + c^2}| = \frac{c^2}{\|x\| + \sqrt{\|x\|^2 + c^2}} \leq c = \sigma^{3 - \alpha}.
	$$
	Apply the perturbation principle to yield
	$$
		C_{\mathrm{PI}}^{\mathrm{dist}} \geq \exp(- O(\sigma)) C_{\mathrm{PI}}^{\mathrm{smooth}} \geq \exp(- 1) C_{\mathrm{PI}}^{\mathrm{smooth}},
	$$
	for a sufficiently small $\sigma$.

	\item  Combining these comparisons shows that the Poincaré constant of $p_\theta^{\sigma^\alpha}$, i.e., $C_{\mathrm{PI}}^{\mathrm{TS}}$, differs from $C_{\mathrm{PI}}^{\mathrm{dist}}$ and $C_{\mathrm{PI}}^{\mathrm{smooth}}$ only by a constant factor.

	\item  In this point, we discuss on proving $C_{\mathrm{PI}}^{\mathrm{smooth}}$ is independent of $\sigma$.
	      First, we note that directly apply the result in \citep{gong2024poincare} on the potential $V_c$ already yields that the Poincaré constant $C_{\mathrm{PI}}^{\mathrm{smooth}}$ is of order $\Omega(c)$: It is easy to verify the assumptions in \citep{gong2024poincare}, i.e. local PL, non-saddle point, growth condition beyond a compact set, and the boundedness of $|\Delta V_c|$, i.e. the absolute value of the Laplacian of $V_c$ within a compact set.
	      We can hence directly use Theorem 2 in \citep{gong2024poincare}. However, the quantity $|\Delta V_c|$ is of order $\frac{1}{c}$ in this vanilla analysis and hence we would yield that the Poincaré constant $C_{\mathrm{PI}}^{\mathrm{smooth}}$ is of order $\Omega(c)$.
	      It turns out that by exploiting the particular structure of $V_c$, we can further improve this result: We note that $|\Delta V_c|$ does \emph{not} need to hold in the neighborhood of the local maximum set and their analysis still goes through. We hence pick this neighborhood as a ball centered around the local maximum $x=0$ with radius $0.1$. One can see that outside of this neighborhood but within a compact set, $|\Delta V_c|$ is bounded by a $\sigma$-independent constant. Then $C_{\mathrm{PI}}^{\mathrm{smooth}}$ could be proved to be $\Omega(1)$.
	      We highlight that even the vanilla $\Omega(c)$ bound already establishes the exponential difference between $C_{\mathrm{PI}}^{\mathrm{TS}}$ (lower bounded by a polynomial in ${\sigma}$) and $C^\mathrm{LD}_{\mathrm{PI}}$ (upper bounded by exponential of $-1/{poly(\sigma)}$). Of course, the $\Omega(1)$ one leads to even bigger separation.
\end{itemize}

Putting these estimates together, we see that, at least in this unit-circle example, \emph{TS Langevin mixes strictly faster} than the original Langevin dynamics in the small-$\sigma$ regime. This illustrates that temperature-scaled Langevin is not necessarily slower—and can in fact be significantly faster—than the standard Langevin dynamics in terms of mixing time.
\subsection{A Refined Analysis for $C_{\mathrm{PI}}^{\mathrm{smooth}}$}
Directly applying the result in \citep{gong2024poincare}, we have that $C_{\mathrm{PI}}^{\mathrm{smooth}} = \Omega(\frac{1}{\sigma})$ for a sufficiently small $\sigma$.
In this subsection, we show that this can be improved to $C_{\mathrm{PI}}^{\mathrm{smooth}} = \Omega(1)$ with a small modification to the analysis of the Lyapunov function in \citep{gong2024poincare}.

\begin{proposition}\label{Thm: lyapunov method}\citep[Theorem 3.8]{AOP}
		Consider the Langevin dynamics 
		\begin{equation*}
			dX(t) = -\nabla V(X(t)) dt + \sqrt{2\epsilon}dW(t).
		\end{equation*}
        Define the associated infinitesimal generator $\mathcal{L}$ as
        \begin{equation} \label{eqn_langevin_dynamics_generator}
            \mathcal{L} := -\nabla V \cdot \nabla + \epsilon\ \Delta
        \end{equation}
        A function $\mathcal{W}:\mathbb{R}^d \rightarrow[1, \infty)$ is a \emph{Lyapunov function} for $\mathcal{L}$ if there exists $U\subseteq\mathbb{R}^d$, $b>0$, $\sigma > 0$, such that
        \begin{equation} \label{eqn:Lyapunov}
            \forall x\in\mathbb{R}^d,\ \epsilon^{-1} \mathcal{L} \mathcal{W}(x) \leq -\sigma \mathcal{W}(x) + b 1_{U}(x).
        \end{equation}
        Given the existence of such a Lyapunov function $\mathcal{W}$, if one further has that the truncated Gibbs measure $\mu_{\epsilon,U}$ satisfies PI with constant $\mathrm{PI}_{\epsilon,U} > 0$, the Gibbs measure $\mu_\epsilon$ satisfies PI with constant
        \begin{equation} \label{eqn_estimate_poincare_constant_with_subdomain}
            \rho_\epsilon \geq \frac{\sigma}{b + \rho_{\epsilon,U}}\rho_{\epsilon,U}.
        \end{equation}
        \end{proposition}
		In the context of this section, $\epsilon = \sigma^{2-\alpha}$ and $V = V_c$.
		In \citep{gong2024poincare}, the Lyapunov function is chosen to be $\mathcal{W} = \exp(\frac{V}{2\epsilon})$ and \cref{eqn:Lyapunov} can be simplified to
		\begin{equation}\label{eqn_requirement_Lyapunov}
		\frac{\mathcal{L} \mathcal{W}}{\epsilon \mathcal{W}} = \frac{\Delta {V}}{2\epsilon} - \frac{|\nabla V|^2}{4\epsilon^2} {\leq} - \sigma + b1_U.
		\end{equation}
		To establish the above inequality, \citet{gong2024poincare} partition the whole domain $\mathbb{R}^d$ into multiple disjoint parts: (1) $U$, (2) a neighborhood of the global minimum but outside of $U$, (3) neighborhoods of local maximum, (4) beyond a compact set that contains all critical points, and (5) the rest. We discuss our treatment of each subdomain.
		\begin{itemize}
			\item On (1), we follow the choice of $U$ in \citep{gong2024poincare} so the local Poincar\'e inequality there directly holds.
			
			\item On (2), i.e. in the neighborhood of the global minimum (note that under the assumptions of \citep{gong2024poincare}, all local minima are global minima), but outside of the neighborhood $U$, we follow the argument as \citep{gong2024poincare}.
			\item On (4), Beyond a compact set that contains all the local minima and maximum, we can verify that $V_c$ above fulfilles the requirements of $V$ in \citep{gong2024poincare} and hence the argument directly carries over.
			\item On (3), i.e. in a neighborhood of the local maximum, since the Laplacian is already negative, one can directly obtain \cref{eqn_requirement_Lyapunov}. Note that we will pick this neighborhood to be the ball centered at $x=0$ with radius $0.1$ for $V_c$, denoted by $\mathbb{B}(0, 0.1)$.
			\item On (5), i.e. within the said compact set, but outside of the neighborhoods of the global minimum and local maximum, \citep{gong2024poincare} requires the Laplacian to be bounded. We note that the analysis in \citep{gong2024poincare} is a bit loose and they require the boundedness to hold on the whole compact set. However, there is no need to assume the boundedness of the Laplacian on $\mathbb{B}(0, 0.1)$ as \cref{eqn_requirement_Lyapunov} is already established in (3).
		\end{itemize}
		Based on the above discussion, we notice that the global bound on the Laplacian of $V_c$ is only required within a compact set, but outside of $\mathbb{B}(0, 0.1)$, which is hence a constant independent of $\epsilon$. We hence obtain the $\Omega(1)$ bound on the Poincaré constant.

\end{document}